\def\eqref#1{equation~\ref{#1}}
\def\1{\bm{1}}
\def\vz{{\bm{z}}}
\def\mA{{\bm{A}}}
\def\mH{{\bm{H}}}
\DeclareMathAlphabet{\mathsfit}{\encodingdefault}{\sfdefault}{m}{sl}
\SetMathAlphabet{\mathsfit}{bold}{\encodingdefault}{\sfdefault}{bx}{n}
\newcommand{\tens}[1]{\bm{\mathsfit{#1}}}
\def\tA{{\tens{A}}}
\def\tL{{\tens{L}}}
\def\tX{{\tens{X}}}
\def\gG{{\mathcal{G}}}
\def\gH{{\mathcal{H}}}
\def\gS{{\mathcal{S}}}
\def\sR{{\mathbb{R}}}
\newcommand{\R}{\mathbb{R}}
\newcommand{\pan}[1]{{\color{blue}P: #1}}
\newcommand\blfootnote[1]{%
  \begingroup
  \renewcommand{\@makefntext}[1]{\noindent\makebox[1.8em][r]#1}
  \renewcommand\thefootnote{}\footnote{#1}%
  \addtocounter{footnote}{-1}%
  \endgroup
}
\begin{document}

\title{Improving Graph Neural Networks on Multi-node Tasks with\\ the Labeling Trick}
\author{\name Xiyuan Wang \email wangxiyuan@pku.edu.cn \\
       \addr Institute for Artificial Intelligence\\
       Peking University\\
       Beijing, China
       \AND
       \name Pan Li \email panli@gatech.edu \\
       \addr School of Electrical and Computer Engineering\\
       Georgia Institute of Technology\\
       Atlanta, USA
       \AND
       \name Muhan Zhang$^*$
 \email muhan@pku.edu.cn \\
       \addr Institute for Artificial Intelligence\\
       Peking University\\
       Beijing, China
       }
\editor{Samy Bengio}
\maketitle

\begin{abstract}%
In this paper, we study using graph neural networks (GNNs) for \textit{multi-node representation learning}, where a representation for a set of more than one node (such as a link) is to be learned. Existing GNNs are mainly designed to learn single-node representations. When used for multi-node representation learning, a common practice is to directly aggregate the single-node representations obtained by a GNN. In this paper, we show a fundamental limitation of such an approach, namely the inability to capture the dependence among multiple nodes in the node set. A straightforward solution is to distinguish target nodes from others. Formalizing this idea, we propose \text{labeling trick}, which first labels nodes in the graph according to their relationships with the target node set before applying a GNN and then aggregates node representations obtained in the labeled graph for multi-node representations. Besides node sets in graphs, we also extend labeling tricks to posets, subsets and hypergraphs. Experiments verify that the labeling trick technique can boost GNNs on various tasks, including undirected link prediction, directed link prediction, hyperedge prediction, and subgraph prediction. Our work explains the superior performance of previous node-labeling-based methods and establishes a theoretical foundation for using GNNs for multi-node representation learning.
\end{abstract}

\begin{keywords}
  graph neural networks, multi-node representation, subgraph, link prediction
\end{keywords}

\section{Introduction}
\blfootnote{* correspondence to Muhan Zhang}Graph neural networks (GNNs)~\citep{scarselli2009graph,bruna2013spectral,duvenaud2015convolutional,li2015gated,kipf2016semi,defferrard2016convolutional,dai2016discriminative,velivckovic2017graph,zhang2018end,ying2018hierarchical} have achieved great successes in recent years. While GNNs have been well studied for single-node tasks (such as node classification) and whole-graph tasks (such as graph classification), using GNNs on tasks that involve multi-nodes is less studied and less understood. Among such \textit{multi-node representation learning} problems, link prediction (predicting the link existence/class/value between a set of two nodes) is perhaps the most important one due to its wide applications in practice, such as friend recommendation in social networks~\citep{adamic2003friends}, movie recommendation in Netflix~\citep{bennett2007netflix}, protein interaction prediction~\citep{qi2006evaluation}, drug response prediction~\citep{stanfield2017drug}, and knowledge graph completion~\citep{nickel2015review}. Besides link prediction, other multi-node tasks, like subgraph classification and hyperedge prediction, are relatively new but have found applications in gene set analysis~\citep{geneset}, user profiling~\citep{subgnn}, drug interaction prediction~\citep{familyset}, temporal network modeling~\citep{liu2022neural}, group recommendation~\cite{amer2009group}, etc. In this paper, we study the ability of GNNs to learn multi-node representations. As the link task is the simplest multi-node case, we mainly use link prediction in this paper to visualize and illustrate our method and theory. However, our theory and method apply generally to all multi-node representation learning problems such as subgraph~\citep{subgnn}, hyperedge~\citep{zhang2018beyond} and network motif~\citep{liu2022neural} prediction tasks.

Starting from the link prediction task, we illustrate the deficiency of existing GNN models for multi-node representation learning which motivates our labeling trick. There are two main classes of GNN-based link prediction methods: 
Graph AutoEncoder (GAE)~\citep{kipf2016variational} and SEAL~\citep{SEAL,li2020distance}.
\textbf{GAE} and its variational version VGAE~\citep{kipf2016variational} first apply a GNN to the entire graph to compute a representation for each node. The representations of the two end nodes of the link are then aggregated to predict the target link. On the contrary, SEAL assigns node labels according to their distances to the two end nodes before applying the GNN on the graph. SEAL often shows much better practical performance than GAE. The key lies in SEAL's node labeling step.

\begin{figure}[t]
\centering
\begin{subfigure}{0.4\textwidth}
\includegraphics[width=\textwidth]{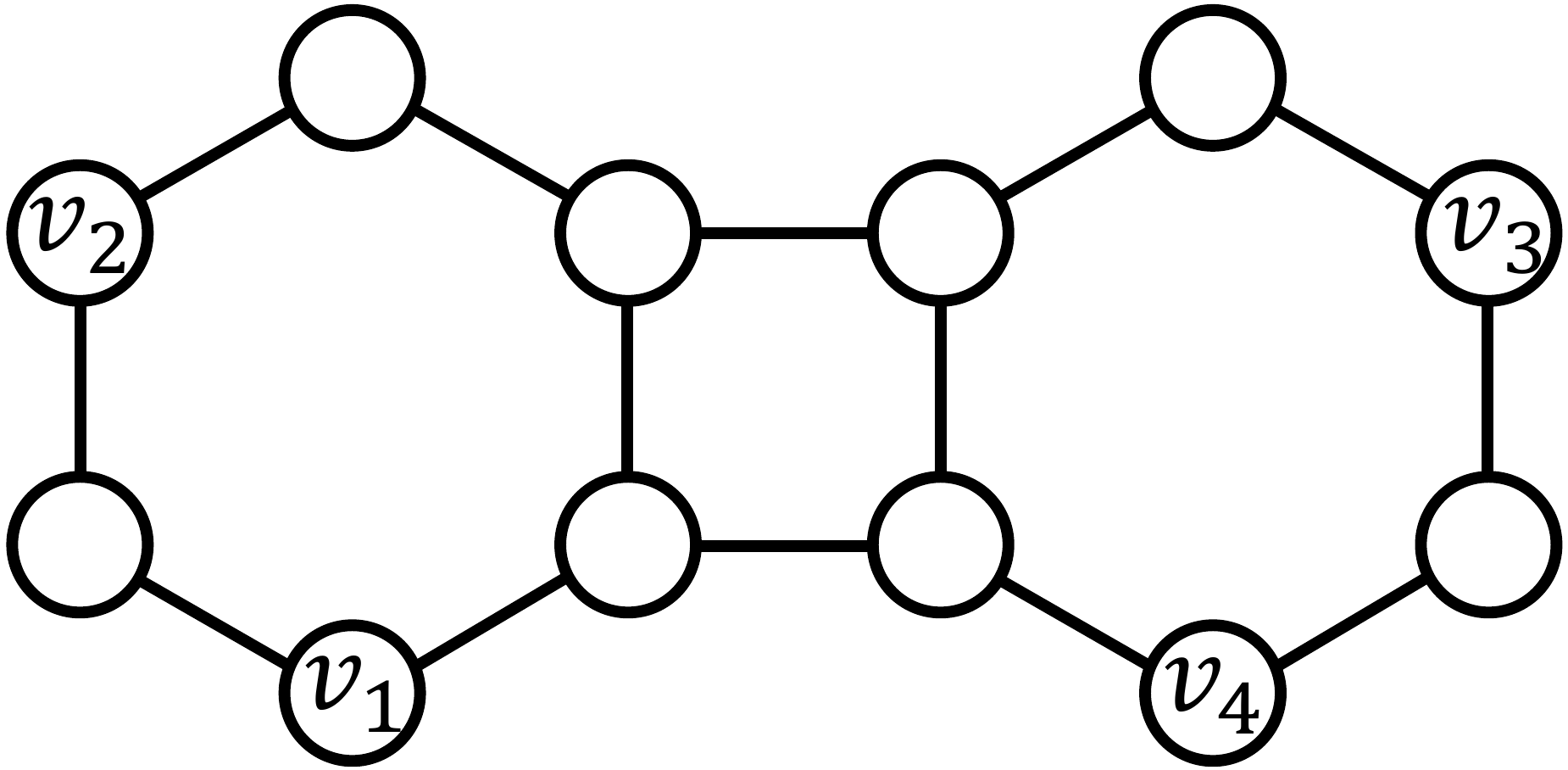}
\caption{}
\label{node_iso}
\end{subfigure}
\hspace{5pt}
\begin{subfigure}{0.22\textwidth}
\includegraphics[width=\textwidth]{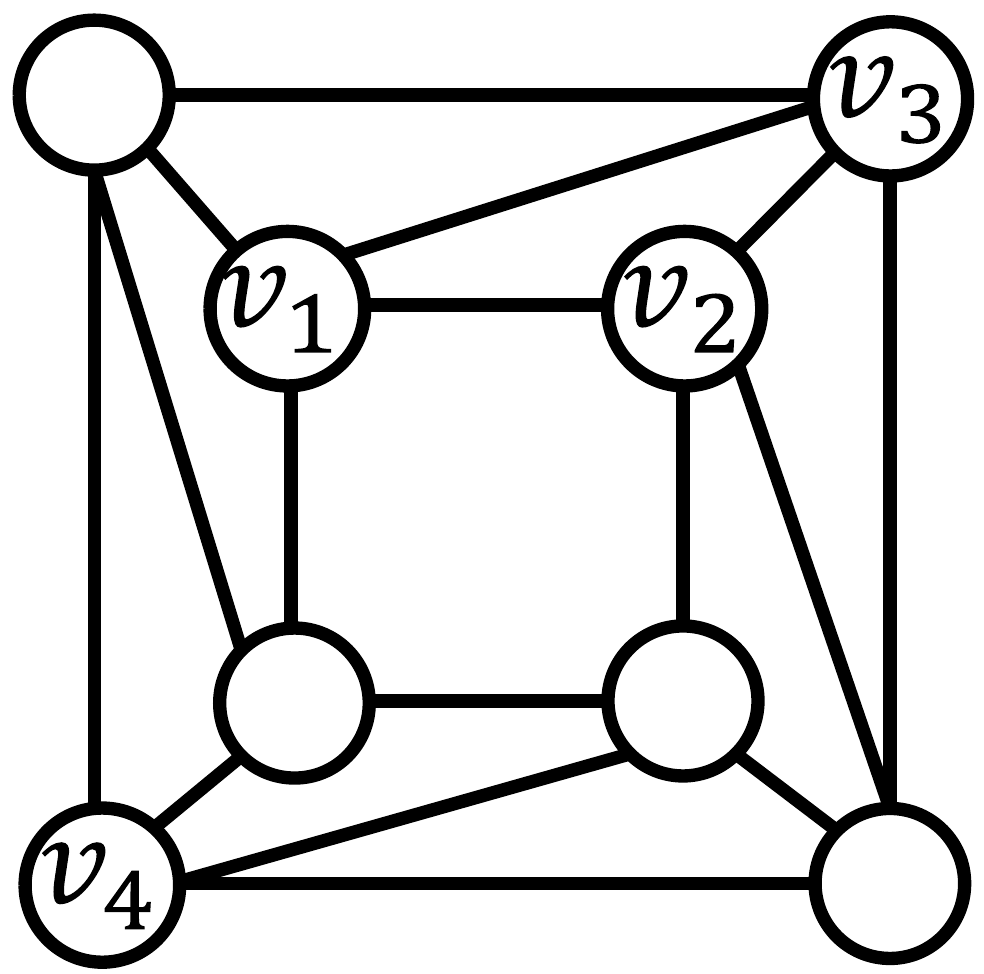}
\caption{}
\label{fig:subgexample}
\end{subfigure}
\caption{
(a) In this graph, nodes $v_2$ and $v_3$ are in the same orbit; links $(v_1,v_2)$ and $(v_4,v_3)$ are isomorphic; link $(v_1,v_2)$ and link $(v_1,v_3)$ are \textbf{not} isomorphic. However, if we aggregate two node representations learned by a GNN as the link representation, we will give $(v_1,v_2)$ and $(v_1,v_3)$ the same prediction. (b) In this graph, nodes $v_3$ and $v_4$ are isomorphic. Aggregating the node embeddings within the subgraph, GNNs will produce equal embeddings for subgraphs $(v_1,v_2,v_3)$ and $(v_1,v_2,v_4)$, while the two subgraphs are not isomorphic. This problem was first observed by~\cite{you2019position}, which was interpret as the failure of GNNs to capture node positions, and later became more formalized in~\citep{Srinivasan2020On}.}
\vspace{-15pt}
\end{figure}
We first give a simple example to show when GAE fails. In Figure~\ref{node_iso}, $v_2$ and $v_3$ have symmetric positions in the graph---from their respective views, they have the same $h$-hop neighborhood for any $h$. Thus, without node features, GAE will learn the same representation for $v_2$ and $v_3$. Therefore, when predicting which one of $v_2$ and $v_3$ is more likely to form a link with $v_1$, GAE will aggregate the representations of $v_1$ and $v_2$ as the link representation of $(v_1,v_2)$, and aggregate the representations of $v_1$ and $v_3$ to represent $(v_1,v_3)$, thus giving $(v_1,v_2)$ and $(v_1,v_3)$ the same representation and prediction. The failure to distinguish links $(v_1,v_2)$ and $(v_1,v_3)$ that have different structural roles in the graph reflects one key limitation of GAE-type methods: by computing $v_1$ and $v_2$'s representations independently of each other, GAE cannot capture the dependence between two end nodes of a link. For example, $(v_1,v_2)$ has a much smaller shortest path distance than that of $(v_1,v_3)$; and $(v_1,v_2)$ has both nodes in the same hexagon, while $(v_1,v_3)$ does not. We can also consider this case from another perspective. Common neighbor (CN)~\citep{liben2007link}, one elementary heuristic feature for link prediction, counts the number of common neighbors between two nodes to measure their likelihood of forming a link. It is the foundation of many other successful heuristics such as Adamic-Adar~\citep{adamic2003friends} and Resource Allocation~\citep{zhou2009predicting}, which are also based on neighborhood overlap. However, GAE cannot capture such neighborhood-overlap-based features. As shown in Figure~\ref{node_iso}, there is $1$ common neighbor between $(v_1,v_2)$ and $0$ between $(v_1,v_3)$, but GAE always gives $(v_1,v_2)$ and $(v_1,v_3)$ the same representation. The failure to learn common neighbor demonstrates GAE's severe limitation for link prediction. The root cause still lies in that GAE computes node representations independently of each other, and when computing the representation of one end node, it is unaware of the other end node.

In fact, GAE represents a common practice of using GNNs to learn multi-node representations. That is, obtaining individual node representations through a GNN and then aggregating the representations of those target nodes as the multi-node representation. Similar failures caused by independence of node representation learning also happen in general multi-node representation learning problems. In the subgraph representation learning task, which is to learn representations for subgraphs inside a large graph~\citep{subgnn}, representations aggregated from independently computed node representations will fail to differentiate nodes inside and outside the subgraph. Figure~\ref{fig:subgexample} (from ~\citet{GLASS}) shows an example. Directly aggregating node embeddings produced by a GNN will lead to the same representation for subgraphs $(v_1,v_2,v_3)$ and $(v_1,v_2,v_4)$. However, the former subgraph forms a triangle while the latter one does not.

This paper solves the above type of failures from a \textit{structural representation learning} point of view. We adopt and generalize the notion \textit{most expressive structural representation}~\citep{Srinivasan2020On}, which gives multi-node substructure the same representation if and only if they are \textit{isomorphic} (a.k.a. symmetric, on the same orbit) in the graph. For example, link $(v_1, v_2)$ and link $(v_4, v_3)$ in Figure~\ref{node_iso} are isomorphic, and a most expressive structural representation should give them the same representation. On the other hand, a most expressive structural representation will discriminate all non-isomorphic links (such as $(v_1, v_2)$ and $(v_1, v_3)$). According to our discussion above, GAE-type methods that directly aggregate node representations cannot learn a most expressive structural representation. Then, how to learn a most expressive structural representation of node sets?

To answer this question, we revisit the other GNN-based link prediction framework, SEAL, and analyze how node labeling helps a GNN learn better node set representations. We find that two properties of the node labeling are crucial for its effectiveness: 1) target-node distinguishing, which ensures that target nodes receive labels that differentiate them from other nodes in the graph and 2) permutation equivariance. With these two properties, we define \textit{set labeling trick}, which considers each multi-node substructure as a node set and unifies previous node labeling methods into a single and most general form. Theoretically, we prove that with set labeling trick, a sufficiently expressive GNN can learn most expressive structural representations of node sets (Theorem~\ref{thm:labeltrick}), which reassures GNN’s node set prediction ability. It also closes the gap between the nature of GNNs to learn node representations and the need of multi-node representation learning in node-set-based inference tasks. 

Set labeling trick is for multi-node structure of a node set and can be used on a wide range of tasks including link prediction and subgraph classification. However, to describe and unify even more tasks and methods, we propose three extensions of set labeling trick. One is \textit{poset labeling trick}. In some tasks, target nodes may have intrinsic order relations in real-world problems. For example, in citation graphs, each link is from the citing article to the cited one. In such cases, describing multi-node substructures with node sets leads to loss of order information. This motivates us to add order information to the label and use poset instead to describe substructures. Another extension is \textit{subset labeling trick}. It unifies labeling methods besides SEAL~\citep{SEAL}, like ID-GNN~\citep{you2021identity} and NBFNet~\citep{NBFNet}. These works label only a subset of nodes each time. We formalize these methods and analyze the expressivity: when using GNNs without strong expressivity, subset labeling trick exhibits higher expressivity than labeling tricks in some cases. Last but not least, by converting hypergraph to bipartite graph, we straightforwardly extend labeling trick to hypergraph.



\section{Preliminaries}
In this section, we introduce some important concepts that will be used in the analysis of the paper, including \textit{permutation}, \textit{poset-graph isomorphism} and \textit{most expressive structural representation}.

We consider a graph $\gG=(V,E,\tA)$, where $V=\{1,2,\ldots,n\}$ is the set of $n$ vertices, $E \subseteq V\times V$ is the set of edges, and $\tA \in \R^{n\times n \times k}$ is a 3-dimensional tensor containing node and edge features. In this paper, we let all graphs have a node set numbered from 1 to the total number of nodes in the graph. The diagonal components $\tA_{i,i,:}$ denote features of node $i$, and the off-diagonal components $\tA_{i,j,:}$ denote features of edge $(i,j)$. The node/edge types can also be expressed in $\tA$ using integers or one-hot encoding vectors for heterogeneous graphs. We further use $\mA \in \{0,1\}^{n\times n}$ to denote the adjacency matrix of $\gG$ with $\mA_{i,j}=1$ iff $(i,j)\in E$, where it is possible $\mA_{i,j}\neq \mA_{j,i}$. We let $\mA$ be the first slice of $\tA$, i.e., $\mA = \tA_{:,:,1}$. Since $\tA$ contains the complete information of a graph, we also directly denote the graph by $\tA$.

\subsection{Permutation}
The same graph can index nodes in different orders, and these different indices can be connected with permutation.
\begin{definition}\label{def:permutation} 
A \textbf{permutation} $\pi$ is a bijective mapping from $\{1,2,\ldots,n\}$ to $\{1,2,\ldots,n\}$. All $n!$ possible $\pi$'s constitute the permutation group $\Pi_n$. 
\end{definition}

Depending on the context, permutation $\pi$ can mean assigning a new index $\pi(i)$ to node $i\in V$, or mapping node $i$ to node $\pi(i)$ of another graph. Slightly extending the notation, we let the permutation of a set/sequence denote permuting each element in the set/sequence. For example, permutation $\pi$ maps a set of nodes $S\subseteq V$ to $\pi(S)=\{\pi(i)|i\in S\}$ and maps a set of node pairs $S'\subseteq V\times V$ to $\pi(S')=\{\pi((i,j))|(i,j)\in S'\}=\{(\pi(i),\pi(j))|(i,j)\in S'\}$. The permutation of a graph's tensor $\tA$, denoted as $\pi(\tA)$, can also be defined: $\pi(\tA)_{\pi(i),\pi(j)}=\tA_{i, j}$, where original $i$-th node and $j$-th node will have new index $\pi(i),\pi(j)$ while keeping the features of the pair $\tA_{i, j}$. 

Permutation is closely related to \textit{graph isomorphism}, whether two graphs describe the same structure. Intuitively, as nodes in graphs have no order, no matter what permutation is applied to a graph, the transformed graph should be isomorphic to the original graph. Similarly, if one graph can be transformed into another under some permutation, the two graphs should also be isomorphic. Formally speaking, 
\begin{definition}
Two graphs $\tA\in \sR^{n\times n\times d}, \tA'\in \sR^{n'\times n'\times d'}$ are \textbf{isomorphic} iff there exists $\pi\in \Pi_n$, $\pi(\tA)=\tA'$. 
\end{definition}
In whole graph classification tasks, models should give isomorphic graphs the same prediction as they describe the same structure, and differentiate non-isomorphic graphs.

\subsection{Poset-Graph Isomorphism}

To describe a substructure defined by a subset of nodes with internal relation, like a directed edge, we introduce poset. A poset is a set with a partial order. Partial order is a reflexive, antisymmetric, and transitive homogeneous relation on the set~\citep{Posetdef}.
\begin{definition}\label{def:poset}
A \textbf{poset} $S$ is a tuple $(U, \le_S)$, where $U$ is a set, and $\le_S\subseteq U\times U$ is a relation on $U$. Let $u\le_S v$ denote $(u, v)\in \le_S$. $\le_S$ fulfills the following conditions.
\begin{enumerate}
    \item Reflexivity. $\forall v\in U, v\le_S v$.
    \item Antisymmetry. $\forall u, v\in U$, if $u\le_S v$ and $v\le_S u$, then $u=v$.
    \item Transitivity. $\forall u, v, w \in U$, if $u\le_S v$ and $v\le_S w$, then $u\le_S w$.
\end{enumerate}
\end{definition}
The permutation operation on partial order relation and poset is defined as follows.
\begin{align*}
    \pi(\le_S) = \pi(\{(u, v) ~|~ (u, v) \in \le_S\}) &= \{(\pi(u), \pi(v)) ~|~ (u, v) \in \le_S\},\\
    \pi(S) = \pi((U, \le_S)) &= (\pi(U), \pi(\le_S)).
\end{align*}

To describe when two posets derive the same substructure, we define \textit{poset-graph isomorphism}, which generalizes graph isomorphism to arbitrary node posets in a graph.
\begin{definition}\textbf{(Poset-graph isomorphism)}\label{def:poset_iso}
    Given two graphs $\gG=(V,E,\tA)$, $\gG'=(V',E',\tA')$, and two node posets $S=(U, \le_S),U\subseteq V$, $S'=(U',\le_{S'}),U'\subseteq V'$, we say substructures $(S,\tA)$ and $(S', \tA')$ are isomorphic (denoted by $(S,\tA) \simeq (S',\tA')$) iff ~$\exists \pi \in \Pi_n, S = \pi(S')$ and $\tA = \pi(\tA')$.
\end{definition}

A set is a particular case of poset, where the partial order only contains reflexive relations $u\le_S u,u\in U$. It can describe substructures without order, like undirected edges and subgraphs. Abusing the notation of poset, we sometimes also use $S$ to denote a set and omit the trivial partial order relation. Then, \textit{set-graph isomorphism} is defined as follows.

\begin{definition}\label{def:set_iso}
\textbf{(Set-graph isomorphism)} Given two graphs $\gG=(V,E,\tA)$, $\gG'=(V',E',\tA')$, and two node sets $S\subseteq V$, $S'\subseteq V'$, we say substructures $(S,\tA)$ and $(S',\tA')$ are isomorphic (denoted by $(S,\tA) \simeq (S',\tA')$) iff ~$\exists \pi \in \Pi_n, S = \pi(S')$ and $\tA = \pi(\tA')$.
\end{definition}

Note that both set- and poset-graph isomorphism are \textbf{more strict} than graph isomorphism. They not only need a permutation which maps one graph to the other but also require the permutation to map a specific node poset $S$ to $S'$. 

In practice, when the target node poset does not contain all nodes in the graph, we are often more concerned with the case of $\tA=\tA'$, where isomorphic node posets are defined \textbf{in the same graph}. For example, when $S = \{i\}, S' = \{j\}$ and $(i,\tA) \simeq (j,\tA)$, we say nodes $i$ and $j$ are isomorphic in graph $\tA$ (or they have symmetric positions/same structural role in graph $\tA$). An example is $v_2$ and $v_3$ in Figure~\ref{node_iso}. Similarly, edge and subgraph isomorphism can also be defined as the isomorphism of their node posets.

\subsection{Structural Representations}
Graph models should produce the same prediction for isomorphic substructures. We define permutation invariance and equivariance to formalize this property. A function $f$ defined over the space of $(S,\tA)$ is \textit{permutation invariant} (or \textit{invariant} for abbreviation) if $\forall \pi \in \Pi_n$, $f(S,\tA) = f(\pi(S),\pi(\tA))$. Similarly, $f$ is \textit{permutation equivariant} if $\forall \pi \in \Pi_n$, $\pi(f(S,\tA)) = f(\pi(S),\pi(\tA))$, where for example $f(S, \tA)$ can be a tensor $L \in \sR^{n\times n\times d}$, $\pi(f(S, \tA))_{\pi(i)\pi(j)}=f(S, \tA)_{ij}$. Permutation invariance/equivariance ensures that representations learned by a GNN are invariant to node indexing, a fundamental design principle of GNNs. 

Now we define the \textit{most expressive structural representation} of a substructure $(S,\tA)$, following \citep{Srinivasan2020On,li2020distance}. It assigns a unique representation to each equivalence class of isomorphic substructures.

\begin{definition}\label{def:structuralrepresentation}
Given an invariant function $\Gamma(\cdot)$ mapping node subsets in graphs to a latent space, $\Gamma(\cdot)$ is a \textbf{most expressive structural representation}, if ~$\forall S,\tA, S',\tA'$, $ \Gamma(S,\tA) = \Gamma(S',\tA') \Leftrightarrow (S,\tA) \simeq (S',\tA')$.
\end{definition}

For simplicity, we will directly use \textit{structural representation} to denote most expressive structural representation in the rest of the paper. We will omit $\tA$ if it is clear from context. For a graph $\tA$, we call $\Gamma(\tA)=\Gamma(\emptyset, \tA)$ a \textit{structural graph representation}, $\Gamma(i,\tA)$ a \textit{structural node representation} for node $i$, and call $\Gamma(\{i,j\},\tA)$ a \textit{structural link representation} for link $(i,j)$. For a general node poset $S$, we call $\Gamma(S,\tA)$ a \textit{structural multi-node representation} for $S$.

Definition~\ref{def:structuralrepresentation} requires that the structural representations of two substructures are the same if and only if the two substructures are isomorphic. That is, isomorphic substructures always have the \textbf{same} structural representation, while non-isomorphic substructures always have \textbf{different} structural representations. Due to the permutation invariance requirement, models should not distinguish isomorphic substructures. This implies that structural representations can discriminate all substructures that any invariant model can differentiate, and structural representations reach the highest expressivity. 



\section{The Limitation of Directly Aggregating Node Representations}
In this section, taking GAE for link prediction as an example, we show the critical limitation of directly aggregating node representations as a multi-node representation.

\setlength{\abovedisplayskip}{5pt}
\setlength{\belowdisplayskip}{5pt}

\subsection{GAE for Multi-Node Representation}

GAE~\citep{kipf2016variational} is a kind of link prediction model with GNN. Given a graph $\tA$, GAE first uses a GNN to compute a node representation $\vz_i$ for each node $i$, and then use the inner product of $\vz_i$ and $\vz_j$ to predict link $\{i,j\}$:
\begin{align*}
    \hat{\mA}_{i,j} = \text{sigmoid}(\vz_i^\top\vz_j), ~\text{where}~\vz_i \!=\! \text{GCN}(i,\tA), \vz_j \!=\! \text{GCN}(j,\tA).\nonumber
\end{align*}
Here $\hat{\mA}_{i,j}$ is the predicted score for link $\{i,j\}$. The model is trained to maximize the likelihood of reconstructing the true adjacency matrix. The original GAE uses a two-layer GCN~\citep{kipf2016semi}. In principle, we can replace GCN with any GNN, use any aggregation function over the set of target node embeddings including mean, sum, and max other than inner product, and substitute $\text{sigmoid}$ with an MLP. Then, GAE can be used for multi-node tasks. It aggregates target node embeddings produced by the GNN:
\begin{equation*}
    \vz_{S}=\text{MLP}(\text{AGG}(\{\vz_i\mid i\in S\})) ~\text{where}~\vz_i \!=\! \text{GNN}(i,\tA),
\end{equation*}
where $\text{AGG}$ is an aggregation function, which takes a multiset instead of set by default. We will use GAE to denote this general class of GNN-based multi-node representation learning methods in the following. Two natural questions are: 1) Is the node representation learned by the GNN a \textit{structural node representation}? 2) Is the multi-node representation aggregated from a set of node representations a \textit{structural representation for the node set}? We answer them respectively in the following.

\subsection{GNN and Structural Node Representation}
Practical GNNs~\citep{gilmer2017neural} usually simulate the 1-dimensional Weisfeiler-Lehman (1-WL) test~\citep{weisfeiler1968reduction} to iteratively update each node's representation by aggregating its neighbors' representations. We use \textit{1-WL-GNN} to denote a GNN with 1-WL discriminating power, such as GIN~\citep{xu2018powerful}.

A 1-WL-GNN ensures that isomorphic nodes always have the same representation. However, the opposite direction is not guaranteed. For example, a 1-WL-GNN gives the same representation to all nodes in an $r$-regular graph, in which non-isomorphic nodes exist. Despite this, 1-WL is known to discriminate almost all non-isomorphic nodes as the number of nodes grows to infinity~\citep{babai1979canonical}, which indicates that a 1-WL-GNN can give different representations to almost all non-isomorphic nodes in large real-world graphs. 

To study GNN's maximum expressivity, we define a \textit{node-most-expressive (NME) GNN}, which gives different representations to \textbf{all} non-isomorphic nodes.
\begin{definition}
A \text{GNN} is \textbf{node-most-expressive (NME)} if there exists a parameterization of the GNN that ~$\forall i,\tA$,$j,\tA'$, $~\text{GNN}(i,\tA) = \text{GNN}(j,\tA') \Leftrightarrow (i,\tA) \simeq (j,\tA')$.
\end{definition}
NME GNN learns \textit{structural node representations}. We define such a GNN because our primary focus is on multi-node representation. By ignoring the limitations of single-node expressivity, NME GNN simplifies our analysis. Although a polynomial-time implementation is not known for NME GNNs, many practical software tools can discriminate between all non-isomorphic nodes efficiently~\citep{mckay2014practical}, providing a promising direction. 

\subsection{GAE \textbf{Cannot} Learn Structural Multi-Node Representations}\label{gae_limitation}
Suppose GAE is equipped with an NME GNN producing structural node representations. 
Then the question becomes: does the aggregation of structural node representations of the target nodes result in a structural representation of the target node set? The answer is no. We have already illustrated this problem in the introduction: In Figure~\ref{node_iso}, we have two isomorphic nodes $v_2$ and $v_3$, and thus $v_2$ and $v_3$ will have the same structural node representation. By aggregating structural node representations, GAE will give $(v_1,v_2)$ and $(v_1,v_3)$ the same link representation. However, $(v_1,v_2)$ and $(v_1,v_3)$ are not isomorphic in the graph. Figure~\ref{fig:subgexample} gives another example on the multi-node case involving more than two nodes. Previous works have similar examples~\citep{Srinivasan2020On,Zhang2020Inductive}. All these results indicate that:

\begin{proposition}
(\citet{Srinivasan2020On}) GAE \textbf{cannot} learn structural multi-node representations no matter how expressive node representations a GNN can learn.
\end{proposition}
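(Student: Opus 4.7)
The plan is to prove the proposition by exhibiting a concrete counterexample where any GAE-style aggregation scheme, no matter how expressive the underlying GNN, is forced to produce identical representations for two non-isomorphic node sets. I would take the graph in Figure~\ref{node_iso} together with the target node sets $S_1 = \{v_1, v_2\}$ and $S_2 = \{v_1, v_3\}$ as the witness. The pipeline has three steps: (i) show that $v_2$ and $v_3$ receive the same structural node representation; (ii) conclude that GAE outputs the same representation for $S_1$ and $S_2$; (iii) verify that $(S_1, \tA) \not\simeq (S_2, \tA)$, so any structural multi-node representation must separate them.

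For step (i), I would write down an explicit automorphism $\pi \in \Pi_n$ of the pictured graph satisfying $\pi(\tA) = \tA$ and $\pi(v_2) = v_3$. Such a $\pi$ exists because $v_2$ and $v_3$ lie in the same orbit, as stated in the caption of Figure~\ref{node_iso}. By Definition~\ref{def:set_iso} this gives $(v_2, \tA) \simeq (v_3, \tA)$, and the defining property of an NME GNN then yields $\text{GNN}(v_2, \tA) = \text{GNN}(v_3, \tA)$. For step (ii), observe that the multisets $\{\text{GNN}(v_1, \tA), \text{GNN}(v_2, \tA)\}$ and $\{\text{GNN}(v_1, \tA), \text{GNN}(v_3, \tA)\}$ are then equal, so any permutation-invariant $\text{AGG}$ followed by any $\text{MLP}$ must produce the same vector for $S_1$ and $S_2$. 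This does not depend on the choice of aggregation function (mean, sum, max, inner product, etc.) nor on the head on top.

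For step (iii), I would separate $S_1$ and $S_2$ using an isomorphism invariant. The cleanest choice is the number of common neighbors of the two endpoints, which equals $1$ for $S_1$ and $0$ for $S_2$; alternatively one can use the shortest-path distance inside the pair, or the indicator that both endpoints lie in a common hexagonal face. Any such invariant rules out the existence of a permutation $\sigma$ with $\sigma(\tA) = \tA$ and $\sigma(S_1) = S_2$, hence $(S_1, \tA) \not\simeq (S_2, \tA)$. By Definition~\ref{def:structuralrepresentation} a structural multi-node representation $\Gamma$ must then satisfy $\Gamma(S_1, \tA) \ne \Gamma(S_2, \tA)$, which contradicts step (ii). Therefore GAE cannot realize $\Gamma$, regardless of how expressive its underlying GNN is.

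The main obstacle is a purely bookkeeping one: producing the automorphism $\pi$ in step (i) and proving non-existence of the permutation in step (iii), both on the same concrete graph. Once the invariant for step (iii) is chosen (common-neighbor count is simplest), the argument reduces to a one-line check. No further generality is needed because the counterexample already breaks the biconditional in Definition~\ref{def:structuralrepresentation} for every possible GAE instantiation.
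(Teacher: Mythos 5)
Your proof is correct and follows essentially the same counterexample argument the paper gives in Section~3.3: take the graph of Figure~\ref{node_iso}, use the automorphism swapping $v_2$ and $v_3$ to force equal node embeddings, note that any permutation-invariant aggregation then equates $\{v_1,v_2\}$ and $\{v_1,v_3\}$, and separate the two pairs by an isomorphism invariant such as the common-neighbor count. The only difference is that you spell out step (iii) more formally via an explicit invariant, whereas the paper simply asserts the non-isomorphism with reference to the figure caption.
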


The root cause of this problem is that GNN computes node representations independently without being aware of the other nodes in the target node set $S$. Thus, even though GNN learns the most expressive single-node representations, there is never a guarantee that their aggregation is a structural representation of a node set. In other words, the multi-node representation learning problem is \textbf{not breakable} into multiple \textbf{independent} single-node representation learning problems. We need to consider the \textbf{dependency} between the target nodes when computing their single-node representations. %

\section{Labeling Trick for Set}

Starting from a common case in real-world applications, we first describe the multi-node substructure defined by a node set (instead of a poset) in the graph and define \textit{set labeling trick}. The majority of this part is included in our conference paper~\citep{zhang2021labeling}.  
\subsection{Definition of Set Labeling Trick}\label{sec:labelingtrick}
The set labeling trick is defined as follows.
\begin{definition}\label{def:setlabelingtrick}
\textbf{(Set labeling trick)} For a graph $\tA$ and a set $S$ of nodes in the graph, we stack a labeling tensor $\tL(S,\tA) \in \R^{n\times n\times k}$ in the third dimension of $\tA$ to get a new $\tA^{(S)} \in \R^{n\times n\times (k+d)}$. $\tL$ satisfies: $\forall S,\tA, S',\tA',\pi \in \Pi_n$,
\begin{enumerate}
    \item (\textit{target-nodes-distinguishing}) ~$\tL(S,\tA) = \pi(\tL(S', \tA')) \Rightarrow S = \pi(S')$.
    \item (\textit{permutation equivariance}) ~~$S = \pi(S'), \tA = \pi(\tA') \Rightarrow \tL(S,\tA) = \pi(\tL(S',\tA'))$.
\end{enumerate}
\end{definition}

To explain a bit, labeling trick assigns a label vector to each node/edge in graph $\tA$, which constitutes the labeling tensor $\tL(S,\tA)$. By concatenating $\tA$ and $\tL(S,\tA)$, we get the new labeled graph $\tA^{(S)}$. By definition, we can assign labels to both nodes and edges. However, in this paper, we \textbf{consider node labels only} by default for simplicity, i.e., we let the off-diagonal components $\tL(S,\tA)_{i,j,:},i\neq j$ be all zero.

The labeling tensor $\tL(S,\tA)$ should satisfy two properties in Definition~\ref{def:setlabelingtrick}. Property 1 requires that if a permutation $\pi$ preserving node labels (i.e., $\tL(S,\tA) = \pi(\tL(S',\tA'))$) exists between nodes of $\tA$ and $\tA'$, then the nodes in $S'$ must be mapped to nodes in $S$ by $\pi$ (i.e., $S = \pi(S')$). A sufficient condition for property 1 is to make the target nodes $S$ have \textit{distinct labels} from those of the rest nodes so that $S$ is \textit{distinguishable} from others.
Property 2 requires that when $(S,\tA)$ and $(S',\tA')$ are isomorphic under $\pi$ (i.e., $S = \pi(S'), \tA = \pi(\tA')$), the corresponding nodes $i\in S, j\in S', i=\pi(j)$ must always have the same label (i.e., $\tL(S,\tA) = \pi(\tL(S',\tA'))$). A sufficient condition for property 2 is to make the labeling function \textit{permutation equivariant}, i.e., when the target $(S,\tA)$ changes to $(\pi(S),\pi(\tA))$, the labeling tensor $\tL(\pi(S),\pi(\tA))$ should equivariantly change to $\pi(\tL(S,\tA))$.

\subsection{How Labeling Trick Works}
Obviously, labeling trick puts extra information into the graph, while the details remain unclear. To show some intuition on how labeling trick boosts graph neural networks, we introduce a simplest labeling trick satisfying the two properties in Definition~\ref{def:setlabelingtrick}.

\begin{definition}\label{def:zolabeling}
\textbf{(Zero-one labeling trick)} Given a graph $\tA$ and a set of nodes $S$ to predict, we give it a diagonal labeling matrix $\tL_{zo}(S,\tA) \in \R^{n\times n\times 1}$ such that 
\begin{equation*}
    \tL_{zo}(S,\tA)_{i,i,1}=\begin{cases}
        1&\text{if } i\in S\\
        0&\text{otherwise}
    \end{cases}.
\end{equation*}
\end{definition}

In other words, the zero-one labeling trick assigns $1$ to nodes in $S$ and labels $0$ to all other nodes in the graph. It is a valid labeling trick because nodes in $S$ get \textit{distinct labels} from others, and the labeling function is \textit{permutation equivariant} by always giving nodes in the target node set label $1$. These node labels serve as additional node features fed to a GNN together with the original node features.

Let's return to the example in Figure~\ref{node_iso} to see how the zero-one labeling trick helps GNNs learn better multi-node representations. This time, when we want to predict link $(v_1,v_2)$, we will label $v_1,v_2$ differently from the rest nodes, as shown by the distinct colors in Figure~\ref{link_iso}~left. When computing $v_2$'s representation, GNN is also ``aware'' of the source node $v_1$ with nodes $v_1$ and $v_2$ labeled, rather than treating $v_1$ the same as other nodes. Similarly, when predicting link $(v_1,v_3)$, the model will again label $v_1,v_3$ differently from other nodes as shown in Figure~\ref{link_iso}~right. This way, $v_2$ and $v_3$'s node representations are no longer the same in the two differently labeled graphs (due to the presence of the labeled $v_1$), and the model can predict $(v_1,v_2)$ and $(v_1,v_3)$ differently. The key difference of model with labeling trick from GAE is that the node representations are no longer computed independently, but are \textit{conditioned} on each other in order to capture the dependence between nodes.
\begin{figure}[tp]
\centering
\includegraphics[width=0.8\textwidth]{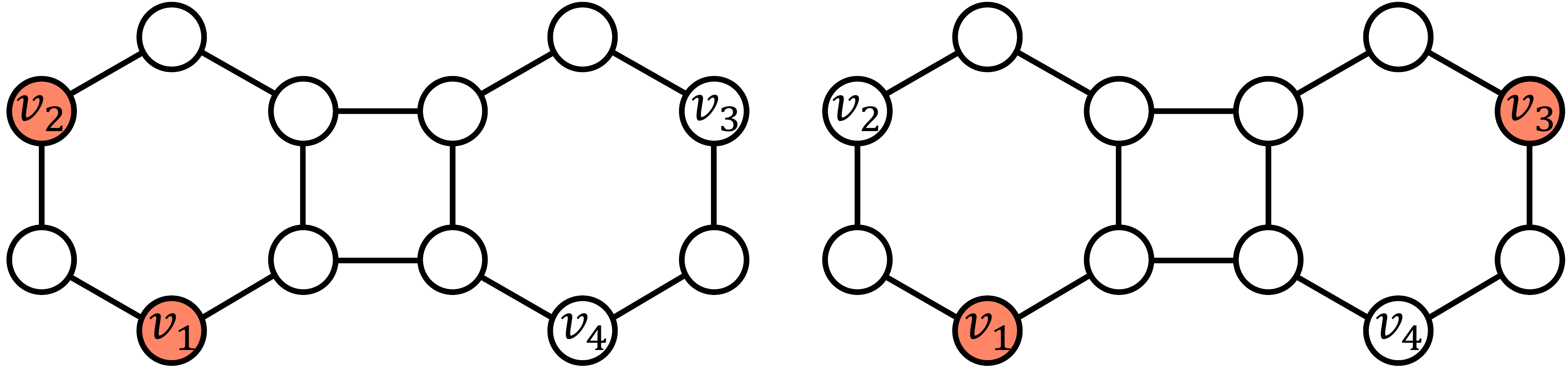}
\caption{\small When predicting $(v_1,v_2)$, we will label these two nodes differently from the rest so that a GNN is aware of the target link when learning $v_1$ and $v_2$'s representations. Similarly, when predicting $(v_1,v_3)$, nodes $v_1$ and $v_3$ will be labeled differently. This way, the representation of $v_2$ in the left graph will be different from that of $v_3$ in the right graph, enabling GNNs to distinguish the non-isomorphic links $(v_1,v_2)$ and $(v_1,v_3)$.}
\label{link_iso}
\end{figure}

\subsection{Expressivity of GNN with Labeling Trick}

We include all proofs in the appendix. 

Labeling trick first bridges the gap between whole-graph representation (the focus of graph level GNNs) and node set representations.

\begin{proposition}\label{prop::gsiso2giso}
   \citep{zhang2021labeling} For any node set $S$ in graph $\tA$ and $S'$ in graph $\tA'$, given a set labeling trick, $(S, \tA)\simeq (S', \tA')\Leftrightarrow \tA^{(S)}\simeq {\tA'}^{(S')}$. 
\end{proposition}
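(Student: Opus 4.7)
The plan is to prove the two directions of the biconditional separately by direct definition-chasing, invoking the two defining properties of set labeling trick (Definition~\ref{def:setlabelingtrick}) exactly once each.

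For the forward direction, I would assume $(S,\tA)\simeq(S',\tA')$, so by Definition~\ref{def:set_iso} there exists $\pi\in\Pi_n$ with $S=\pi(S')$ and $\tA=\pi(\tA')$. The permutation equivariance property (property 2 of the labeling trick) then gives $\tL(S,\tA)=\pi(\tL(S',\tA'))$. Since $\tA^{(S)}$ is obtained by concatenating $\tA$ with $\tL(S,\tA)$ along the feature axis, and permutation acts slicewise on this third dimension, the two matching equalities combine to yield $\tA^{(S)}=\pi({\tA'}^{(S')})$, i.e.\ $\tA^{(S)}\simeq{\tA'}^{(S')}$ as ordinary graphs.

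For the reverse direction, I would assume $\tA^{(S)}\simeq{\tA'}^{(S')}$, so some $\pi\in\Pi_n$ satisfies $\tA^{(S)}=\pi({\tA'}^{(S')})$. Because the stacked tensor splits cleanly into its original-feature slices and its labeling slices, this single equality decomposes into $\tA=\pi(\tA')$ and $\tL(S,\tA)=\pi(\tL(S',\tA'))$. The first equation already supplies half of what set-graph isomorphism requires; applying the target-nodes-distinguishing property (property 1) to the second equation yields $S=\pi(S')$. Together these give $(S,\tA)\simeq(S',\tA')$ by Definition~\ref{def:set_iso}.

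There is no real obstacle; the only subtlety worth stating explicitly is that the concatenation $\tA^{(S)}=[\tA;\tL(S,\tA)]$ in the third dimension is preserved slicewise under permutations acting on the first two axes, so $\pi(\tA^{(S)})=[\pi(\tA);\pi(\tL(S,\tA))]$. This lets the single graph-isomorphism equality of labeled graphs split into two independent equalities, which is precisely what makes the two properties of labeling trick exchangeable with set-graph isomorphism. Everything else is bookkeeping.
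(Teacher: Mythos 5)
Your proposal is correct and is essentially the same argument the paper gives; the paper simply presents it as a single chain of three biconditionals, where your forward and reverse directions occupy the middle step (property 2 for one direction, property 1 for the other) and your slicewise-concatenation observation is exactly the third equivalence. Nothing is missing.
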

The problem of graph-level tasks on a labeled graph ($\tA^{(S)}$ as defined in Definition~\ref{def:setlabelingtrick}) is equivalent to that of multi-node tasks. However, the complexity of these graph-level GNNs are usually larger than GNNs encoding nodes. We further want to connect node set representations with node representations. Now we introduce our main theorem showing that with a valid labeling trick, an NME GNN can \textit{learn structural representations of node sets}.
\begin{theorem}\label{thm:labeltrick}
\citep{zhang2021labeling} Given an \text{NME} GNN and an injective set aggregation function \text{AGG}, for any $S,\tA, S',\tA'$, $\text{GNN}(S,\tA^{(S)}) = \text{GNN}(S',\tA'^{(S')}) \Leftrightarrow (S,\tA) \!\simeq\! (S',\tA') \nonumber$, where $\text{GNN}(S,\tA^{(S)}) := \text{ AGG}(\{\text{GNN}(i,\tA^{(S)}) \mid i\in S\})$.
\end{theorem}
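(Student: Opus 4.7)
I would prove the two directions separately, with the easy direction following almost immediately from the definitions and the harder direction relying on the \emph{target-nodes-distinguishing} property of the labeling trick.

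\textbf{Sufficiency} ($(S,\tA)\simeq (S',\tA') \Rightarrow$ equal GNN outputs). Start with a permutation $\pi$ witnessing the isomorphism, so $S=\pi(S')$ and $\tA=\pi(\tA')$. Apply Property 2 (permutation equivariance) of the labeling trick to get $\tL(S,\tA)=\pi(\tL(S',\tA'))$, and conclude $\tA^{(S)}=\pi({\tA'}^{(S')})$. By Proposition~\ref{prop::gsiso2giso}, the two labeled graphs are isomorphic under $\pi$. Then for every $j\in S'$ we have $\pi(j)\in S$ and $(\pi(j),\tA^{(S)})\simeq (j,{\tA'}^{(S')})$, so by the NME property the GNN produces identical node embeddings. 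Thus the two multisets $\{\!\{\text{GNN}(i,\tA^{(S)}) : i\in S\}\!\}$ and $\{\!\{\text{GNN}(j,{\tA'}^{(S')}) : j\in S'\}\!\}$ coincide, and applying AGG gives equality.

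\textbf{Necessity} (equal GNN outputs $\Rightarrow (S,\tA)\simeq(S',\tA')$). This is the more interesting direction. Using injectivity of AGG, I extract the equality of the two multisets of node embeddings. Picking any $i\in S$, there exists $j\in S'$ such that $\text{GNN}(i,\tA^{(S)})=\text{GNN}(j,{\tA'}^{(S')})$, so by the NME property I obtain a permutation $\pi$ with $\pi(j)=i$ and $\pi({\tA'}^{(S')})=\tA^{(S)}$. Restricting to the original feature slices yields $\pi(\tA')=\tA$, and restricting to the label slices yields $\pi(\tL(S',\tA'))=\tL(S,\tA)$. Now I invoke Property 1 (target-nodes-distinguishing) of the labeling trick on this label equality to conclude $S=\pi(S')$. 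Combining $S=\pi(S')$ and $\tA=\pi(\tA')$ gives $(S,\tA)\simeq (S',\tA')$.

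\textbf{Where I expect difficulty.} The sufficiency direction is essentially bookkeeping once Proposition~\ref{prop::gsiso2giso} is in hand. The delicate step is the necessity direction: one must be careful that the permutation $\pi$ recovered from the NME property of a \emph{single} pair $(i,j)$ simultaneously aligns the \emph{entire} labeled graphs (not just nodes $i$ and $j$), and in particular realigns the labels correctly so that Property 1 can fire and push node-set equality $S=\pi(S')$ out of the label equality. This is really where the two labeling-trick axioms pull their weight: Property 2 gives the forward direction, while Property 1 is the linchpin of the backward direction, converting a graph-level isomorphism of the labeled graphs into a set-graph isomorphism of the unlabeled targets. Once this is understood, the proof is short; I would write it as the two symmetric paragraphs above and explicitly note the role of each labeling-trick axiom in the place where it is used.
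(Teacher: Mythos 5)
Your proof is correct and follows the same route as the paper's proof: sufficiency via permutation equivariance (Property 2) to align the labeled graphs and invoke the GNN's isomorphism-invariance on each target node; necessity via injectivity of AGG, the NME property to recover a whole-graph permutation from a single matching node pair, slice the labeled tensor to get $\tA = \pi(\tA')$ and $\tL(S,\tA) = \pi(\tL(S',\tA'))$, and then Property 1 to force $S=\pi(S')$. The appeal to Proposition~\ref{prop::gsiso2giso} in the sufficiency direction is optional — you already have $\tA^{(S)}=\pi({\tA'}^{(S')})$ directly — but it does no harm.
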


Remember that directly aggregating the structural node representations learned from the original graph $\tA$ does not lead to structural representations of node sets (Section~\ref{gae_limitation}). In contrast, Theorem~\ref{thm:labeltrick} shows that aggregating the structural node representations learned from the \textbf{labeled} graph $\tA^{(S)}$, somewhat surprisingly, results in a structural representation for $(S,\tA)$.

The significance of Theorem~\ref{thm:labeltrick} is that it closes the gap between the nature of GNNs for single-node representations and the requirement of multi-node representations for node set prediction problems. Although GNNs alone have severe limitations for multi-node representations, GNNs + labeling trick can learn structural representations of node sets by aggregating structural node representations obtained in the labeled graph. 

Theorem~\ref{thm:labeltrick} assumes an NME GNN. To augment Theorem~\ref{thm:labeltrick}, we give the following theorems, which demonstrate the power of labeling trick for 1-WL-GNNs on link prediction.

\begin{theorem} \label{thm:link-aso-num}
\citep{zhang2021labeling} Given an $h$-layer 1-WL-GNN, in any non-attributed graph with $n$ nodes, if the degree of each node in the graph is between $1$ and $\big((1-\epsilon)\log n\big)^{1/(2h+2)}$ for any constant $\epsilon\in \left(\frac{\log \log n}{(2h+2)\log n}, 1\right)$, there exists $\omega(n^{2\epsilon})$ pairs of non-isomorphic links $(u,w), (v,w)$ such that 1-WL-GNN gives $u,v$ the same representation, while with 1-WL-GNN + zero-one labeling trick gives $u,v$ different representations.
\end{theorem}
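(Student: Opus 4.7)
The plan is to combine a counting bound on the number of 1-WL color classes with an argument that zero-one labeling breaks the resulting indistinguishability on sparse neighborhoods.

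First, I would invoke the standard correspondence between 1-WL and rooted neighborhoods: the color produced by an $h$-layer 1-WL-GNN at node $u$ is determined by the isomorphism type of the $h$-hop rooted subgraph $B_h(u)$. Under the assumed degree bound $\Delta \leq \bigl((1-\epsilon)\log n\bigr)^{1/(2h+2)}$, one has $|V(B_h(u))| \leq \sum_{i=0}^{h}\Delta^i \leq 2\Delta^h$ and $|E(B_h(u))| \leq \Delta^{2h+2} \leq (1-\epsilon)\log n$. The number of non-isomorphic rooted graphs on that few edges is at most $2^{(1-\epsilon)\log n}\cdot \mathrm{poly}(\log n)$, which is $o(n^{1-\epsilon/2})$; hence the 1-WL color classes number at most $O(n^{1-\epsilon})$. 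This is the Moore-type bottleneck that drives the theorem.

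Second, I would use pigeonhole. With at most $O(n^{1-\epsilon})$ color classes, convexity gives $\Omega(n^{1+\epsilon})$ unordered same-color node pairs. Since every node has degree at least $1$, I look for same-color pairs sharing a common neighbor $w$: for each center $w$ with $|N(w)| \geq 2$, the nodes in $N(w)$ fall into at most $O(n^{1-\epsilon})$ classes, so a local pigeonhole inside $N(w)$ together with a sum over $w$ yields $\omega(n^{2\epsilon})$ candidate link pairs $\bigl((u,w),(v,w)\bigr)$ with $u\ne v$ both neighbors of $w$ and of the same 1-WL color, so that the link reps produced without labeling must coincide.

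Third, I would promote these candidate pairs to \emph{non-isomorphic} link pairs and verify that the zero-one labeling distinguishes them. If $(u,w)\not\simeq (v,w)$ as links, then Proposition~\ref{prop::gsiso2giso} gives $\tA^{(\{u,w\})} \not\simeq \tA^{(\{v,w\})}$, and the degree bound makes the labeled $h$-hop rooted subgraphs around $u$ and $v$ tree-like (each contains at most $(1-\epsilon)\log n$ edges, so cycles are rare), on which 1-WL is complete. Consequently the 1-WL color of $u$ in $\tA^{(\{u,w\})}$ differs from that of $v$ in $\tA^{(\{v,w\})}$, and the aggregated link representations differ.

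The main obstacle is the second step: I need $\omega(n^{2\epsilon})$ pairs whose \emph{links} are non-isomorphic, not merely same-color endpoint pairs sharing a neighbor. Ruling out an automorphism that swaps $u,v$ while fixing $w$ requires exploiting sparsity to show that only a vanishing fraction of candidate pairs collapse into isomorphic links; the remaining pairs still dominate the $\omega(n^{2\epsilon})$ bound. A secondary difficulty is upgrading ``the two labeled graphs are non-isomorphic'' to ``1-WL separates $u$ and $v$'', which needs tree-likeness of the local neighborhood. The condition $\epsilon > \tfrac{\log\log n}{(2h+2)\log n}$ is precisely the slack that lets the edge count $\Delta^{2h+2}$ stay a nontrivial factor below $\log n$, absorbing the non-tree exceptions into the lower-order terms of the count.
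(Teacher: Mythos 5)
Your Step~1 is in the right spirit (pigeonhole on the bounded number of isomorphism types of $h$-hop neighborhoods under the degree cap), and matches the paper's Step~1. But Steps~2 and~3 diverge from the paper and, as written, do not close.

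The central mis-step is your choice of the pivot node $w$: you want $w$ to be a \emph{common} neighbor of $u$ and $v$, so both $(u,w)$ and $(v,w)$ are edges, and then you are forced to (a) rule out an automorphism swapping $u,v$ fixing $w$, and (b) argue that 1-WL on the labeled graphs separates $u$ from $v$ via a ``tree-like local ball'' claim. Neither sub-argument is sound. For (b), a maximum-degree bound does not make balls tree-like (a small dense clique is compatible with the stated degree bound), so ``1-WL is complete on these neighborhoods'' is unjustified; and for (a) your ``local pigeonhole inside $N(w)$'' cannot produce collisions at all, since $|N(w)| \le \big((1-\epsilon)\log n\big)^{1/(2h+2)} = O\big((\log n)^{1/2}\big)$ whereas the number of 1-WL color classes is on the order of $n^{1-\epsilon}$, far larger than any single neighborhood. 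The sum over $w$ does not rescue this.

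The paper's Step~2 avoids both difficulties by taking $w$ to be a neighbor of exactly \emph{one} of $u,v$. Concretely: inside the set $V_{\mathrm{iso}}$ of $\omega(n^\epsilon)$ nodes with identical $h$-hop neighborhoods, partition by the exact first-hop neighbor set; each cell has size at most the maximum degree $\big((1-\epsilon)\log n\big)^{1/(2h+2)} < n^\epsilon$ (this is exactly where the lower bound on $\epsilon$ enters), so there are $\omega(n^{2\epsilon})$ cross-cell pairs. For any such pair $u,v$, pick $w \in N(u)\setminus N(v)$ (or vice versa). Then $(u,w)$ is an edge and $(v,w)$ is a non-edge, so the two links are \emph{trivially} non-isomorphic --- no automorphism argument is needed. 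And distinguishability under the zero-one labeling is immediate after one message-passing round: in $\tA^{(\{u,w\})}$ node $u$ has a labeled neighbor ($w$), whereas in $\tA^{(\{v,w\})}$ node $v$ has none, so their colors already split at layer one regardless of any tree structure. You should restructure your Steps~2--3 along these lines; the tree-likeness detour and the common-neighbor setup should be dropped.
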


Theorem~\ref{thm:link-aso-num} shows that in any non-attributed graph there exists a large number ($\omega(n^{2\epsilon})$) of link pairs (like the examples $(v_1,v_2)$ and $(v_1,v_3)$ in Figure~\ref{node_iso}) which are not distinguishable by 1-WL-GNNs alone but distinguishable by 1-WL-GNNs + labeling trick. This means, labeling trick can boost the expressive power of 1-WL-GNNs on link prediction tasks. 

How labeling trick boosts link prediction can also be shown from another perspective: 1-WL-GNN + zero-one labeling trick can \textbf{learn various link prediction heuristics} while vanilla 1-WL-GNN cannot. 

\begin{proposition}\label{prop:cnraaa}
\sloppy Given a link prediction heuristic of the following form, 
\begin{align*}
    f\big(\big\{\sum_{v\in N(i)}g_2(\text{deg}(v,\tA)),
    \sum_{v\in N(j)}g_2(\text{deg}(v,\tA))\big\},\!\!
    \sum_{v\in N(i)\bigcap N(j)}\!\!g_1(\text{deg}(v,\tA))\big),
\end{align*}
where $deg(v,\tA)$ is the degree of node $v$ in graph $\tA$, $g_1,g_2$ are positive functions, and $f$ is injective w.r.t. the second input with the first input fixed. There exists a 1-WL-GNN + zero-one labeling trick implementing this heuristic. In contrast, 1-WL-GNN cannot implement it.
\end{proposition}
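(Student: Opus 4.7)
The plan is to split the proof into the positive direction (a 1-WL-GNN with zero-one labeling realising the heuristic) and the impossibility direction (vanilla 1-WL-GNN cannot).

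For the positive direction, I would take a GIN-style two-layer 1-WL-GNN with sum aggregation and universal MLPs, applied to the labeled graph $\tA^{(\{i,j\})}$ produced by the zero-one labeling trick. Each node starts with feature $x_v^{(0)} = \mathbf{1}[v\in\{i,j\}]$. A first GIN layer encodes, for every node $v$, the triple $(x_v^{(0)}, c_v, \text{deg}(v,\tA))$ where $c_v := |N(v)\cap\{i,j\}|$; this is possible because the multiset of neighbour labels is drawn from $\{0,1\}$, so its cardinality recovers the degree and its number of ones recovers $c_v$, and GIN's sum-MLP aggregator is injective on such finite multisets. A second GIN layer, centered at a target node $t\in\{i,j\}$, aggregates $\phi(h_v^{(1)})$ over $v\in N(t)$, where the MLP $\phi$ is chosen so that
\[
\phi(h_v^{(1)}) = \bigl(g_1(\text{deg}(v,\tA))\cdot \mathbf{1}[x_v^{(0)}=0,\ c_v=2],\; g_2(\text{deg}(v,\tA))\bigr).
\]
For simple graphs without self-loops, the condition $x_v^{(0)}=0$ and $c_v=2$ characterises $v\in N(i)\cap N(j)$, so the first coordinate of $h_t^{(2)}$ equals $\sum_{v\in N(i)\cap N(j)}g_1(\text{deg}(v,\tA))$ at both $t=i$ and $t=j$, while the second coordinate equals $\sum_{v\in N(t)}g_2(\text{deg}(v,\tA))$. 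Finally, an injective multiset aggregator $\text{AGG}$ over $\{h_i^{(2)},h_j^{(2)}\}$ followed by an MLP realising $f$ produces the claimed heuristic; existence of the required MLPs follows from standard universal-approximation arguments used in the GIN analysis.

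For the impossibility direction, I would invoke the graph of Figure~\ref{node_iso}. There $v_2$ and $v_3$ lie in the same automorphism orbit, so any permutation-equivariant node encoder, including any 1-WL-GNN, produces $h_{v_2}=h_{v_3}$; consequently $\text{AGG}(\{h_{v_1},h_{v_2}\})=\text{AGG}(\{h_{v_1},h_{v_3}\})$ and any GAE-style readout collapses the two links. On the other hand, the heuristic separates them: since $v_2$ and $v_3$ are isomorphic, the $g_2$-sums over their neighbourhoods agree, so the first argument of $f$ is the same multiset for both links, yet $|N(v_1)\cap N(v_2)|=1$ while $|N(v_1)\cap N(v_3)|=0$, so with any positive $g_1$ the second argument differs and the injectivity of $f$ in its second input forces distinct heuristic values. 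This contradicts what a 1-WL-GNN GAE can produce.

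The main obstacle is making the conjunction $\mathbf{1}[x_v^{(0)}=0,\ c_v=2]$ rigorous inside $\phi$, which requires arguing that the layer-1 embedding $h_v^{(1)}$ indeed lets an MLP recover $x_v^{(0)}$, $c_v$, and $\text{deg}(v,\tA)$ as separate real-valued features. This boils down to invoking the GIN-style injectivity lemma on multisets drawn from the two-point label alphabet, after which everything reduces to bookkeeping, and the impossibility direction collapses to the orbit-symmetry argument already used throughout the paper.
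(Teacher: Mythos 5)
Your proposal is correct and follows essentially the same strategy as the paper's proof: an explicit two-layer message-passing construction where the first layer extracts each node's degree together with its count of labeled neighbours, the second layer sums $g_1$ over detected common neighbours and $g_2$ over all neighbours, and the readout applies $f$; the impossibility direction is the orbit-symmetry argument on Figure~\ref{node_iso} using positivity of $g_1$ and injectivity of $f$ in its second argument. Your write-up is in fact somewhat tidier than the paper's appendix version, which carries a few typographical slips (an inner $z_k$ that should read $z_l$, and a threshold written $>2$ that should be $\ge 2$) but encodes the identical idea.
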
 
The heuristic defined in the above proposition covers many widely-used and time-tested link prediction heuristics, such as common neighbors (CN)~\citep{CommonNeighbor}, resource allocation(RA)~\citep{zhou2009predicting}, and Adamic-Adar(AA)~\citep{adamic2003friends}. These important structural features for link prediction are not learnable by vanilla GNNs but can be learned if we augment 1-WL-GNNs with a simple zero-one labeling trick.

Labeling trick can also boost graph neural networks in subgraph tasks with more than two nodes. The following proposition.
\begin{proposition}
\label{thm:BoostWLGNNsubg}(\citet{GLASS})
Given an $h$-layer 1-WL-GNN, in any non-attributed graph with $n$ nodes, if the degree of each node in the graph is between $1$ and $\big((1-\epsilon)\log n\big)^{1/(2h+2)}$ for any constant $\epsilon>0$, there exists $w(2^n n^{2\epsilon-1})$ pairs of non-isomorphic subgraphs such that that 1-WL-GNN produces the same representation, while 1-WL-GNN + labeling trick can distinguish them.
\end{proposition}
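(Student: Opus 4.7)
The strategy is to lift the link-level construction of Theorem~\ref{thm:link-aso-num} to subgraphs by appending a common auxiliary node set $T$. Under the stated degree bound, Theorem~\ref{thm:link-aso-num} supplies $\omega(n^{2\epsilon})$ triples $(u,v,w)$ in $\tA$ for which the links $(u,w)$ and $(v,w)$ are non-isomorphic, the $h$-layer 1-WL-GNN assigns $u$ and $v$ identical node representations, yet the same GNN equipped with the zero-one labeling trick produces distinct representations of $u$ and $v$ in the respective labeled graphs. For every such triple and every $T\subseteq V\setminus\{u,v,w\}$, define the candidate subgraph pair $S_1=\{u,w\}\cup T$ and $S_2=\{v,w\}\cup T$.

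The three items to verify are (i) same aggregated subgraph embedding on $\tA$, (ii) non-isomorphism of $(S_1,\tA)$ and $(S_2,\tA)$, and (iii) distinguishability by labeling trick. Item (i) is immediate: the multiset $\{\text{GNN}(x,\tA):x\in S_1\}$ differs from $\{\text{GNN}(x,\tA):x\in S_2\}$ only by replacing $\text{GNN}(u,\tA)$ with $\text{GNN}(v,\tA)$, and these coincide, so injective aggregation yields equal subgraph representations. For item (iii), I would apply the zero-one labeling trick to obtain $\tA^{(S_1)}$ and $\tA^{(S_2)}$, whose labelings agree on $T\cup\{w\}$ but differ on $\{u\}$ versus $\{v\}$, and then adapt the local $h$-hop neighborhood argument already used in the proof of Theorem~\ref{thm:link-aso-num} (where the labeled $h$-hop trees rooted at $u$ and $v$ are shown to differ) to deduce that the multiset of labeled node embeddings of $S_1$ in $\tA^{(S_1)}$ differs from that of $S_2$ in $\tA^{(S_2)}$, so the aggregated labeled subgraph representations disagree.

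The main obstacle is item (ii). An automorphism $\pi\in\text{Aut}(\tA)$ witnessing $\pi(S_1)=S_2$ need not act as a bijection $\{u,w\}\to\{v,w\}$: it may shuffle elements of $T$ with $\{u,w\}$ as long as the overall image equals $S_2$. To handle this, I would exploit the sparsity hypothesis, which by the same tree-neighborhood argument underlying Theorem~\ref{thm:link-aso-num} severely restricts $\text{Aut}(\tA)$ locally near $u,v,w$. For each non-identity $\pi$, the set of subsets $\{T:\pi(\{u,w\}\cup T)=\{v,w\}\cup T\}$ is controlled by $\pi$'s orbit structure, and a union-bound/averaging argument across the admissible $\pi$'s shows that at most an $O(1/n)$ fraction of subsets $T$ admit such a $\pi$, leaving $\Omega(2^{n}/n)$ good subsets per triple.

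Combining the three items and dividing by two for unordered pairs gives $\omega(n^{2\epsilon})\cdot\Omega(2^n/n)=\omega(2^n n^{2\epsilon-1})$ distinct non-isomorphic subgraph pairs that are confused by the 1-WL-GNN but separated by the labeling trick, matching the stated bound. I expect the main difficulty to be proving the $O(1/n)$ estimate in item (ii): a crude bound on $|\text{Aut}(\tA)|$ is insufficient, so a finer structural argument about which automorphisms can locally exchange $u$ and $v$ while fixing $w$ is needed, and this is precisely where the degree cap $((1-\epsilon)\log n)^{1/(2h+2)}$ must be used to force almost every relevant $h$-ball to be tree-like.
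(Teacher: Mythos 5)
Your overall scaffold matches the paper's: fix the $\omega(n^{2\epsilon})$ triples $(u,v,w)$ from the link-level construction and enlarge each pair $\{u,w\}$, $\{v,w\}$ by appending a common auxiliary node set. Items (i) and (iii) are in the right spirit (the paper actually handles (iii) by invoking Theorem~1 of GLASS, which says 1-WL-GNN with the zero-one labeling trick learns induced-subgraph density exactly, rather than re-running a labeled $h$-ball argument, but that is a minor routing difference).

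The genuine gap is your item (ii), and your proposed fix does not work. You plan to bound the fraction of $T$ that admit an automorphism $\pi\in\text{Aut}(\tA)$ with $\pi(\{u,w\}\cup T)=\{v,w\}\cup T$ by a union bound over automorphisms. But the degree cap $((1-\epsilon)\log n)^{1/(2h+2)}$ only constrains \emph{local} structure; it places no useful bound on $|\text{Aut}(\tA)|$ (a sparse graph can have factorially many automorphisms, e.g.\ many disjoint isomorphic components), so the union bound can blow up and the ``$O(1/n)$ fraction'' claim has no support. The paper avoids the automorphism analysis entirely by (a) shrinking the universe of auxiliary sets to $\tilde V_{u,v,w}=V\setminus(\{u,v,w\}\cup N(v))$, and (b) using a trivial isomorphism invariant: the \emph{number of edges in the induced subgraph}. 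With $T\subseteq\tilde V_{u,v,w}$, the induced subgraph on $T\cup\{u,w\}$ contains the edge $(u,w)$ and possibly edges from $T$ into $u$, whereas the induced subgraph on $T\cup\{v,w\}$ has neither the edge $(v,w)$ (since $w\notin N(v)$) nor any edge from $T$ into $v$ (since $T$ avoids $N(v)$). So the two induced subgraphs have different edge counts, hence $(S_1,\tA)\not\simeq(S_2,\tA)$ for \emph{every} such $T$ — no averaging needed. Removing $N(v)$ costs only $\big((1-\epsilon)\log n\big)^{1/(2h+2)}=o(\log n)$ elements, so the count $2^{|\tilde V_{u,v,w}|}=\Omega(2^n/n)$ per triple still yields the stated $\omega(2^n n^{2\epsilon-1})$. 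This restriction-plus-density-invariant step is the key idea your proposal is missing.
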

Theorem~\ref{thm:BoostWLGNNsubg} extends Theorem~\ref{thm:link-aso-num} to more than 2 nodes. It shows that an even larger number of node set pairs need labeling tricks to help 1-WL-GNNs differentiate them. 

\subsection{Complexity}
Despite the expressive power, labeling trick may introduce extra computational complexity. The reason is that for every node set $S$ to predict, we need to relabel the graph $\tA$ according to $S$ and compute a new set of node representations within the labeled graph. In contrast, GAE-type methods compute node representations only in the original graph. 

Let $m$ denote the number of edges, $n$ denote the number of nodes, and $q$ denote the number of target node sets to predict. As node labels are usually produced by some fast non-parametric method, we neglect the overhead for computing node labels. Then we compare the inference complexity of GAE and GNN with labeling trick. For small graphs, GAE-type methods can compute all node representations first and then predict multiple node sets at the same time, which saves a significant amount of time. In this case, GAE's time complexity is $O(m+n+q)$, while GNN with labeling trick takes up to $O(q(m+n))$ time. However, for large graphs that cannot fit into the GPU memory, extracting a neighborhood subgraph for each node set to predict has to be used for both GAE-type methods and labeling trick, resulting in similar computation cost $O(q(n_s+m_s))$, where $n_s, m_s$ are the average number of nodes and edges in the segregated subgraphs. We also measures time and GPU memory consumption on link prediction task in Appendix~\ref{app::time}.

\begin{wrapfigure}[16]{L}{0.45\textwidth}
\centering
\vskip -14pt
\includegraphics[width=0.2\textwidth]{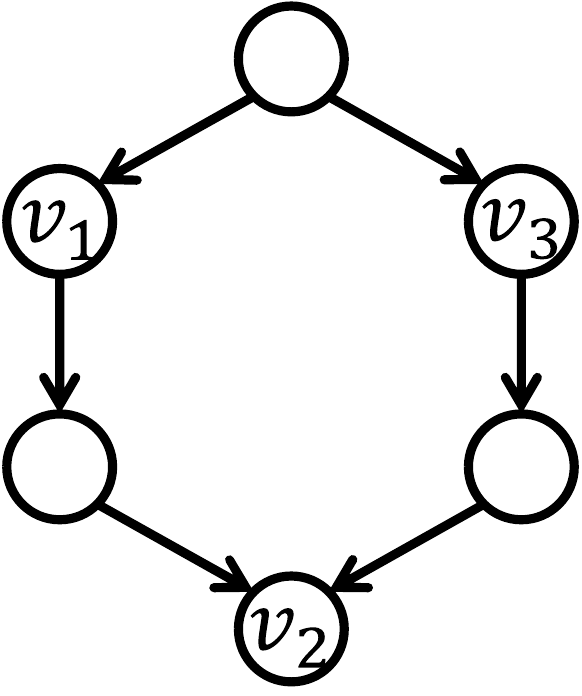}
\vskip -9pt
\caption{\small Set labeling with Graph Neural Networks (GNNs) fails to distinguish between non-isomorphic directed links, such as the edge from $v_1$ to $v_2$ versus the edge from $v_2$ to $v_3$, because it does not account for the order of nodes within the target node pairs. }
\label{fig:directedlink}
\end{wrapfigure}

\section{Labeling Trick for Poset}
The previous section describes multi-node substructures $(S,\tA)$ defined by node set $S$, which assumes that nodes in $S$ have no order relation. However, the assumption may lose some critical information in real-world tasks.  
For example, the citing and cited articles should be differentiated in citation graphs.
As shown in Figure~\ref{fig:directedlink}, using set labeling trick cannot discriminate the link direction by giving the two directed links the same representation, yet the two directed links are obviously non-isomorphic. Therefore, introducing order relation into node set is necessary for substructures with internal relation. In this section, we use poset to define multi-node substructures and extend set labeling trick to \textit{poset labeling trick}. Note that node order is only additionally introduced for $S$ because the graph $\tA$ already allows directed edges in our definition. 
\begin{definition}\label{def:Lposet}
\textbf{(Poset labeling trick)} Given a graph $\tA$ and a poset $S$ of nodes in it, we stack a labeling tensor $\tL(S,\tA) \in \R^{n\times n\times d}$ in the third dimension of $\tA$ to get a new $\tA^{(S)} \in \R^{n\times n\times (k+d)}$, where $\tL$ satisfies: for all poset $S$ of nodes in graph $\tA$, poset $S'$ of nodes in graph $\tA'$, and $\pi \in \Pi_n$,
\begin{enumerate}
    \item (\textit{target-nodes-and-order-distinguishing}) ~$\tL(S,\tA) = \pi(\tL(S',\tA)) \Rightarrow S = \pi(S')$.
    \item (\textit{permutation equivariance}) ~~$S = \pi(S'), \tA = \pi(\tA') \Rightarrow \tL(S,\tA) = \pi(\tL(S', \tA'))$. 
\end{enumerate}
\end{definition}
The definition of poset labeling trick is nearly the same as that of set labeling trick, except that we require permutation of poset and poset-graph isomorphism (Definition~\ref{def:poset} and~\ref{def:poset_iso}). Poset labeling trick still assigns a label vector to each node/edge in graph $\tA$. The labels distinguish the substructure from other parts of the graph and keep permutation equivariance. As we will show, poset labeling trick enables maximum expressivity for poset learning.  Below we first discuss how to design poset labeling tricks that satisfy the two above properties.

\subsection{Poset Labeling Trick Design} 

To describe general partial order relations between nodes in a poset, we introduce \textit{Hasse diagram}, a graph that uniquely determines the partial order relation.

\begin{definition}\label{def:hassediag} 
The Hasse diagram of a poset $S=(U,\le_S)$, denoted as $\gH_{S}$, is a directed graph $(V_H, E_H)$, $V_H=U$, $E_H=\{(u, v)\mid v\neq u \text{ and }v\text{ covers }u\}$, where $v$ covers $u$ means that $u\le_S v$ and there exists no $w\in U,w\notin \{u,v\}$, $u\le_S w$ and $w\le_S v$. 
\end{definition}
\begin{wrapfigure}[12]{L}{0.5\textwidth}
\vskip -0.2in
\centering
\includegraphics[width=0.48\textwidth]{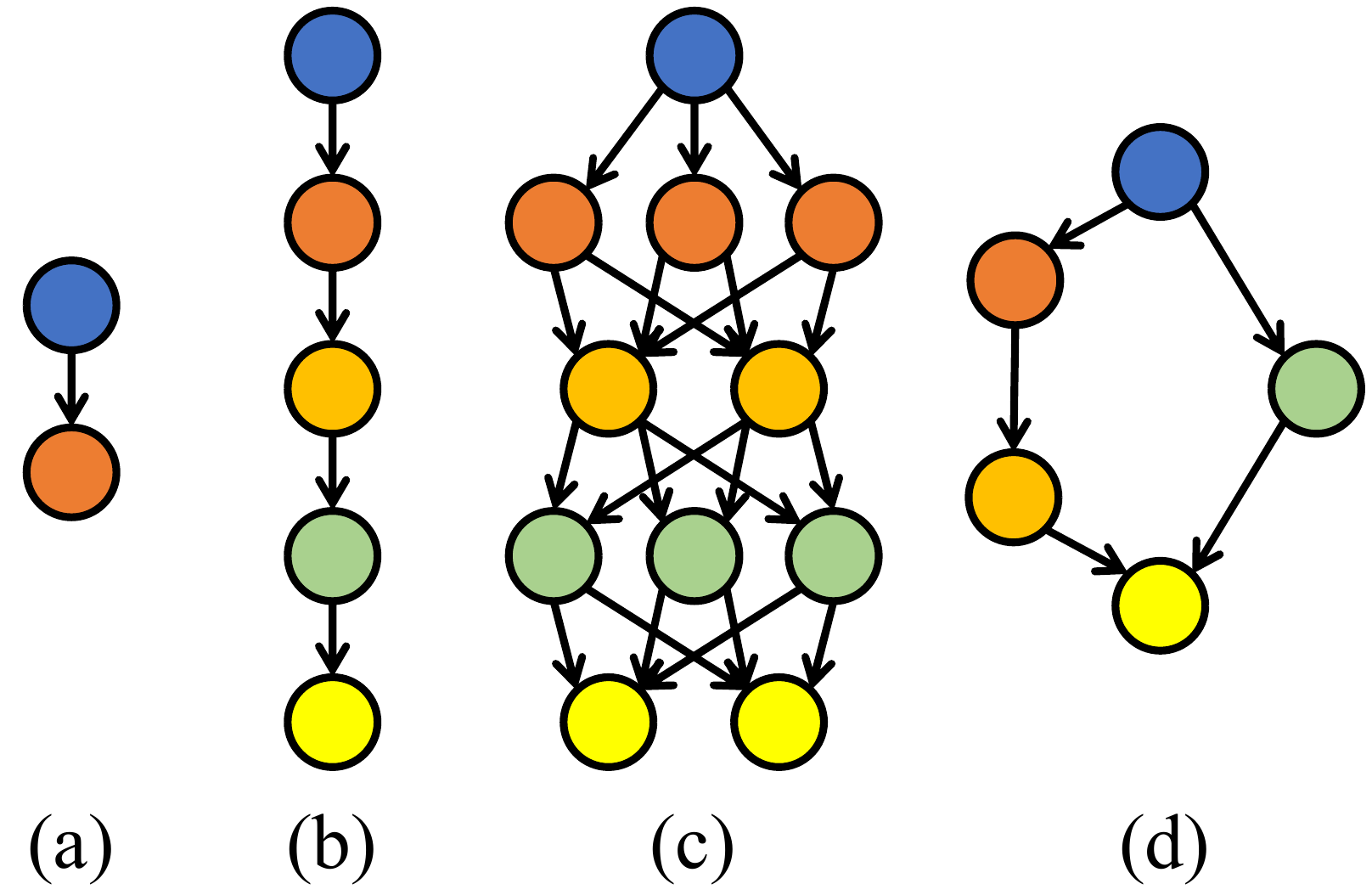}
\caption{\small Different Hasse diagrams}\label{fig:hassediag}
\end{wrapfigure}
Figure~\ref{fig:hassediag} shows some examples of Hasse diagram.
The reason we use Hasse diagram to encode partial order relation is that we prove any poset labeling trick satisfying Definition~\ref{def:Lposet} must give non-isomorphic nodes in a Hasse diagram different labels.
\begin{proposition}\label{prop:pohasse}
Let $\tL$ be the labeling function of a poset labeling trick. If $\exists \pi\in\Pi_n, \tL(S,\tA)=\pi(\tL(S',\tA'))$, then for all $v'\in S'$, $\pi(v')$ is in $S$, and $(\{v'\}, \gH_{S'})\simeq (\{\pi(v')\}, \gH_{S})$. Furthermore, in the same $\gH_{S}$, non-isomorphic nodes must have different labels.
\end{proposition}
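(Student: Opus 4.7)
The plan is to derive the two halves of Proposition~\ref{prop:pohasse} directly from the two axioms of Definition~\ref{def:Lposet}, together with the diagonal-label convention that the paper has already fixed for labeling tricks.

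For the first claim, I start from the hypothesis $\tL(S,\tA)=\pi(\tL(S',\tA'))$ and apply the target-nodes-and-order-distinguishing axiom to conclude $S=\pi(S')$ as posets. Writing $S=(U,\le_S)$ and $S'=(U',\le_{S'})$ and unfolding the definition of permutation on a poset, this forces $U=\pi(U')$ and $\le_S=\pi(\le_{S'})$, so the restriction of $\pi$ to $U'$ is an order-preserving bijection onto $U$; in particular $\pi(v')\in S$ for every $v'\in S'$. Because the Hasse diagram in Definition~\ref{def:hassediag} is built entirely from the underlying set and the partial order, the same $\pi$ induces a directed-graph isomorphism $\gH_{S'}\to\gH_S$ that sends $v'$ to $\pi(v')$, which is precisely the pointed poset-graph isomorphism $(\{v'\},\gH_{S'})\simeq(\{\pi(v')\},\gH_S)$ demanded by the claim.

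For the ``furthermore'' part I will prove the contrapositive: if two nodes $u,v\in U$ receive the same label in $\tL(S,\tA)$, then some $\gH_S$-automorphism maps $u$ to $v$. The idea is to use the transposition $\pi=(u\ v)$ as a witness. Under the convention from Section~\ref{sec:labelingtrick} that labeling tensors are supported on the diagonal, the single equality $\tL(S,\tA)_{u,u,:}=\tL(S,\tA)_{v,v,:}$ already forces $\pi(\tL(S,\tA))=\tL(S,\tA)$: diagonal slices outside $\{u,v\}$ are fixed by $\pi$, the $u$-th and $v$-th diagonal slices exchange into equal values, and every off-diagonal slice is zero. Feeding this back into the first half of the proposition with $S'=S$ and $\tA'=\tA$ gives $S=\pi(S)$, so $\pi$ is an automorphism of the poset $S$, hence of $\gH_S$, witnessing that $u$ and $v$ are isomorphic in $\gH_S$.

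The only moderately delicate ingredient is the diagonal convention: without it, equality of the two diagonal labels would not by itself certify that the transposition preserves the whole label tensor, because the off-diagonal slices $\tL(S,\tA)_{u,w,:}$ and $\tL(S,\tA)_{v,w,:}$ could disagree. Since the paper adopts that convention throughout, I expect the argument to be short and essentially a direct axiom chase; the one place to be careful is to make explicit that permuting a poset permutes both the carrier set and the order relation, so that $S=\pi(S')$ really transports the full Hasse structure and not merely its node set.
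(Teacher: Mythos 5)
Your proposal is correct and follows the same logical architecture as the paper's proof: for the first claim, apply the target-nodes-and-order-distinguishing axiom to obtain $S=\pi(S')$ as posets, then transport the order (and hence the covering relation) along $\pi$ to get the Hasse-diagram isomorphism; for the second claim, argue by contraposition with the transposition $(u\ v)$ to exhibit an $\gH_S$-automorphism carrying $u$ to $v$. The one place where you are slightly more careful than the paper is in spelling out exactly why $\tL(S,\tA)_{u,u,:}=\tL(S,\tA)_{v,v,:}$ together with the diagonal-support convention forces $\pi(\tL(S,\tA))=\tL(S,\tA)$, a step the paper asserts without the explicit appeal to the off-diagonal entries being zero.
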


Proposition~\ref{prop:pohasse} shows that a valid poset labeling trick should differentiate non-isomorphic nodes in a Hasse diagram. 
Theoretically, we can run an NME GNN on the Hasse diagram so that the node embeddings can serve the purpose.
Such a poset labeling trick is defined as follows. 
\begin{definition}\label{def:polabelingtrick}
Given an NME GNN, Hasse embedding labeling trick is
\begin{equation*}
\tL(S, \tA)_{u,u,:}=
    \begin{cases}
    \text{iso}(u, \gH_{S})&\text{if } u\in S\\
    0& \text{otherwise}
    \end{cases}
\end{equation*}
\end{definition} 
This labeling trick fulfills the two requirements in Definition~\ref{def:Lposet}. $\textit{iso}(u, \gH_{S})$ denotes the isomorphism type (non-zero) of node $u$ in Hasse diagram $\gH_{S}$, where $\textit{iso}(u_1, \gH_{S_1})=\textit{iso}(u_2, \gH_{S_2})$ iff $(u_1, \gH_{S_1})\simeq(u_2, \gH_{S_2})$. Hasse embedding labeling trick is similar to the zero-one labeling trick for set in Definition~\ref{def:zolabeling}. It assigns nodes outside the target poset the same label and distinguishes nodes inside based on their isomorphism class in the Hasse diagram, while the zero-one labeling trick does not differentiate nodes inside the poset. 

The above poset labeling trick can work on posets with arbitrary complex partial orders, at the cost of first identifying node isomorphism types in the Hasse diagram. In most real-world tasks, differentiating non-isomorphic nodes in Hasse diagrams is usually quite easy. For example, in the directed link prediction task, the target posets all have same simple Hasse diagram: only two roles exist in the poset---source node and target node of the link, which is shown in Figure~\ref{fig:hassediag}(a). Then we can assign a unique color to each equivalent class of isomorphic nodes in the Hasse diagram as the node labels, e.g., giving 1 to the source node, 2 to the target node, and 0 to all other nodes in directed link prediction. We can also design other simple poset labeling tricks. Two cases are discussed in the following.

\noindent\textbf{Linear Order Set.} Linear order set means a poset whose each pair of nodes are comparable, so that the Hasse diagram is a chain as shown in Figure~\ref{fig:hassediag}(b). Therefore, $S$ can be sorted in $u_1\le_S u_2\le_S u_3\le_S...\le_S u_k$, where $S = (U, \le_S), U=\{u_1,u_2,...,u_k\}$. Then we can assign $u_i$ label $i$ and give nodes outside $S$ $0$ label. Such a labeling trick is a valid poset labeling trick and can be used to learn paths with different lengths.

\noindent\textbf{Nearly Linear Order Set.} Nearly linear order set means there exists a partition of $S$, $\{S_1,S_2,...,S_l\}$, $\le_S=\bigcup_{i=1}^{l-1}S_i\times S_{i+1}$. As shown in Figure~\ref{fig:hassediag}(c), the Hasse diagram is nearly a chain whose nodes are replaced with a set of nodes with no relations. We can assign nodes in $S_i$ label $i$ and give nodes outside $S$ $0$ label. It is still a valid poset labeling trick. Nearly linear order set can describe a group in an institute, where the top is the leader.

\subsection{Poset Labeling Trick Expressivity}
We first show that poset labeling trick enables maximum expressivity for poset learning.

\begin{proposition}\label{prop::psgiso2giso}
    For any node poset $S$ in graph $\tA$ and $S'$ in graph $\tA'$, given a set labeling trick, $(S, \tA)\simeq (S', \tA')\Leftrightarrow \tA^{(S)}\simeq {\tA'}^{(S')}$. 
\end{proposition}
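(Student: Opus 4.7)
The plan is to mirror the proof of the analogous set statement (Proposition~\ref{prop::gsiso2giso}) and to exploit the two defining properties of a poset labeling trick directly, since both directions reduce to peeling off the stacked label tensor $\tL(S,\tA)$ from $\tA^{(S)}$.

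For the forward direction ($\Rightarrow$), I would start from $(S,\tA) \simeq (S',\tA')$, which by Definition~\ref{def:poset_iso} supplies a permutation $\pi\in\Pi_n$ with $S=\pi(S')$ and $\tA=\pi(\tA')$. Plugging this $\pi$ into property~2 (permutation equivariance) of Definition~\ref{def:Lposet} immediately yields $\tL(S,\tA)=\pi(\tL(S',\tA'))$. Since $\tA^{(S)}$ is nothing but the concatenation of $\tA$ and $\tL(S,\tA)$ along the feature dimension, and since permutation acts slice-wise in the first two indices, we get $\tA^{(S)}=\pi({\tA'}^{(S')})$, hence the two labeled graphs are isomorphic as graphs.

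For the backward direction ($\Leftarrow$), I would take a $\pi$ certifying $\tA^{(S)}\simeq {\tA'}^{(S')}$ and split it into its two blocks: on the first $k$ feature slices it says $\tA=\pi(\tA')$, and on the remaining $d$ slices it says $\tL(S,\tA)=\pi(\tL(S',\tA'))$. Property~1 of Definition~\ref{def:Lposet} (target-nodes-and-order-distinguishing) then gives $S=\pi(S')$. Together with $\tA=\pi(\tA')$ this is exactly the definition of poset-graph isomorphism, so $(S,\tA)\simeq(S',\tA')$.

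The only delicate point, and where I expect most of the care to go, is to make sure that the equality $S=\pi(S')$ is read as poset equality (i.e., $\pi(U')=U$ and $\pi(\le_{S'}){=}\le_S$) and not merely as set equality of the underlying supports. Property~1 is stated strongly enough to conclude the full poset equality because the permutation acts on posets as defined earlier in Section~2.2; this is precisely why the poset version of the labeling trick needs the stricter ``target-nodes-and-order-distinguishing'' condition rather than just distinguishing the support $U$. Once this is nailed down, the rest is just the bookkeeping sketched above, and the result is really a direct unpacking of Definitions~\ref{def:poset_iso} and~\ref{def:Lposet}, in complete parallel with the set case.
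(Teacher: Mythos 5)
Your proof is correct and follows essentially the same argument as the paper's: the paper establishes the chain of equivalences $(S,\tA)\simeq(S',\tA') \Leftrightarrow \exists\pi,\ \pi(S)=S',\pi(\tA)=\tA' \Leftrightarrow \exists\pi,\ \pi(\tL(S,\tA))=\tL(S',\tA'),\pi(\tA)=\tA' \Leftrightarrow \tA^{(S)}\simeq{\tA'}^{(S')}$ and then remarks that the poset case goes through by the same reasoning, which is exactly the two-direction unpacking you give via properties 1 and 2 of Definition~\ref{def:Lposet}. Your explicit observation that $S=\pi(S')$ must be read as poset equality (so that property 1's ``target-nodes-and-order-distinguishing'' really does recover the partial order, not just the support) is a worthwhile clarification of a point the paper leaves implicit, but it is not a different route.
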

Proposition~\ref{prop::psgiso2giso} shows that structural poset representation is equivalent to the structural whole graph representation of labeled graph. Poset labeling trick can also bridge the gap between node representations and poset representations.

\begin{theorem}\label{thm:poexpressivity}
Given an \text{NME GNN} and an injective aggregation function $\text{AGG}$, for any node posets $S, S'$ in graphs $\tA,\tA'$, $\text{GNN}(S,\tA^{(S)}) = \text{GNN}(S',\tA'^{(S')}) \Leftrightarrow (S,\tA) \!\simeq\! (S',\tA') \nonumber$, where $\text{GNN}(S,\tA^{(S)})=\text{AGG}(\{\text{GNN}(u,\tA^{(S)}\mid u\in S\}))$.
\end{theorem}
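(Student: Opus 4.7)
The plan is to mirror the argument used for Theorem~\ref{thm:labeltrick} in the set case, with Proposition~\ref{prop::psgiso2giso} playing the role of its set analogue as the key bridge. Both directions rely only on the two defining properties of poset labeling trick in Definition~\ref{def:Lposet}, together with node-most-expressiveness of the GNN and injectivity of AGG; the poset structure enters only through $\tL$, which is why the set-case proof carries over almost verbatim.

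For the $(\Leftarrow)$ direction, I would assume $(S,\tA)\simeq(S',\tA')$, so there is some $\pi\in\Pi_n$ with $\pi(S)=S'$ and $\pi(\tA)=\tA'$. Property~2 of Definition~\ref{def:Lposet} gives $\pi(\tL(S,\tA))=\tL(S',\tA')$, hence $\pi(\tA^{(S)})=\tA'^{(S')}$. For each $u\in S$ the same $\pi$ certifies $(u,\tA^{(S)})\simeq(\pi(u),\tA'^{(S')})$, so NME yields $\text{GNN}(u,\tA^{(S)})=\text{GNN}(\pi(u),\tA'^{(S')})$. Since $\pi$ restricted to $S$ is a bijection onto $S'$, the two multisets $\{\text{GNN}(u,\tA^{(S)})\mid u\in S\}$ and $\{\text{GNN}(u',\tA'^{(S')})\mid u'\in S'\}$ coincide, and AGG returns identical values.

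For the $(\Rightarrow)$ direction, injectivity of AGG forces equality of the two multisets of node embeddings. NME then supplies a bijection $\phi:S\to S'$ and, for every $u\in S$, a permutation $\pi_u\in\Pi_n$ with $\pi_u(u)=\phi(u)$ and $\pi_u(\tA^{(S)})=\tA'^{(S')}$. Picking any single such $\pi_u$ (the empty-poset case being trivial) already gives $\tA^{(S)}\simeq\tA'^{(S')}$ as plain graphs, and Proposition~\ref{prop::psgiso2giso} converts this into the desired $(S,\tA)\simeq(S',\tA')$.

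The main subtlety I expect is checking that the hand-off to Proposition~\ref{prop::psgiso2giso} is legitimate: the witnessing permutation comes from graph-level isomorphism of $\tA^{(S)}$ and $\tA'^{(S')}$, yet we need it to respect $S$ as a \emph{poset}, not merely as an unordered set. This is guaranteed by the target-nodes-and-order-distinguishing property (Property~1 of Definition~\ref{def:Lposet}), which forces any label-preserving permutation of the labeled graph to map $S'$ onto $S$ while preserving whatever partial-order information $\tL$ encodes. Everything else is bookkeeping, and notably no extra hypothesis on $\le_S$ is required, since all order information lives inside $\tL$ and is returned to us through Proposition~\ref{prop::psgiso2giso}.
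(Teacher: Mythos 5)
Your proof is correct and follows essentially the same approach as the paper: both directions hinge on NME, injectivity of AGG, and the two defining properties of the poset labeling trick, with the direction conventions on $\pi$ handled by group invertibility. The only cosmetic difference is that for $(\Rightarrow)$ you detour through Proposition~\ref{prop::psgiso2giso} to translate $\tA^{(S)}\simeq\tA'^{(S')}$ into $(S,\tA)\simeq(S',\tA')$, whereas the paper applies Property~1 of Definition~\ref{def:Lposet} directly to the label-preserving permutation supplied by NME --- but since that proposition is itself proved by unfolding Properties~1 and~2, the two routes collapse to the same argument.
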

Theorem~\ref{thm:poexpressivity} shows that with an NME GNN, poset labeling trick will produce structural representations of posets. To augment this theorem, we also discuss 1-WL-GNNs with poset labeling trick. 1-WL-GNNs cannot capture any partial order information and cannot differentiate arbitrary different posets with the same set of nodes. Differentiating different posets with different sets is also hard for 1-WL-GNNs as they fail to capture relations between nodes. Poset labeling trick can help in both cases. 
\begin{proposition}\label{thm:BoostWLGNNonPoset}
In any non-attributed graph with $n$ nodes, if the degree of each node in the graph is between $1$ and $\big((1-\epsilon)\log n\big)^{1/(2h+2)}$ for any constant $\epsilon>0$, there exist $w(n^{2\epsilon})$ pairs of links and $w((n!)^2)$ pairs of non-isomorphic node posets such that any $h$-layer 1-WL-GNN produces the same representation, while with Hasse embedding labeling trick 1-WL-GNN can distinguish them.
\end{proposition}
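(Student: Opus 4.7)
The proof divides into two independent counts, one establishing $\omega(n^{2\epsilon})$ pairs of links and the other $\omega((n!)^2)$ pairs of posets. The link count will be extracted almost directly from Theorem~\ref{thm:link-aso-num}, whereas the poset count requires a combinatorial construction of super-factorially many partial orders on $V$ that collapse under plain 1-WL but are teased apart by the Hasse embedding.

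For the links I would invoke Theorem~\ref{thm:link-aso-num}, which already supplies $\omega(n^{2\epsilon})$ non-isomorphic pairs $(u,w),(v,w)$ on which an $h$-layer 1-WL-GNN assigns $u,v$ identical embeddings while 1-WL-GNN with the zero-one labeling trick separates them. Viewing each undirected link as a $2$-element linearly ordered poset $u<w$, the Hasse embedding from Definition~\ref{def:polabelingtrick} labels the endpoints with the distinct role-symbols $1$ and $2$, strictly refining the zero-one labels $(1,1)$. Therefore every pair separable by zero-one labeling remains separable by the Hasse embedding, and the $\omega(n^{2\epsilon})$ count transfers verbatim.

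For the posets I would fix the underlying node set of every poset to be all of $V$. Two observations then drive the argument. First, plain $h$-layer 1-WL-GNN maps every such poset to the same representation, because pooling over $S=V$ collapses to the unordered multi-set $\{h_v:v\in V\}$, independent of the partial order. Second, to supply a super-factorial family I would use bipartite posets: partition $V$ into halves $V_0,V_1$ of size $n/2$ and let the partial order be an arbitrary bipartite cover relation from $V_0$ up to $V_1$, giving $2^{n^2/4}$ labeled posets. A standard probabilistic argument (in the spirit of Kleitman--Rothschild) shows that a $(1-o(1))$-fraction of these are \emph{Hasse-rigid}, meaning their Hasse diagram has trivial automorphism group, so that the Hasse embedding in Definition~\ref{def:polabelingtrick} assigns $V$ a tuple of $n$ distinct symbols. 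On such a uniformly coloured $\tA^{(S)}$ the $h$-layer 1-WL-GNN is equivalent to colour-isomorphism testing of its $h$-hop balls, and by Proposition~\ref{prop::psgiso2giso} two Hasse-rigid posets $S,S'$ that are poset-non-isomorphic in $\tA$ produce 1-WL-distinguishable labeled graphs. Quotienting the $2^{\Omega(n^2)}$ Hasse-rigid posets by graph-automorphism orbits of $\tA$ (of total size at most $|\mathrm{Aut}(\tA)|^2\le(n!)^2$) still leaves $2^{\Omega(n^2)}/(n!)^2=2^{\Omega(n^2)}$ non-isomorphic distinguishable pairs, which dominates $(n!)^2=2^{O(n\log n)}$, so $\omega((n!)^2)$ pairs are obtained.

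The main obstacle I expect is reconciling the restricted depth $h$ of 1-WL-GNN with the global nature of Hasse-rigidity: at depth $h$ the GNN only ingests $h$-hop balls of size $\mathrm{polylog}(n)$, whereas the label differences between two Hasse-rigid bipartite posets are spread over $\Omega(n^2)$ cover pairs. A union bound has to certify that, for at least $\omega((n!)^2)$ of the candidate pairs, these differences intersect some $h$-hop ball in $\tA$, so that the $h$-layer 1-WL-GNN genuinely separates $\tA^{(S)}$ from $\tA^{(S')}$. Carrying this out under the prescribed degree envelope $1\le\deg\le((1-\epsilon)\log n)^{1/(2h+2)}$, and combining it with the automorphism bound on $\tA$, is the most technical step; the remaining ingredients---Proposition~\ref{prop::psgiso2giso}, the Kleitman--Rothschild-style count of Hasse-rigid bipartite posets, and the refinement $(1,1)\mapsto(1,2)$ used in the link step---are essentially off-the-shelf.
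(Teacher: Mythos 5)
Your link count matches the paper's step: Theorem~\ref{thm:link-aso-num} is invoked directly, and the Hasse embedding of a two-element chain refines the zero-one labels, so separability transfers.

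Your poset count takes a genuinely different route from the paper's, and as written it has a gap that you yourself flag. The inference ``by Proposition~\ref{prop::psgiso2giso} two Hasse-rigid posets that are poset-non-isomorphic in $\tA$ produce 1-WL-distinguishable labeled graphs'' is not sound: Proposition~\ref{prop::psgiso2giso} translates poset-graph isomorphism into labeled-graph isomorphism, but an $h$-layer 1-WL-GNN is not an isomorphism test, so non-isomorphism of $\tA^{(S)}$ and $\tA^{(S')}$ does not give $h$-round distinguishability. That is precisely the obstacle you name (bounded receptive field versus global rigidity) and leave unresolved. The route can be repaired, but not via rigidity: by Definition~\ref{def:polabelingtrick} each labeled node $u\in S$ carries the \emph{global} isomorphism-type token $\text{iso}(u,\gH_S)$, so whenever $\gH_S\not\simeq\gH_{S'}$ no node of $\gH_S$ shares a label with any node of $\gH_{S'}$; the label multisets of $\tA^{(S)}$ and $\tA^{(S')}$ are therefore disjoint, and the pooled poset representation already differs at depth $0$, before any message passing. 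Restricting the Kleitman--Rothschild pool to pairs with non-isomorphic Hasse diagrams (there are still $2^{\Omega(n^2)}=\omega\big((n!)^2\big)$ isomorphism classes, hence enough pairs) closes the argument without any $h$-hop-ball analysis; rigidity and the receptive-field union bound become unnecessary.

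The paper's own poset count works very differently and more locally: it recycles the $\omega(n^{2\epsilon})$ triples $(u,v,w)$ from the link step, and for each triple and each order on a subset $V'\subseteq\tilde V_{u,v,w}$ it forms a pair of posets on $V'\cup\{u,w\}$ and $V'\cup\{v,w\}$ carrying the same order on $V'$. Vanilla 1-WL pools identical node embeddings because $u,v$ are locally $h$-isomorphic and the remaining underlying nodes are shared, while under the labeling trick the one-hop ball of $w$ reveals whether its labeled partner is adjacent ($(u,w)\in E$) or not ($(v,w)\notin E$). Since the distinguishing event lives inside a single hop, the receptive-field issue your sketch worries about never arises in the paper's construction.
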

Proposition~\ref{thm:BoostWLGNNonPoset} illustrates that poset labeling trick can help 1-WL-GNNs distinguish significantly more pairs of node posets.

\section{Subset Labeling Trick for Multi-Node Representation Learning}
Besides set labeling trick, there exist other methods that append extra features to the adjacency to boost GNNs. Among them, ID-GNN~\citep{you2021identity} and NBFNet~\citep{NBFNet} assign special features to only one node in the target node set and also achieve outstanding performance. In this section, we propose subset labeling trick. As its name implies, subset labeling trick assigns labels only to a subset of nodes in the target node set. We compare set labeling trick with subset labeling trick in different problem settings. In some cases, subset labeling trick is even more expressive than set labeling trick.
\subsection{Subset Labeling Trick}
Similar to set labeling trick, subset labeling trick also have two properties. 
\begin{definition}\label{def:subsetlabelingtrick}
\textbf{(subset labeling trick)} Given set $S$ in graph $\tA$ and its subset $P\subseteq S$, we stack a labeling tensor $\tL(P, \tA) \in \R^{n\times n\times d}$ in the third dimension of $\tA$ to get a new $\tA^{(P)} \in \R^{n\times n\times (k+d)}$, where $\tL$ satisfies: $\forall S,\tA, S',\tA', P\subseteq S, P'\subseteq S',\pi \in \Pi_n$,
\begin{enumerate}
    \item (\textit{target-subset-distinguishing}) ~$\tL{(P, \tA)} = \pi(\tL{(P',\tA')}) \Rightarrow P = \pi(P')$.
    \item (\textit{permutation equivariance}) ~~$P=\pi(P'), \tA = \pi(\tA') \Rightarrow \tL{(P, \tA)} = \pi(\tL{(P', \tA')})$. 
\end{enumerate}
\end{definition}

Like set labeling trick, subset labeling trick distinguishes the selected subset in the target set and keeps permutation equivariance. However, it does not need to distinguish all target nodes. Subset($k$) labeling trick means the subset size is $k$.

Subset zero-one labeling trick is a simplest subset labeling trick fulfilling the requirements in Definition~\ref{def:subsetlabelingtrick}.

\begin{definition}\label{def:subsetzolabeling}
\textbf{(Subset zero-one labeling trick)} Given a graph $\tA$, a set of nodes $S$ to predict, and a subset $P\subseteq S$, we give it a diagonal labeling matrix $\tL(P, \tA) \in \R^{n\times n\times 1}$ such that $\tL(P, \tA)_{i,i,1} = 1$ if $i\in P$ and $\tL(P, \tA)_{i,i,1} = 0$ otherwise.
\end{definition}

To explain a bit, the subset zero-one labeling trick assigns label $1$ to nodes in the selected subset $P$, and label $0$ to all nodes not in $P$. It only contains the subset identity information.

Then a natural problem arises: how to select subset $P$ from the target node set $S$? Motivated by previous methods, we propose two different routines: subset-pooling and one-head.

\subsection{How to Select Subset}
\subsubsection{Subset Pooling}

ID-GNN~\citep{you2021identity} proposes an a GNN for node set learning. For each node in the target node set, it labels the node one and all other nodes zero. Then, it uses a 1-WL-GNN to produce the representations of the node. By pooling all node representations, ID-GNN produces the node set representation. As isomorphic node sets can have different embeddings due to different subset selections, choosing only one node randomly can break permutation equivariance. But pooling the representation of all subset selection eliminates the non-determinism caused by selection and solves this problem.  Generalizing this method, we propose the \textit{subset pooling routine}. Subset($k$) pooling enumerates all size-$k$ subsets and then pools the embeddings of them.
\begin{equation*}
    \text{AGG} (\{\text{GNN}(S, \tA^{(P)})\mid P\subseteq S, |P|=k\}),  
\end{equation*}
where \text{AGG} is an injective set aggregation function.

As for all $\pi\in\Pi_n$ and target node set $S$ in graph $\tA$, 
\begin{equation*}
    \text{AGG}(\!\{\text{GNN}(S,\tA^{\!(P)\!})\mid P\!\subseteq \!S, |P|\!=\!k\}\!)=\text{AGG}(\!\{\text{GNN}(\pi(S), \pi(\tA)^{\!(P)\!})\mid P\!\subseteq\!\pi(S),|P|\!=\!k\}\!),
\end{equation*}
the subset pooling routine keeps permutation equivariance.

\subsubsection{One Head Routine}
Contrary to the subset pooling routine, link prediction model NBFNet~\citep{NBFNet} labels only one head of the link. This design breaks permutation equivariance but improves the scalability. 
We propose the \textit{one head routine} to generalize this method to general node set tasks. It selects only one subset to label. Some policies are shown in the following.

\begin{itemize}
    \item \textit{Random Selection.} For a target set, we can select a subset in it randomly. For example, we can randomly choose one head of each target edge in link prediction task. 
    \item \textit{Graph Structural Selection.} We can select a node with maximum degree in the target node set. Note that it cannot keep permutation equivariance either.
    \item \textit{Partial Order Relation Selection.} If the least element exists in a poset, we can choose it as the subset. For example, in directed link prediction task, the source node of each link can be the subset. This method can keep permutation equivariance. 
\end{itemize}

\subsubsection{Complexity} The efficiency gain of subset labeling trick compared with set labeling trick comes from sharing results across target node sets. 
GNN with set labeling trick has to compute the representations of each target node set separately. With the target node distinguishing property, no labeling trick can remain unchanged across different target nodes sets. Therefore, the input adjacency will change and node representations have to be reproduced by the GNN. 

In contrast, GNN with subset labeling trick can compute the representations of multiple node sets with the same selected subset simultaneously. The subset label is only a function of the selected subset and the graph, so we can maintain the subset label for different target node sets by choosing the same subset. For example, in link prediction task, all links originating from a node share this same source node. By choosing the source node as the subset, these links have the same label and input adjacency to GNN, so the node representations produced by the GNN can be reused. This routine is especially efficient in the knowledge graph completion setting, where a query involves predicting all possible tail entities connected from a head entity with a certain relation.

\subsection{Expressivity}
When the subset size $k$ equals the target node set size $|S|$, subset labeling trick is equivalent to set labeling trick. What is more interesting is, when $k=|S|-1$, subset labeling trick with the subset pooling routine can achieve the same power as set labeling trick.
\begin{theorem}\label{thm:k-1plabelingexpressivity}
Given an \text{NME GNN}, for any graph $\tA,\tA'$, and node sets $S,S'$ in $\tA,\tA'$ respectively, we have
\begin{multline}
\text{AGG}(\!\{\!\text{GNN}(S, \tA^{(\!P\!)})\mid P\!\subseteq\!S, |P|\!=\!|S|-1\!\}\!)\!=\!\text{AGG}(\!\{\!\text{GNN}(S', \tA'^{(\!P'\!)})\mid P'\!\subseteq\!S', |P'|\!=\!|S'|-1\!\}\!) \\\Leftrightarrow (S,\tA) \!\simeq\! (S',\tA').
\end{multline}
\end{theorem}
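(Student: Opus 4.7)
The plan is to prove both directions separately. For the forward direction, assuming $(S,\tA)\simeq(S',\tA')$ via some $\pi\in\Pi_n$, I set $\phi(P):=\pi(P)$. This is a bijection between the size-$(|S|-1)$ subsets of $S$ and of $S'$, and permutation equivariance of both the labeling tensor (Definition~\ref{def:subsetlabelingtrick}) and the NME GNN yields $\text{GNN}(i,\tA^{(P)})=\text{GNN}(\pi(i),\tA'^{(\phi(P))})$ for every $i\in S$; the inner multisets and hence the outer aggregations coincide.

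For the backward direction, suppose the two outer aggregations agree. Injectivity of $\text{AGG}$ forces the outer multisets themselves to be equal; since they have cardinalities $|S|$ and $|S'|$ respectively, we obtain $|S|=|S'|=:k$ and a bijection $\phi$ between the size-$(k-1)$ subsets of $S$ and of $S'$ such that, for every $P$, $M_P:=\{\text{GNN}(i,\tA^{(P)}):i\in S\}$ equals $M_{\phi(P)}:=\{\text{GNN}(i,\tA'^{(\phi(P))}):i\in S'\}$ as multisets. I fix any such $P$ and write $v_P$, $v_{\phi(P)}$ for the unique elements of $S\setminus P$ and $S'\setminus\phi(P)$.

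The technical heart is the following dichotomy: for every $p\in P$ and every $q\notin P$, $\text{GNN}(p,\tA^{(P)})\neq\text{GNN}(q,\tA^{(P)})$. Otherwise NME would furnish an automorphism $\sigma$ of $\tA^{(P)}$ with $\sigma(p)=q$; such $\sigma$ preserves both $\tA$ and $\tL(P,\tA)$, so the target-subset-distinguishing clause of Definition~\ref{def:subsetlabelingtrick} forces $\sigma(P)=P$, contradicting $\sigma(p)=q\notin P$. Consequently, inside $M_P$ the value $\text{GNN}(v_P,\tA^{(P)})$ has multiplicity exactly one, being contributed only by the unique element of $S$ lying outside $P$; the symmetric statement holds in $M_{\phi(P)}$ for $v_{\phi(P)}$. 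Multiset equality therefore pairs these singleton contributions, yielding $\text{GNN}(v_P,\tA^{(P)})=\text{GNN}(v_{\phi(P)},\tA'^{(\phi(P))})$.

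A final application of NME to this equality produces $\pi\in\Pi_n$ with $\pi(v_P)=v_{\phi(P)}$ and $\pi(\tA^{(P)})=\tA'^{(\phi(P))}$; the latter decomposes into $\pi(\tA)=\tA'$ together with $\pi(\tL(P,\tA))=\tL(\phi(P),\tA')$, and a second invocation of target-subset-distinguishing forces $\pi(P)=\phi(P)$. Hence $\pi(S)=\pi(P)\cup\{\pi(v_P)\}=\phi(P)\cup\{v_{\phi(P)}\}=S'$ and $\pi(\tA)=\tA'$, which is precisely $(S,\tA)\simeq(S',\tA')$. The main obstacle is the dichotomy step: one must argue that subset membership is faithfully readable from the NME embedding so that the multiplicity-one entry for $v_P$ transfers cleanly across the identification of the two inner multisets. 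Once this is in hand the remainder is Theorem~\ref{thm:labeltrick}-style bookkeeping.
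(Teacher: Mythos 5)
Your forward direction and the outer/inner bookkeeping match the paper's proof, but the middle step of the backward direction has a real gap. You prove the \emph{intra}-graph dichotomy: for $p\in P$ and $q\notin P$ one has $\text{GNN}(p,\tA^{(P)})\neq\text{GNN}(q,\tA^{(P)})$, so $\text{GNN}(v_P,\tA^{(P)})$ has multiplicity one inside $M_P$, and likewise $\text{GNN}(v_{\phi(P)},\tA'^{(\phi(P))})$ has multiplicity one inside $M_{\phi(P)}$. You then assert that multiset equality ``pairs these singleton contributions.'' That does not follow: if both multisets equal $\{a,b,c\}$ with $a,b,c$ pairwise distinct, then every element has multiplicity one, so the singleton picked out in $M_P$ could a priori equal any of the three entries of $M_{\phi(P)}$, not necessarily the one contributed by $v_{\phi(P)}$. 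What you actually need is the \emph{cross}-graph statement: $\text{GNN}(v_P,\tA^{(P)})\neq\text{GNN}(p',\tA'^{(\phi(P))})$ for every $p'\in\phi(P)$. This is provable by the same NME-plus-property-1 argument (if equal, NME gives $\pi$ with $v_P=\pi(p')$ and $\tA^{(P)}=\pi(\tA'^{(\phi(P))})$, property 1 forces $\pi(\phi(P))=P$, hence $v_P\in P$, contradiction), but it is not a consequence of the intra-graph dichotomy you established.

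The paper's argument avoids the whole detour. From the inner multiset equality it simply picks \emph{any} $u_0'\in S'$ with $\text{GNN}(v_0,\tA^{(S-\{v_0\})})=\text{GNN}(u_0',\tA'^{(S'-\{v_0'\})})$, without first claiming $u_0'$ lies outside $\phi(P)$. NME then hands over a $\pi$ with $\pi(v_0)=u_0'$, $\pi(\tA)=\tA'$ and (by property~1) $\pi(S-\{v_0\})=S'-\{v_0'\}$; since $v_0\notin S-\{v_0\}$ and $\pi$ is a bijection, $u_0'=\pi(v_0)\notin S'-\{v_0'\}$, so $u_0'=v_0'$. In other words, the identification $u_0'=v_0'$ is a \emph{conclusion} of the NME argument, not a premise you must secure beforehand via multiplicity counting. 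Replacing your pairing step with this direct argument would close the gap and make your dichotomy lemma unnecessary.
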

Theorem~\ref{thm:k-1plabelingexpressivity} illustrates that when the selected subset is of $|S|-1$ size, GNNs can produce structural representation with the subset-pooling routine. This theorem is especially useful when $|S|=2$, in other words, link prediction task. Labeling only one node each time and pooling the two results can achieve the same high expressivity.

Under the one head routine, we have the following theorem.
\begin{theorem}\label{thm:k-1plabelingexpressivity2}
Given an \text{NME GNN}, for any graph $\tA,\tA'$, and node sets $S,S'$ in $\tA,\tA'$ respectively, we have
\begin{multline}
(S,\tA) \!\not\simeq\! (S',\tA')\Rightarrow \\
\forall P\subseteq S, P'\subseteq S', |P|=|S|-1, |P'|=|S'|-1,
\text{GNN}(S, \tA^{(P)})\neq \text{GNN}(S, \tA'^{(P')}).    
\end{multline}
\end{theorem}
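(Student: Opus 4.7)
I will prove the contrapositive: assuming $\text{GNN}(S, \tA^{(P)}) = \text{GNN}(S', \tA'^{(P')})$ for some $P \subseteq S$, $P' \subseteq S'$ with $|P| = |S|-1$ and $|P'| = |S'|-1$, I will derive $(S,\tA) \simeq (S',\tA')$. The decisive observation is that with $|S \setminus P| = |S' \setminus P'| = 1$, each of $S$ and $S'$ contains exactly one \emph{unlabeled} target node; a label-preserving isomorphism mapping some $i \in S$ to some $j \in S'$ therefore has essentially no freedom on the labeled portion $P$, and the single unlabeled target node $q \in S \setminus P$ is automatically matched to the single unlabeled target node $q' \in S' \setminus P'$.

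\textbf{Main steps.} First, invoking the injectivity of the set aggregator $\text{AGG}$ (carried over from the convention in Theorem~\ref{thm:labeltrick}), the hypothesized equality upgrades to the multiset identity $\{\text{GNN}(i,\tA^{(P)}) \mid i \in S\} = \{\text{GNN}(j,\tA'^{(P')}) \mid j \in S'\}$. Let $q$ be the unique element of $S \setminus P$; there exists $j^\star \in S'$ with $\text{GNN}(q,\tA^{(P)}) = \text{GNN}(j^\star,\tA'^{(P')})$. By the NME property, this equality forces $(q, \tA^{(P)}) \simeq (j^\star, \tA'^{(P')})$, so there is $\pi \in \Pi_n$ with $\pi(q) = j^\star$ and $\pi(\tA^{(P)}) = \tA'^{(P')}$. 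Splitting this tensor equality into the original graph channels and the labeling channels yields $\pi(\tA) = \tA'$ and $\pi(\tL(P,\tA)) = \tL(P',\tA')$, and the target-subset-distinguishing axiom of Definition~\ref{def:subsetlabelingtrick} then forces $\pi(P) = P'$.

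Since $q \notin P$, we have $\pi(q) \notin \pi(P) = P'$, so $j^\star \in S' \setminus P' = \{q'\}$ and hence $j^\star = q'$. Combining $\pi(q) = q'$ with $\pi(P) = P'$ gives $\pi(S) = \pi(P \cup \{q\}) = P' \cup \{q'\} = S'$, which together with $\pi(\tA) = \tA'$ is precisely the definition of $(S,\tA) \simeq (S',\tA')$, contradicting the hypothesis of the theorem. Contrapositively, non-isomorphism of $(S,\tA)$ and $(S',\tA')$ implies $\text{GNN}(S,\tA^{(P)}) \neq \text{GNN}(S',\tA'^{(P')})$ for every admissible choice of $P, P'$.

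\textbf{Anticipated obstacle.} I expect no substantial technical difficulty: the real work has been offloaded to the NME hypothesis on the GNN and to the distinguishing axiom of subset labeling trick, and the ``one head'' argument reduces to a single pigeonhole step on the unlabeled target node. The one point that warrants care is making sure the aggregator is treated as injective (as in Theorem~\ref{thm:labeltrick}); if the theorem is read strictly without this assumption, a canonical DeepSets-style injective aggregator can be inserted to recover the multiset identity used in the opening step. A second minor check is that the decomposition $\pi(\tA^{(P)}) = \tA'^{(P')} \Rightarrow \pi(\tA) = \tA'$ and $\pi(\tL(P,\tA)) = \tL(P',\tA')$ is immediate from the definition of $\tA^{(P)}$ as the concatenation of $\tA$ with $\tL(P,\tA)$ along the feature dimension.
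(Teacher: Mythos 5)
Your proof is correct and takes essentially the same route as the paper's: both arguments apply the NME property to the representation of the single unlabeled target node, invoke the target-subset-distinguishing axiom to force the induced permutation to carry $P$ onto $P'$, and then use the pigeonhole observation that the unlabeled node of $S$ must therefore be sent to the unlabeled node of $S'$, yielding $\pi(S)=S'$ and hence $(S,\tA)\simeq(S',\tA')$. The only cosmetic differences are that you phrase the argument by contraposition rather than contradiction and make the implicit injectivity of $\text{AGG}$ explicit, both of which are faithful to the paper.
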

Though one-head routine may produce different representations for isomorphic sets, the above theorem shows that it maintains the capacity to differentiate non-isomorphic sets.  


For larger target node set, subset($|S|-1$) labeling trick is of little use, as the $|S|-1$ labeling can hardly be reused by other target sets. In contrast,  we focus on the expressivity of subset($1$) labeling trick, since it is much more common for target node sets to share node rather than sharing another $(|S|-1)$ node set.

When using NME GNN, according to Theorem~\ref{thm:labeltrick}, set labeling trick leads to the highest expressivity. The problem left is whether subset($1$) labeling trick can help NME GNN produce structural representations.
\begin{proposition}\label{prop:NME GNN-L>PL}
Given an \text{NME GNN}, 
there exists pairs of set $S$ in graph $\tA$ and set $S'$ in graph $\tA'$ such that $\text{AGG}(\{\text{GNN}(u,\tA^{(u)})\mid u\in S\})=\text{AGG}(\{\text{GNN}(u',\tA'^{(u')})\mid u'\in S'\})$ while $(S,\tA)\!\not\simeq\! (S',\tA')$.
\end{proposition}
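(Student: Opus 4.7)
The plan is to exhibit a direct counterexample built from the graph $\tA$ in Figure~\ref{node_iso}. I set $\tA' = \tA$, $S = \{v_1, v_2\}$, $S' = \{v_1, v_3\}$, and instantiate the labeling as the subset zero-one labeling of Definition~\ref{def:subsetzolabeling}. First I would verify that $(S,\tA) \not\simeq (S',\tA')$: the caption of Figure~\ref{node_iso} explicitly states that the links $(v_1,v_2)$ and $(v_1,v_3)$ are non-isomorphic (only the former has a common neighbor and lies inside a common hexagon with $v_1$), so no automorphism of $\tA$ can carry $\{v_1,v_3\}$ onto $\{v_1,v_2\}$.

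Next I would show that the two aggregated embeddings agree. The same caption records that $v_2$ and $v_3$ occupy the same orbit of $\tA$, so there exists $\pi \in \Pi_n$ with $\pi(\tA) = \tA$ and $\pi(v_2) = v_3$. Because the subset zero-one labeling is permutation equivariant, this $\pi$ also satisfies $\pi(\tA^{(v_2)}) = \tA^{(v_3)}$, witnessing $(v_2, \tA^{(v_2)}) \simeq (v_3, \tA^{(v_3)})$ as rooted labeled graphs. The NME property then yields $\text{GNN}(v_2, \tA^{(v_2)}) = \text{GNN}(v_3, \tA^{(v_3)})$, while $\text{GNN}(v_1, \tA^{(v_1)})$ trivially appears in both multisets. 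The multisets over $S$ and $S'$ therefore coincide, and any AGG (which by the paper's convention operates on multisets) returns the same value.

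The only subtle verification is that every automorphism swapping $v_2$ and $v_3$ necessarily moves $v_1$; otherwise $\{v_1,v_2\}$ and $\{v_1,v_3\}$ would already be set-isomorphic. This is supplied by the caption's observation $(v_1,v_2) \simeq (v_4, v_3)$: the orbit representative paired with $v_3$ under the swap is $v_4$, not $v_1$, so the automorphism realizing $v_2 \leftrightarrow v_3$ maps $v_1$ to $v_4$ rather than fixing it. Conceptually, this counterexample exposes that subset($1$) labeling trick produces each target node's embedding independently of the remaining target nodes, inheriting exactly the ``dependence blindness'' criticized for GAE in Section~\ref{gae_limitation}. Consequently, subset($1$) labeling is strictly weaker than set labeling even when the underlying backbone is an NME GNN, establishing the proposition.
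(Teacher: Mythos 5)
Your proof is correct and works, but you use a different example than the paper does. The paper's proof is a one-liner pointing to Figure~\ref{fig:PWL3}, which exhibits two non-isomorphic \emph{three}-node sets whose pairwise substructures all match; you instead pull $S=\{v_1,v_2\}$ and $S'=\{v_1,v_3\}$ out of Figure~\ref{node_iso}. Both examples exploit the same root cause: under the formula $\text{AGG}(\{\text{GNN}(u,\tA^{(u)})\mid u\in S\})$, each summand depends only on the isomorphism class of $(u,\tA^{(u)})$, which --- because the subset label marks $u$ alone --- coincides with the orbit of $u$ in $\tA$; hence the aggregate is determined by the multiset of node orbits of $S$, and any two sets sharing that multiset are conflated. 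Your two-node counterexample is arguably more economical for this particular statement (the paper reuses its three-node figure because the same picture also services the 1-WL-GNN discussion around Proposition~\ref{prop:WLGNN-L<>PL}, where pairwise isomorphism matters). Your closing remark is also a genuine sharpening: for an NME GNN, $(u,\tA^{(u)})\simeq(v,\tA'^{(v)})$ iff $(u,\tA)\simeq(v,\tA')$, so the quantity in the proposition is literally equivalent to the GAE readout $\text{AGG}(\{\text{GNN}(u,\tA)\mid u\in S\})$, and the proposition could alternatively be deduced from Proposition~1 directly --- a connection the paper does not draw explicitly. One small stylistic note: you fix the subset zero-one labeling, but the only property your argument uses is permutation equivariance, which every valid subset labeling trick satisfies, so the same counterexample rules out all of them.
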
 
Proposition~\ref{prop:NME GNN-L>PL} shows that with NME GNN, subset($1$) labeling trick cannot learn structural representation and is less expressive than set labeling trick. However, using 1-WL-GNNs, the expressivity of subset($1$) labeling trick is incomparable to that of set labeling trick. In other words, there exists non-isomorphic node sets which are distinguishable by subset($1$) labeling trick and indistinguishable by set labeling trick, and vice versa.
\begin{proposition}\label{prop:WLGNN-L<>PL}
\sloppy Given a 1-WL-GNN, there exists $S,\tA, S',\tA'$ such that  $(S,\tA)\!\not\simeq\! (S',\tA') \nonumber$, $\text{AGG}(\{\text{GNN}(u,\tA^{(u)})\mid u\in S\})\neq\text{AGG}(\{\text{GNN}(u',\tA'^{(u')})\mid u'\in S'\})$ while $\text{GNN}(S, \tA^{S})=\text{GNN}(S', {\tA'}^{(S')})$. There also exists $S,\tA, S',\tA'$ such that  $(S,\tA)\!\not\simeq\! (S',\tA') \nonumber$, $\text{AGG}(\{\text{GNN}(u,\tA^{(u)})\mid u\in S\})=\text{AGG}(\{\text{GNN}(u',\tA'^{(u')})\mid u'\in S'\})$ while $\text{GNN}(S, \tA^{(S)})\neq\text{GNN}(S', {\tA'}^{(S')})$.
\end{proposition}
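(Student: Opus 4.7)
The plan is to establish the incomparability by exhibiting an explicit counterexample for each direction, verifying the claimed separations through direct 1-WL color refinement.

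For the direction where set labeling beats subset($1$), I would take $\tA=\tA'=C_4$, the 4-cycle on $v_1,v_2,v_3,v_4$, with $S=\{v_1,v_2\}$ adjacent and $S'=\{v_1,v_3\}$ diagonal. These two targets are non-isomorphic in $C_4$. Because $C_4$ is vertex-transitive, labeling any single vertex produces mutually isomorphic labeled graphs, so $\text{GNN}(u,\tA^{(u)})$ is the same value $x$ for every $u\in V(C_4)$; hence both subset($1$) pools equal the multiset $\{x,x\}$. Under set labeling, however, one 1-WL step already distinguishes them: each target in $\tA^{(S)}$ has color $(1,\{1,0\})$ (one labeled plus one unlabeled neighbor), while each target in $\tA^{(S')}$ has color $(1,\{0,0\})$ (two unlabeled neighbors), so the injective aggregation separates the pools.

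For the direction where subset($1$) beats set labeling I would use the classical 1-WL-equivalent pair: take $\tA=C_6$ with $S=\{v_1,v_4\}$ (antipodal), and $\tA'=2K_3$ (two disjoint triangles on $\{v_1,v_2,v_3\}\cup\{v_4,v_5,v_6\}$) with $S'=\{v_1,v_4\}$ (one vertex per triangle). These configurations are non-isomorphic since the underlying graphs are non-isomorphic. The key observation is that when $S,S'$ are jointly labeled, both graphs become $2$-regular with the identical profile ``every labeled vertex has two unlabeled neighbors; every unlabeled vertex has one labeled and one unlabeled neighbor.'' Starting 1-WL from this initialization produces just two color classes $A$ (labeled) and $B$ (unlabeled) with neighbor multisets $\{B,B\}$ and $\{A,B\}$ in both graphs, and this partition is a fixed point, so the stable colorings of $\tA^{(S)}$ and $\tA'^{(S')}$ coincide and set labeling fails. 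For subset($1$) I run 1-WL on $C_6^{(\{v_1\})}$ and $(2K_3)^{(\{v_1\})}$; the former stabilizes to four orbits (distances $0,1,2,3$ from $v_1$) while the latter stabilizes to three (the labeled vertex, its two triangle-mates, and the disjoint triangle). A three-iteration refinement shows the difference propagates to $v_1$ itself: in $C_6$ the color class $(0,\{0,0\})$ appears two hops from $v_1$ and feeds back into $v_1$'s refined neighbor multiset, while in $2K_3$ no such class enters $v_1$'s neighborhood, so $\text{GNN}(v_1,C_6^{(\{v_1\})})\ne\text{GNN}(v_1,(2K_3)^{(\{v_1\})})$. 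Vertex-transitivity of each graph collapses each subset($1$) pool to $\{x_i,x_i\}$ with $x_1\ne x_2$, so the injective aggregation separates them.

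The first direction is routine once vertex-transitivity is invoked; the main obstacle is the second direction, which requires finding two non-isomorphic configurations whose joint labelings are indistinguishable not just at one 1-WL step but at the stable coloring. I would overcome this by exploiting the classical $(C_6,2K_3)$ pair, whose 1-WL equivalence is preserved under the symmetric label pattern chosen here because the pattern respects the shared $2$-regular/``alternating'' local structure of both graphs. Verifying stability amounts to checking that the two-class refinement is indeed a fixed point in both labeled graphs, which follows from a single case analysis on labeled and unlabeled vertices.
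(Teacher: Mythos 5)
Your two counterexamples are essentially the paper's own strategy (the paper cites Figures~\ref{fig:PWL} and~\ref{fig:PWL3} without further argument), and your $C_6$ vs.\ $2K_3$ construction for the ``subset($1$) succeeds, set labeling fails'' direction is sound: the joint labeling of antipodal vertices in $C_6$ and one vertex per triangle in $2K_3$ produces the same stable two-class $1$-WL partition, while a single labeled vertex refines $C_6$ to four orbits but $2K_3$ to three, and the difference reflects back into the labeled vertex's color by iteration three. Vertex-transitivity then collapses each subset pool to a two-element multiset with distinct entries.

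The $C_4$ example for the other direction is fragile in a way worth flagging. You read the quantity in the proposition literally as $\text{GNN}(u,\tA^{(u)})$ --- the labeled node's own color in its self-labeled graph --- and on a vertex-transitive graph this is indeed constant across $u$, so any two size-$2$ subsets pool identically. That is a valid reading of the displayed formula. But the paper's subset-pooling is actually defined earlier as $\text{AGG}(\{\text{GNN}(S,\tA^{(P)})\mid P\subseteq S,|P|=1\})$, i.e.\ one aggregates the \emph{set}-level readout $\text{AGG}(\{\text{GNN}(v,\tA^{(u)})\mid v\in S\})$ over choices of the labeled node $u\in S$, and the surrounding discussion (``captures pair-wise relation only'') confirms this is the intended semantics. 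Under that reading your $C_4$ example collapses: in $C_4^{(v_1)}$ the stable colors of a neighbor and of the antipode of $v_1$ are distinct, so $\text{GNN}(S,C_4^{(v_1)})$ for adjacent $S$ differs from $\text{GNN}(S',C_4^{(v_1)})$ for diagonal $S'$, and subset($1$) does separate the two links. The paper's Figure~\ref{fig:PWL3} uses three-node targets engineered so that every labeled pair from one set is isomorphic to a labeled pair from the other --- precisely to kill the pairwise information the fuller pooling retains --- and this is the construction you would need if the fuller pooling is intended. As it stands, your proof is correct for the literal statement but does not survive the more standard reading; you should either adopt a target of size $\ge 3$ with matched pair types, or explicitly justify the single-node-embedding reading.
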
 
And 1-WL-GNN with subset($1$) labeling trick can also differentiate many pairs of node sets that 1-WL-GNN cannot differentiate, as shown in the following theorem.
\begin{proposition}\label{thm:PlabelingBoostWLGNN}
In any non-attributed graph with $n$ nodes, if the degree of each node in the graph is between $1$ and $\big((1-\epsilon)\log n\big)^{1/(2h+2)}$ for any constant $\epsilon\in \left(\frac{\log \log n}{(2h+2)\log n}, 1\right)$, there exist $w(n^{2\epsilon})$ pairs of links and $w(2^n n^{3\epsilon-1})$ pairs of non-isomorphic node sets such that any $h$-layer 1-WL-GNN produces the same representation, while with subset(1) labeling trick 1-WL-GNN can distinguish them.
\end{proposition}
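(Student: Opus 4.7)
The plan is to import the existing graph constructions from Theorem~\ref{thm:link-aso-num} and Proposition~\ref{thm:BoostWLGNNsubg} and upgrade the distinguishability arguments from the set/zero-one labeling trick to subset(1) labeling trick by exploiting the pooling over the $|S|$ single-node labelings. The target bounds $w(n^{2\epsilon})$ (for links) and $w(2^n n^{3\epsilon-1})$ (for node sets) are naturally compatible with those constructions, and the improvement from $n^{2\epsilon-1}$ to $n^{3\epsilon-1}$ is expected to come from the additional flexibility that subset(1) labeling provides by varying which single node is labeled.

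For the links part, I would start from the same random-graph construction used in the proof of Theorem~\ref{thm:link-aso-num}, which produces $\omega(n^{2\epsilon})$ pairs of non-isomorphic links $(u,w),(v,w)$ where $h$-layer 1-WL-GNN assigns $u$ and $v$ the same color in the vanilla graph. I would first argue that in this construction the local rooted trees of depth $h$ around $u$ and $v$ differ precisely because of the placement of $w$; once any single endpoint carries a distinct label, the label signal propagates into these rooted trees and reaches $u$ (resp.~$v$) within $h$ iterations, breaking the symmetry. Concretely, the subset(1) aggregated link representation is $\text{AGG}(\text{link\_emb}(u,w,\tA^{(\{u\})}), \text{link\_emb}(u,w,\tA^{(\{w\})}))$; by injectivity of $\text{AGG}$ it suffices to find one subset choice that yields different values on the two sides, and I would show that the choice $P=\{w\}$ (common to both) reduces to exactly the situation analyzed in Theorem~\ref{thm:link-aso-num} after one relabeling of the rooted tree, so the same counting argument transfers.

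For the node-set part, I would lift the subgraph construction of Proposition~\ref{thm:BoostWLGNNsubg}: for every distinguishable link pair $(u,w),(v,w)$, attach to the graph an arbitrary subset $T\subseteq V\setminus\{u,v,w\}$ and form the two candidate sets $S=\{u\}\cup T$ and $S'=\{v\}\cup T$. The subset(1) pooling $\text{AGG}\{\text{GNN}(S,\tA^{(\{x\})})\mid x\in S\}$ then contains the term with $x=u$ on the first side and $x=v$ on the second, and these terms can be distinguished by the link-level argument above because the labeled graph $\tA^{(\{u\})}$ versus $\tA^{(\{v\})}$ is already what makes the $(u,w),(v,w)$ pair separable. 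This yields the $2^n$ factor from the choice of $T$; the extra $n^\epsilon$ factor (relative to the $n^{2\epsilon-1}$ bound of Proposition~\ref{thm:BoostWLGNNsubg}) is gained by additionally varying which node of the core pair plays the role of "labeled head"—because subset(1) labels only one node, each of the $\Theta(n^\epsilon)$ candidate partners of a fixed node can generate an independent distinguishable set, whereas set labeling couples the whole set into a single labeled graph and cannot harvest this multiplicity.

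The main obstacle will be the bookkeeping for the $w(2^n n^{3\epsilon-1})$ count: one has to ensure that the $2^n\cdot n^\epsilon$ candidate set pairs constructed above yield pairwise distinct (or at least a large-enough collection of distinct) equivalence classes modulo set-graph isomorphism, since otherwise many of the constructed pairs could collapse under $\simeq$ and the bound would degrade. I would handle this by showing, via the random-graph argument underlying Theorem~\ref{thm:link-aso-num}, that with high probability almost all choices of $T$ give rise to non-isomorphic sets $S,S'$, and that the orbits of the automorphism group are small enough that the count survives dividing by orbit sizes. A secondary subtlety is verifying that the one-head routine underlying subset(1) remains permutation-equivariant when combined with aggregation over $|S|$ labelings—this is immediate from Definition~\ref{def:subsetlabelingtrick} applied to each summand, but needs to be explicitly invoked so that the identification of $\text{GNN}(u,\tA^{(\{u\})})$ with $\text{GNN}(v,\tA^{(\{v\})})$ under an automorphism is justified in the negative direction (i.e., when showing that vanilla 1-WL-GNN fails, the equalities must hold on the nose).
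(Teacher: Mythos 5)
Your overall plan (reuse the graph construction of Theorem~\ref{thm:link-aso-num} and the subset--attachment idea behind Proposition~\ref{thm:BoostWLGNNsubg}, then observe that labeling just one node of the core triple suffices) is the same strategy the paper uses. The paper's Appendix A.3 literally proves Theorem~\ref{thm:link-aso-num} and Proposition~\ref{thm:PlabelingBoostWLGNN} together, noting the constructed link pairs are distinguished equally by the zero-one labeling and by the subset zero-one labeling, because once $w$ is labeled the first message-passing round already separates $u$ from $v$; your choice $P=\{w\}$ is exactly that.

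Two points are off, though. First, the story you tell about the exponent improvement from $n^{2\epsilon-1}$ to $n^{3\epsilon-1}$ being "harvested" by the subset routine's freedom in choosing the labeled head is not what happens. The paper's own proof of Proposition~\ref{thm:BoostWLGNNsubg} in Appendix A.4 already produces the count $\omega(n^{2\epsilon})\cdot 2^{\,n-3-((1-\epsilon)\log n)^{1/(2h+2)}}$, and simplifies this to $\omega(2^n n^{3\epsilon-1})$; the $n^{2\epsilon-1}$ quoted in Proposition~\ref{thm:BoostWLGNNsubg}'s statement is simply a weaker (and inconsistent) bound. Both propositions draw on the same count; there is no additional $n^\epsilon$ factor introduced by subset labeling, and an argument built around extracting one would be chasing a phantom.

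Second, your node-set construction diverges from the paper's in a way that breaks the argument. The paper builds $S=V'\cup\{u,w\}$ and $S'=V'\cup\{v,w\}$ with the crucial constraint $V'\subseteq V\setminus\{u,v,w\}\setminus N(v)$; this is what guarantees the two induced subgraphs have strictly different edge counts (the density argument), and keeping $w$ in both sets is what makes the labeled-node colours enter the pooled multiset. You instead take $S=\{u\}\cup T$, $S'=\{v\}\cup T$ with $T\subseteq V\setminus\{u,v,w\}$ unconstrained and $w\notin S,S'$. With $w$ removed, the labeled-$u$ versus labeled-$v$ comparison rests on nodes in $T$ picking up the label signal, but nothing in your construction forces any $t\in T$ to have different colours in $\tA^{(\{u\})}$ versus $\tA^{(\{v\})}$: $u$ and $v$ have identical $h$-hop neighbourhoods by construction, and $T$ may contain no neighbour of either. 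The distinguishing power in the paper's version comes from having $w$ inside the set so that its differing colour (adjacent to the labeled node versus not) survives the pooling. You would need to restore $w\in S,S'$ and the $N(v)$-avoidance on $T$ for the argument to go through.

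There is also a minor imprecision you flag yourself but do not resolve: "by injectivity of $\text{AGG}$ it suffices to find one subset choice that differs" is not quite correct for multiset equality; you must rule out the cross-matching where $\text{GNN}(\{u,w\},\tA^{(\{u\})})$ pairs with $\text{GNN}(\{v,w\},\tA^{(\{w\})})$, not just the straight matching. In the actual construction this can be closed by tracking which labeled node lies in the target set, but it needs to be said rather than hand-waved through injectivity.
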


\subsubsection{Why Subset Labeling Trick Outperforms Labeling Trick in Some Cases?}\label{sec:set_vs_subset}
In this section, we take a closer look at some special cases and then give some intuitions on subset labeling trick and set labeling trick. NME GNN is too expressive to show some weakness of set labeling trick, so we focus on 1-WL-GNN. 


Subset labeling trick helps differentiate nodes with the same label. Taking the two graphs in Figure~\ref{fig:PWL} as an example, the target set is the whole graph. With zero-one labeling trick, 1-WL-GNN cannot differentiate them as all nodes in the two graphs have the same rooted subtree (see Figure 5a). However, subset zero-one labeling trick can solve this problem. The rooted subtree in the first graph always contains a nodes with label $1$, whereas in the second graph, the rooted subtree may sometimes contain no labeled nodes, leading to different 1-WL-GNN embeddings.
\begin{figure}[t]
\centering
\includegraphics[scale=0.45]{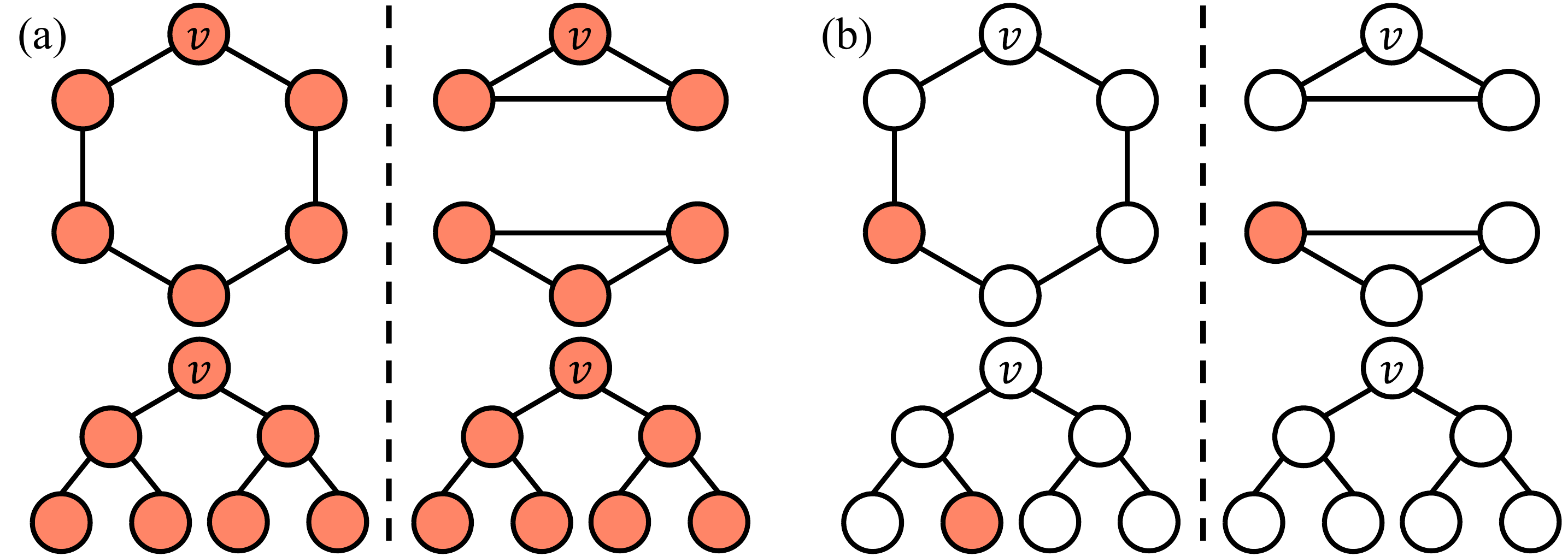}
\caption{An example of when subset labeling trick differentiates two node sets, while set labeling trick does not. First row: labeled graphs. Second row: rooted subtrees of $v$.}\label{fig:PWL}
\end{figure}
\begin{figure}[t]
\centering
\includegraphics[scale=0.45]{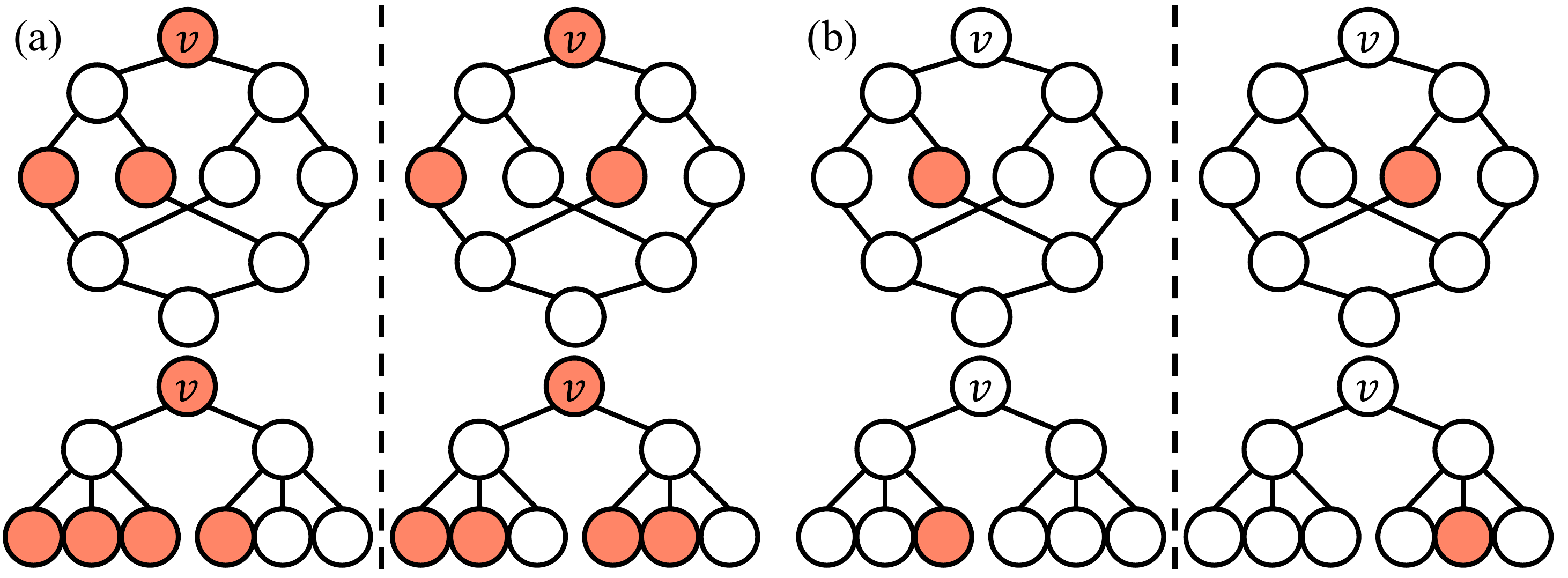}
\caption{An example of when subset labeling trick fails to differentiate two node sets while set labeling trick does. First row: labeled graphs. Second row: rooted subtrees of $v$.}\label{fig:PWL3}
\end{figure}

The drawback of subset labeling trick is that it captures pair-wise relation only and loses high-order relations. As shown in Figure~\ref{fig:PWL3}, the two target node sets (each containing three nodes) are non-isomorphic, but every node pair from the first set is isomorphic to a node pair from the second set. This difference is also reflected in the rooted subtree of target nodes (see the bottom of Figure~\ref{fig:PWL3}), where set labeling trick (Figure~\ref{fig:PWL3}a) can differentiate $v$ while subset($1$) labeling trick (Figure~\ref{fig:PWL3}b) cannot.
{
\section{Comparison between Labeling Trick and High-Order Graph Neural Network}\label{sec::labelingtrick_hognn}

Unlike ordinary GNNs which produce single-node representations, High-Order Graph Neural Networks (HOGNNs) generate representations for node tuples. HOGNNs encompass various approaches, including $k$-dimensional Graph Neural Networks ($k$-GNNs)~\citep{kWL} inspired by the $k$-dimensional Weisfeiler-Lehman test ($k$-WL)~\citep{cai1992optimal}, Provably Powerful Graph Neural Networks~\citep{PPGN} based on the $2$-dimensional folklore Weisfeiler-Lehman test (2-FWL), $k$-Invariant Graph Networks ($k$-IGN)~\citep{IGN}, Local Relational Pooling methods~\citep{LocalRelPool} that create permutation-invariant functions with adjacency matrices as input, and subgraph GNNs~\citep{ESAN,GNNAK,NGNN,OSAN,SSWL} which apply ordinary $1$-WL-GNNs to subgraphs extracted from the original graph.

These methods all target whole-graph tasks by pooling the generated node tuple representations to graph representations, whereas our labeling trick are designed for multi-node tasks. Nevertheless, HOGNNs also yield representations for node tuples and can be employed for multi-node tasks. Moreover, their ability to handle multi-node tasks is closely linked to their effectiveness in whole-graph tasks as follows.
\begin{proposition}
Let $c(S, \mathbf{A})$ denote the color produced by a HOGNN for a graph $G=(V, E, \mathbf{A})$ and a node tuple $S\in V^k$ in the graph. Let $\text{AGG}$ denote an injective pooling function. Given two graphs $G_1=(V_1, E_1, \mathbf{A}_1), G_2=(V_2, E_2, \mathbf{A}_2)$, $\text{AGG}(\{\!\{ c(S, \mathbf{A}_1) \mid  S\in V_1^k\}\!\})\neq \text{AGG}(\{\!\{ c(S, \mathbf{A}_2) \mid  S\in V_2^k\}\!\})$ (HOGNNs can differentiate the two graphs) is equivalent to the node tuple embedding function $c$ being able to differentiate two multisets of node tuples in two graphs.
\end{proposition}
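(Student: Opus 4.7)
The plan is to show that the claimed equivalence is essentially a direct unpacking of what ``injective $\text{AGG}$'' means, with no further structural facts about HOGNNs required. Concretely, I would introduce the two multisets
\begin{equation*}
M_i := \mset{c(S, \mathbf{A}_i) \mid S \in V_i^k}, \qquad i = 1, 2.
\end{equation*}
The left-hand side of the proposition then asserts $\text{AGG}(M_1) \neq \text{AGG}(M_2)$, while the right-hand side asserts $M_1 \neq M_2$ (the tuple-coloring $c$ produces distinct multisets on the two graphs). It therefore suffices to prove $\text{AGG}(M_1) \neq \text{AGG}(M_2) \iff M_1 \neq M_2$.

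For the $(\Leftarrow)$ direction, I would argue that whenever $M_1 \neq M_2$, injectivity of $\text{AGG}$ on multisets of node-tuple colors forces $\text{AGG}(M_1) \neq \text{AGG}(M_2)$ by definition. For the $(\Rightarrow)$ direction I would argue by contrapositive: if $M_1 = M_2$ then applying the function $\text{AGG}$ to equal inputs yields equal outputs, so $\text{AGG}(M_1) = \text{AGG}(M_2)$. Combining the two directions gives the stated equivalence.

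The main ``obstacle,'' if one can call it that, is semantic rather than mathematical: one must be careful to interpret $\text{AGG}$ as a function whose domain is the space of multisets of colors (so it respects multiplicities and ignores tuple ordering within the multiset), and to treat injectivity as being over this entire domain. Once those conventions are fixed, the proof reduces to a one-line application of the definition of an injective function in each direction. I would close by remarking that this small equivalence is precisely what licenses the paper to identify the graph-level discriminative power of a HOGNN with the multiset-level discriminative power of its underlying $k$-tuple coloring $c$, so that known results about HOGNN whole-graph expressivity transfer verbatim to statements about their multi-node tuple expressivity.
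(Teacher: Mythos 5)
Your proof is correct, and the equivalence is indeed nothing more than the definition of an injective function applied to the multiset domain of $\text{AGG}$: $M_1 \neq M_2 \Rightarrow \text{AGG}(M_1) \neq \text{AGG}(M_2)$ by injectivity, and $M_1 = M_2 \Rightarrow \text{AGG}(M_1) = \text{AGG}(M_2)$ because $\text{AGG}$ is a function. The paper states this proposition without an explicit proof in the appendix, treating it as the tautology your argument makes precise, so there is no alternative approach to contrast against; your reading of the informal phrase ``$c$ being able to differentiate two multisets of node tuples'' as $\mset{c(S,\mathbf{A}_1) \mid S\in V_1^k} \neq \mset{c(S,\mathbf{A}_2) \mid S\in V_2^k}$ is the intended one and the only one that makes the statement true.
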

A direct corollary is that if there exist two non-isomorphic graphs that a HOGNN cannot differentiate, then there exist two node tuples that the tuple representations output by HOGNN cannot differentiate. Moreover, if the node tuple embedding function $c_1$ is more expressive than $c_2$, such that $c_1(S_1, \mathbf{A}_1)=c_1(S_2, \mathbf{A}_2)\Rightarrow c_2(S_1, \mathbf{A}_1)=c_2(S_2, \mathbf{A}_2)$, the HOGNN corresponding to $c_1$ is also more expressive than that corresponding to $c_2$. Therefore, we can establish expressivity comparisons between labeling tricks for multi-node representations with HOGNNs by comparing their node tuple embedding functions. Following \citet{IDMPNN}, we first define the comparison between two HOGNNs for whole-graph representations.
\begin{definition}
For any algorithm $A$ and $B$, we denote the final color of graph $G$ computed by them as $c_A(G)$ and $c_B(G)$. We say:
\begin{itemize}
    \item $A$ is \textbf{more expressive} than $B$ ($B\preceq A$) if for any pair of graphs $G$ and $H$, $c_A(G)=c_A(H)\Rightarrow c_B(G)=c_B(H)$. Otherwise, there exists a pair of graphs that $B$ can differentiate while $A$ cannot, denoted as $B\not\preceq A$. 
    \item $A$ is \textbf{as expressive as} $B$ ($A\cong B$) if $B\preceq A \land A\preceq B$.
    \item $A$ is \textbf{strictly more expressive} than $B$ ($B\prec A$) if $B\preceq A \ \land \ A\ncong B$, i.e., for any pair of graphs $G$ and $H$, $c_A(G)=c_A(H)\Rightarrow c_B(G)=c_B(H)$, and there exists at least one pair of graphs $G, H$ s.t. $c_B(G)=c_B(H),c_A(G)\neq c_A(H)$.
    \item $A$ and $B$ are \textbf{incomparable} ($A\nsim B$) if $A \npreceq B \ \land \ B\npreceq A$. In this case, $A$ can distinguish a pair of non-isomorphic graphs that cannot be distinguished by $B$ and vice versa.
\end{itemize}
\end{definition}

$k$-dimensional Weisfeiler-Lehman (k-WL) test has strong expressivity and forms the basis of HOGNNs' expressivity hierarchy. It assigns colors to all $k$-tuples and iteratively updates them. The initial colors $c_k^{0}(S, G)$ of tuples $S\in V(G)^k$ are determined by their isomorphism types~\citep{PPGN}. Two tuples $S \in [n]^k$ in graph $G$, and $S' \in [n]^k$ in graph $G'$ receive the same isomorphism type if and only if (1) there exists a permutation function $\pi$ such that $\pi(S_i) = S'_i$ for all $i=1,2,...,k$; and (2) the subgraphs $G[S]$ and $G'[S']$ induced by tuples $S$ and $S'$ (with nodes $S_i$ in $G[S]$ and nodes $S'_i$ in $G'[S']$ assigned extra label $i$ correspondingly for $i=1,2,...,k$) are isomorphic. At the $t$-th iteration, the color updating scheme is
\begin{equation*}
c_k^{t}(S, G)=\text{Hash}(c_k^{t-1}(S, G), (\{\!\{
    c_k^{t-1}(\psi_i(S, u), G)\mid u\in V(G)\}\!\}\mid i\in [k])), 
\end{equation*}
where $\psi_i(S, u)$ means replacing the $i$-th element in $S$ with $u$. The color of $S$ is updated by its original color and the color of its high-order neighbors $\psi_i(S, u)$. The iterative update continues until the color converges, e.g. $\forall S, S'\in V(G)^k, c_k^{t+1}(S, G)=c_k^{t+1}(S', G)\Leftrightarrow c_k^{t}(S, G)=c_k^{t}(S', G)$. Let $c_k(S, G)$ denote the $k$-WL color of tuple $S$ in graph $G$. The color of the whole graph is the multiset of all tuple colors,
\begin{equation*}
    c_k(G)=\text{Hash}(\{\!\{{c_k(S, G)\mid S\in V(G)^k}\}\!\}).
\end{equation*}
$k$-WL can also be used to produce $l$-tuple representations ($l\le k$), Given $S\in V(G)^l$, its color is
\begin{equation*}
c_k(S)=\text{Hash}(\{\!\{{c_k(S\Vert S', G)\mid S'\in V(G)^{k-l}}\}\!\}),
\end{equation*}
where $\Vert$ means concatenation. The HOGNN corresponding to this tuple representation is $\text{Hash}(\{\!\{c_k(S)\mid S\in V(G)^l\}\!\}$. This algorithm, namely $k$-WL with $l$-pooling in this work, shares the same expressivity as $k$-WL.

\begin{proposition}\label{prop::kwl_lpool}
Given $l<k$, $k$-WL is as expressive as $k$-WL with $l$ pooling.
\end{proposition}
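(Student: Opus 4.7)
The plan is to prove both directions of the equivalence. Write $c_k(S, G)$ for the stable $k$-WL color of a $k$-tuple $S$ and, for an $l$-tuple $S$, define its pooling color
\begin{equation*}
c_k^{(l)}(S, G) := \text{Hash}\!\left(\mset{c_k(S\Vert S', G)\mid S'\in V(G)^{k-l}}\right),
\end{equation*}
with $c_k(G)$ and $c_k^{(l)}(G)$ the graph-level hashes. Without loss of generality $|V(G_1)| = |V(G_2)| = n$, since either coloring distinguishes graphs by tuple count.

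For the direction $k$-WL with $l$-pooling $\preceq$ $k$-WL, suppose $c_k^{(l)}(G_1) = c_k^{(l)}(G_2)$. Then there is a bijection $\sigma: V(G_1)^l \to V(G_2)^l$ with $c_k^{(l)}(S, G_1) = c_k^{(l)}(\sigma(S), G_2)$ for every $l$-tuple $S$. Unfolding pooling, the multisets $\mset{c_k(S\Vert S', G_1)\mid S'\in V(G_1)^{k-l}}$ and $\mset{c_k(\sigma(S)\Vert S'', G_2)\mid S''\in V(G_2)^{k-l}}$ coincide. Taking the disjoint union over $S$ and using that $V^k$ partitions uniquely as $V^l\times V^{k-l}$ via the $l$-prefix yields $\mset{c_k(T, G_1)\mid T\in V(G_1)^k} = \mset{c_k(T, G_2)\mid T\in V(G_2)^k}$, i.e., $c_k(G_1) = c_k(G_2)$.

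The reverse direction $k$-WL $\preceq$ $k$-WL with $l$-pooling rests on the following lemma: $c_k(S, G)$ determines $c_k^{(l)}(S[1:l], G)$ for the $l$-prefix of any $k$-tuple $S$. I would prove it by iterating stability of the $k$-WL update. Stability implies that, for every coordinate $i$, $c_k(S, G)$ determines the multiset $\mset{c_k(\psi_i(S, u), G)\mid u\in V(G)}$. Applying this successively for $i = k, k-1, \ldots, l+1$ and flattening the resulting multiset-of-multisets shows that $c_k(S, G)$ determines $\mset{c_k(S[1:l]\Vert T, G)\mid T\in V(G)^{k-l}}$, and hence its hash $c_k^{(l)}(S[1:l], G)$.

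Granting the lemma, let $N_\beta(G)$ denote the number of $l$-tuples of $G$ with pooling color $\beta$. The lemma yields a well-defined map $\alpha\mapsto\beta(\alpha)$ from $k$-WL colors to pooling colors, and each $l$-tuple contributes exactly $n^{k-l}$ extensions all sharing the same $\beta$-image. Therefore $N_\beta(G)\cdot n^{k-l} = \sum_{\alpha :\, \beta(\alpha)=\beta}\text{mult}_\alpha(c_k(G))$, so $\{N_\beta(G)\}_\beta$ is recoverable from $c_k(G)$; hence $c_k(G_1) = c_k(G_2)$ forces $c_k^{(l)}(G_1) = c_k^{(l)}(G_2)$. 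The main obstacle is the iterative lemma: one must verify that composing the ``$c_k(S, G)$ determines \ldots'' relation through multiset-valued intermediate stages preserves multiplicities exactly, and that the $k-l$ successive coordinate replacements enumerate each extension $S[1:l]\Vert T$ exactly once in the final flattened multiset.
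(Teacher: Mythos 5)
Your proof is correct and follows essentially the same approach as the paper's: one direction flattens the nested multiset to recover the full multiset of $k$-tuple colors, and the other uses stability of the $k$-WL coloring to show that a $k$-tuple's color determines the pooling color of its length-$l$ prefix, followed by a multiplicity count. (A minor note: your $\preceq$ labels on the two directions are swapped relative to the paper's convention, where $B\preceq A$ means $A$ is at least as expressive as $B$; since you prove both implications, this does not affect the conclusion, and your explicit handling of the $n^{k-l}$ multiplicity factor is if anything a bit cleaner than the paper's somewhat garbled chain.)
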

Therefore, we slightly abuse the notation of $k$-WL and use $k$-WL instead of $k$-WL with $l$ pooling when analyzing $k$-WL for $l$-tuple representation.

\subsubsection{$k, l$-WL and Poset Labeling Trick for Linear Order Set}

As most HOGNNs learn representations for node tuples, we first compare HOGNNs with the labeling trick for node tuples, where node tuple is essentially poset with linear order. We use $k,l$-WL~\citep{IDMPNN} to represent a general framework for HOGNNs, which includes $k$-WL-based methods~\citep{kWL}, subgraph GNNs~\citep{NGNN,GNNAK,OSAN,ESAN,SSWL} and relational pooling~\citep{LocalRelPool}.

\begin{definition}
    ($k,l$-WL) For a graph $G=(V, E, \mathbf A)$, the graph color produced by $k, l$-WL is as follows.
    \begin{enumerate}
        \item For each $l$-tuple of node $S$, the labeled graph $G^{S}$ is $G$ with the $i$-th node $S_i$ in tuple $S$ augmented with an additional feature $i$.
        \item Runs $k$-WL on each labeled graph $G^{S}$, leading to graph color $c_{k,l}(S, \mathbf A)$.
        \item The final color of graph $G$ is $\textnormal{HASH}(\{\!\{c_{k,l}(S, \mathbf A)\mid S\in V^l\}\!\})$.
    \end{enumerate}
\end{definition}

$k,l$-WL establishes a fine expressivity hierarchy for GNNs. 
\begin{proposition}\citep{IDMPNN}
    For all $k\ge 2, l\ge 0$
\begin{itemize}
    \item $k+1, l$-WL is strictly more expressive than $k, l+1$-WL.
    \item $k, l+1$-WL is strictly more expressive than $k, l$-WL.
    \item $k+1, l$-WL is strictly more expressive than $k, l$-WL.
    \item There exist two graphs that $2, l$-WL can differentiate while $l+1$-WL cannot.
\end{itemize}
\end{proposition}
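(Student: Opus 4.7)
The plan is to establish each of the three containment-style bullets by an explicit simulation argument and then upgrade them to strict inclusions via explicit counterexamples, while the fourth bullet is a standalone separation result that requires a dedicated construction. I would prove the two ``base'' containments first and then derive everything else by composition or by adapting Cai-F\"urer-Immerman (CFI) gadgets.

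For the containment $k, l{+}1$-WL $\preceq k{+}1, l$-WL (bullet one, $\preceq$ direction), the strategy is to show that the multiset of colors assigned by $k{+}1$-WL to $(k{+}1)$-tuples of a graph with $l$ labeled nodes determines the multiset of colors assigned by $k$-WL to $k$-tuples of the same graph with an additional $(l{+}1)$-th label placed at each node $v$. The key observation is that one coordinate of the $(k{+}1)$-tuple can be viewed as playing the role of the extra label: grouping $(k{+}1)$-tuples by their last coordinate $v$ and reading off the restricted colors simulates the effect of individualizing $v$ with a new label and then running $k$-WL on the remaining $k$ positions. For bullet two, $k, l$-WL $\preceq k, l{+}1$-WL is immediate: $k, l$-WL can be recovered from $k, l{+}1$-WL by marginalizing (pooling) over the extra label, which is exactly the argument used in Proposition~\ref{prop::kwl_lpool}. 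Bullet three then follows by composing bullets one and two (or by a direct application of the standard $k$-WL hierarchy in the labeled graph).

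To upgrade these containments to \emph{strict} inclusions, the plan is to adapt CFI constructions, which give pairs of graphs distinguishable by $(k{+}1)$-WL but not $k$-WL. Attaching labeled gadgets yields CFI-like pairs whose $k, l{+}1$-WL colors coincide (the extra label cannot overcome the CFI barrier at dimension $k$) but whose $k{+}1, l$-WL colors differ, giving the strict separation in bullet one. For bullet two, a simpler construction suffices: two rooted substructures that sit in the same orbit of the automorphism group for a given choice of $l$ labels but are separated once one more node is individualized. Bullet three then inherits strictness from either of the two.

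The most delicate step, and where I expect the main obstacle, is bullet four: exhibiting a pair of graphs that $2, l$-WL distinguishes while $(l{+}1)$-WL (which is $(l{+}1), 0$-WL) does not. My approach is to leverage known separation results from the subgraph-GNN literature, in particular pairs such as CFI gadgets or strongly regular graphs that are indistinguishable by $(l{+}1)$-WL but separable by subgraph-style methods like those analyzed in \citet{NGNN,SSWL,ESAN}, and to embed them in a construction where individualizing $l$ nodes and running $2$-WL inside each labeled variant exposes the asymmetry. The difficulty is that the strength of $(l{+}1)$-WL grows with $l$, so the family of counterexamples must be scaled carefully in $l$: the CFI-type gadget needs to be chosen so that it resists $(l{+}1)$-dimensional WL globally yet is resolved once any $l$ nodes are individualized and only $2$-WL is run locally. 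Verifying both the positive (detection by $2, l$-WL) and negative (failure of $(l{+}1)$-WL) sides on the same uniform family is the technical crux of the proof.
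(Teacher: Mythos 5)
This proposition is stated in the paper as a cited result from \citet{IDMPNN} and the paper supplies no proof of its own---the appendix for Section~\ref{sec::labelingtrick_hognn} only proves Proposition~\ref{prop::kwl_lpool}. So there is no in-paper argument to compare your proposal against; the paper treats the hierarchy as a black box inherited from the cited work.

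Taken on its own terms, your sketch follows the route one would expect from the $k,l$-WL literature: simulation arguments for the non-strict containments (a spare coordinate of a $(k{+}1)$-tuple simulating one more individualized label; marginalizing the extra label as in Proposition~\ref{prop::kwl_lpool}), transitivity for the third bullet, and CFI-style gadgets to upgrade the containments to strict ones. These pieces are standard and plausible, but two steps deserve more care than your outline gives them. First, the ``marginalization'' step for bullet two is not literally a projection: for a fixed $l$-tuple $S$, each $c_k(G^{S\Vert v})$ refines $c_k(G^S)$, but you need to argue that the \emph{multiset} $\{\!\{c_k(G^{S\Vert v})\mid v\in V\}\!\}$ determines $c_k(G^S)$ and that this lifts to equality of the outer multisets over all $S$; the paper's own treatment of the analogous issue in Proposition~\ref{prop::kwl_lpool} requires an explicit stability/iteration argument, and you would need the same here. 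Second, and more importantly, you correctly identify bullet four as the crux and then leave it genuinely open. It is not enough to gesture at ``CFI gadgets or strongly regular graphs from the subgraph-GNN literature'': the statement is a uniform family of separations indexed by $l$, so you need a construction that simultaneously (i) resists $(l{+}1)$-WL globally and (ii) is resolved by $2$-WL after individualizing any appropriate $l$ nodes, with both sides verified for the same graphs. As written your proposal acknowledges but does not close this gap, so it is a plan rather than a proof of bullet four. Since the paper itself defers to \citet{IDMPNN}, the correct disposition here is either to import their construction explicitly or to present a self-contained one; the outline alone does not suffice.
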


Note that similar to $k,l$-WL, the poset labeling trick for linear order sets also assigns node indices in the tuple as additional node features and runs GNN on the augmented graph. Therefore, we have
\begin{corollary}\label{coro::klWL-poset}
Given two graphs $G_1=(V_1, E_1, \mathbf{A}_1), G_2=(V_2, E_2, \mathbf{A}_2)$ and two node tuples $S_1\in V_1^l, S_2\in V_2^l$, the poset labeling trick with $k$-WL equivalent GNN can differentiate $S_1,S_2$ if and only if $k,l$-WL produces different tuple colors $c_{k,l}(S_1,\mathbf{A}_1)\neq c_{k,l}(S_2,\mathbf{A}_2)$.
\end{corollary}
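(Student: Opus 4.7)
The plan is to observe that both sides of the biconditional are, in essence, computing the $k$-WL color of the same labeled graph, and then to chain together the standard expressivity equivalences already at hand. First, for a linear-order tuple $S=(S_1,\ldots,S_l)$, the poset labeling trick for linear-order sets assigns label $i$ to node $S_i$ and label $0$ to all other nodes, producing exactly the labeled graph $G^{S}=\tA^{(S)}$ used in the definition of $k,l$-WL. Thus both procedures are fed identical inputs, and the corollary reduces to comparing the discriminating powers of the two procedures on this common input.

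Second, by Proposition~\ref{prop::psgiso2giso}, distinguishing the posets $(S_1,\mathbf{A}_1)$ and $(S_2,\mathbf{A}_2)$ is equivalent to distinguishing the labeled graphs $\tA_1^{(S_1)}$ and $\tA_2^{(S_2)}$ as ordinary whole graphs. A $k$-WL equivalent GNN is by definition as expressive as $k$-WL on whole graphs, so it distinguishes these two labeled graphs iff $k$-WL does. By the definition of $k,l$-WL, the latter is precisely the condition $c_{k,l}(S_1,\mathbf{A}_1)\neq c_{k,l}(S_2,\mathbf{A}_2)$. Concatenating the three equivalences (input alignment; Proposition~\ref{prop::psgiso2giso}; $k$-WL/GNN equivalence on whole graphs plus the definition of $c_{k,l}$) yields the biconditional.

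The main obstacle I expect is reconciling the aggregation-over-tuple-nodes form of the poset-labeling-trick output from Theorem~\ref{thm:poexpressivity}, namely $\mathrm{AGG}(\{\mathrm{GNN}(u,\tA^{(S)})\mid u\in S\})$, with the whole-graph $k$-WL color appearing in the definition of $c_{k,l}$. What makes this reconciliation safe is that (i) the tuple nodes carry pairwise distinct labels in $\tA^{(S)}$, so after $k$-WL converges their colors encode the structural role of $S$ within the labeled graph, and (ii) by Proposition~\ref{prop::kwl_lpool}, pooling $k$-WL colors over $l$-tuples is as expressive as $k$-WL itself, so the aggregated tuple representation and the whole-graph color on $\tA^{(S)}$ carry identical discriminating information. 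Once this compatibility step is granted, the corollary follows immediately from the input-alignment observation and the equivalence chain above.
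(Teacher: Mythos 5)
Your proposal takes essentially the same route as the paper: the paper does not give a separate proof of this corollary but derives it directly from the sentence preceding it, namely that the linear-order poset labeling trick constructs exactly the labeled graph $G^S$ used in the definition of $k,l$-WL, so running a $k$-WL-equivalent GNN on it is the same computation as $c_{k,l}(\cdot)$. That input-alignment observation is the whole of the paper's argument, and you identify it correctly as the load-bearing step.

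Two small caveats on the supporting material you bring in. First, Proposition~\ref{prop::psgiso2giso} concerns isomorphism ($\simeq$) of labeled graphs, not $k$-WL distinguishability; it is not wrong to mention, but it is not what converts a statement about poset pairs into a statement about labeled-graph pairs here, since $k$-WL may fail to separate non-isomorphic inputs. The corollary is about a specific coloring procedure, not about isomorphism, so this citation is inert. Second, Proposition~\ref{prop::kwl_lpool} says $k$-WL is as expressive as $k$-WL with $l$-pooling, where the pooling ranges over \emph{all} $l$-tuples in the graph; it does not, on its face, say that reading off the $1$-tuple colors of the $l$ uniquely labeled target nodes recovers the whole-graph $k$-WL color of $\tA^{(S)}$. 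The reconciliation you flag as the ``main obstacle'' is genuine and the paper silently elides it; what actually closes it is the stronger observation that for converged $k$-WL the color of \emph{any single} $k$-tuple (and hence of any single node, via the $l$-tuple color formula $c_k(u)=\text{Hash}(\{\!\{c_k(u\Vert T)\mid T\}\!\})$) already determines the multiset of all $k$-tuple colors by iterated coordinate replacement, together with the fact that the unique labels on $S$ let one extract those node colors back out of the graph color. Your argument is therefore directionally correct but cites a proposition that is adjacent to, rather than identical with, the fact you need.
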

In other words, poset labeling trick for linear order sets (i.e., node tuples) combined with $k$-WL is equivalent in expressivity to $k,l$-WL where $l$ is the size of the set. Thus we can readily inherit the $k,l$-WL hierarchy to analyze labeling trick. For example, in real-world applications, the labeling trick is typically used with $1$-WL-GNNs, which have the same expressivity as $2$-WL. Therefore,
\begin{corollary}
Using 1-WL-GNN together with the poset labeling trick for linear order sets, for node tuples of size $l$, there exist two node tuples that $l+1$-WL cannot differentiate, while 1-WL-GNN with the labeling trick can differentiate. Moreover, for any two node tuples of size $l$, if $l+2$-WL cannot differentiate, 1-WL-GNN with the labeling trick cannot either.
\end{corollary}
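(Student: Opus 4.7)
The strategy is to reduce the corollary to a statement about $2, l$-WL via Corollary~\ref{coro::klWL-poset}, which equates the expressivity of 1-WL-GNN plus the poset labeling trick on size-$l$ tuples with that of $2, l$-WL. Under this reduction, the two claims become: (i) there exist size-$l$ tuples that $2, l$-WL separates but $(l+1)$-WL does not; and (ii) indistinguishability of two size-$l$ tuples under $(l+2)$-WL implies indistinguishability under $2, l$-WL.

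For (i), I plan to start from the last bullet of the cited $k, l$-WL hierarchy proposition, which supplies two graphs $G_1, G_2$ that $2, l$-WL distinguishes while $(l+1)$-WL does not. Since $(l+1)$-WL fails on the pair, Proposition~\ref{prop::kwl_lpool} lets me interpret this failure as: the multisets $\{\!\{c_{l+1}(S,\mathbf A_1) : S\in V(G_1)^l\}\!\}$ and $\{\!\{c_{l+1}(S,\mathbf A_2) : S\in V(G_2)^l\}\!\}$ are equal, yielding a color-preserving bijection $\phi:V(G_1)^l\to V(G_2)^l$. If $\phi$ also preserved $2,l$-WL colors, the corresponding $2,l$-WL multisets would agree and $2,l$-WL would fail, contradicting the choice of $G_1, G_2$. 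Hence some $S_1$ satisfies $c_{2,l}(S_1, \mathbf A_1)\neq c_{2,l}(\phi(S_1), \mathbf A_2)$, and $(S_1, \mathbf A_1)$ together with $S_2 := \phi(S_1)$ in $\mathbf A_2$ is the required pair by Corollary~\ref{coro::klWL-poset}.

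For (ii), I would chain the hierarchy inequality $k, l+1$-WL $\preceq k+1, l$-WL from the cited proposition:
\begin{equation*}
2, l\text{-WL} \ \preceq \ 3, l{-}1\text{-WL} \ \preceq \ \cdots \ \preceq \ (l+2), 0\text{-WL}.
\end{equation*}
The rightmost object is plain $(l+2)$-WL (no label is added), and Proposition~\ref{prop::kwl_lpool} identifies it with $(l+2)$-WL used as a size-$l$ tuple encoder via $l$-pooling. Applied at the tuple level, this chain says that equality of $(l+2)$-WL tuple colors forces equality of $2, l$-WL tuple colors, and the latter in turn forces, by Corollary~\ref{coro::klWL-poset}, equality of the colors produced by 1-WL-GNN with the labeling trick.

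The main obstacle is that the cited proposition is phrased at the graph level, whereas both halves of the corollary are tuple-level statements. For (i) I sidestep this by the multiset/bijection counting argument above, which converts a graph-level separation into a specific pair of tuples. For (ii) I rely on the observation that the standard proof of $k, l+1$-WL $\preceq k+1, l$-WL actually establishes a tuple-wise simulation (an $(l{+}1)$-tuple in the weaker model can be read off from a suitable $l$-tuple together with an extra coordinate handled by one WL dimension), and the graph-level comparison is then obtained by pooling; carefully unpacking this simulation, combined with Proposition~\ref{prop::kwl_lpool}, is what is needed to legitimately chain the hierarchy at the tuple level and conclude.
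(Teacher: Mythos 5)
Your reduction via Corollary~\ref{coro::klWL-poset} to a statement about $2,l$-WL tuple colors is exactly the intended route, and the paper itself provides no further proof, so you are filling in steps the paper leaves implicit.

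For the first half, your bijection argument is correct and is in fact the missing piece that turns the graph-level separation (the last bullet of the cited hierarchy proposition) into a tuple-level one: Proposition~\ref{prop::kwl_lpool} gives equality of the multisets of $(l+1)$-WL $l$-tuple colors, a color-preserving bijection $\phi$ exists, and if $\phi$ also preserved $c_{2,l}$ the two $2,l$-WL graph colors would coincide, contradicting the choice of $G_1,G_2$. Any witnessing $S_1$ and $\phi(S_1)$ give the required pair.

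For the second half there is a real gap, and you have essentially flagged it yourself. The chain $2,l\preceq 3,l{-}1\preceq\dots\preceq l{+}2,0$ is a chain of \emph{graph-level} comparisons between models that color tuples of \emph{different sizes}: $k,l{+}1$-WL colors $(l{+}1)$-tuples while $k{+}1,l$-WL colors $l$-tuples, so there is no obvious sense in which one can ``chain the hierarchy at the tuple level'' for a fixed tuple size $l$. What part (ii) actually needs is a single direct tuple-level simulation: by induction on the WL rounds, the $(l{+}2)$-WL color $c_{l+2}(S\Vert(u,v),G)$ refines the $2$-WL color $c_2((u,v),G^{S})$ (the $(l{+}2)$-tuple tracks the positions of all of $S$ plus the pair, which recovers the labeled-graph information), and then pooling over pairs $(u,v)$ shows $c_{l+2}(S)$ determines $c_{2,l}(S)=c_2(G^S)$. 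Your parenthetical remark (``an $(l{+}1)$-tuple in the weaker model can be read off from a suitable $l$-tuple together with an extra coordinate handled by one WL dimension'') points at exactly this kind of simulation, but as written the argument neither states this lemma nor shows how the chain resolves the size mismatch, so the second half is not established.
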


Despite having the same expressivity, as the labeling trick only needs to compute representation of the query node tuple, it can be much more scalable than HOGNNs for multi-node tasks (saving $n^l$ times time and space, where $l$ is the size of node tuple).

\section{Labeling trick for hypergraph}
Graph is appropriate to describe bilateral relations between entities. However, high-order relations among several entities are also worth studying~\citep{HyperGIntro}. Hypergraph, composed of nodes and hyperedges, can model such high-order relations naturally. In this section, we study multi-node representation learning in hypergraphs.


We consider a hypergraph $H := (V,E,\mH, \tX^V, \tX^E)$, where $V$ is the node set $\{1,2,...,n\}$, $E$ is the hyperedge set $\{1,2,...,m\}$
, and $\mH \in \{0,1\}^{n\times m}$ is the incidence matrix with $\mH_{i,j}=1$ if node $i$ is in hyperedge $j$ and $0$ otherwise. Each hyperedge contains at least one node.
$\tX^{V}\in\sR^{n\times d}$ and $\tX^E\in \sR^{m\times d}$ are node and hyperedge features respectively, where $\tX^V_{i,:}$ is of node $i$, and $\tX^E_{j,:}$ is of hyperedge $j$. 

We define a hypergraph permutation $\pi=(\pi_1,\pi_2)\in \Pi_n\times \Pi_m$. Its action on a hypergraph $H=(V,E, \mH, \tX^V, \tX^E)$ is $\pi(H)=(\pi_1(V), \pi_2(E), \pi(\mH), \pi_1(\tX^V),\pi_2(\tX^E))$, where 
incidence matrix permutation is $\pi(\mH)_{\pi_1(i),\pi_2(j)} = \mH_{i,j}$.

The graph isomorphism and poset-graph isomorphism of hypergraph are defined as follows.
\begin{definition}\label{def:hypergiso}
Hypergraphs $H, H'$ are isomorphic iff there exists $\pi\in \Pi_n\times \Pi_m$, $\pi(H)=H'$. Given node posets $S$ in $H$ and $S'$ in $H'$, $(S, H), (S',H')$ are isomorphic iff there exists $\pi=(\pi_1,\pi_2)\in \Pi_n\times \Pi_m$, $(\pi_1(S),\pi(H))=(S', H')$.
\end{definition}

We can define labeling trick for hypergraph similar to that of graph from scratch. However, converting the hypergraph problem to a graph problem is more convenient. We formalize the known convertion~\citep{IncidenceGraphIntro} as follows.
\begin{definition} 
\textbf{(Incidence graph)} Given a hypergraph $H=(V,E, \mH, \tX^V, \tX^E)$, $V=\{1,2,...,n\}$, $E=\{1,2,...,m\}$, $\mH \in \{0,1\}^{n\times m}$, $\tX^{V}\in\sR^{n\times d}$,$\tX^E\in \sR^{m\times d}$, its incidence graph is $IG_H=(V_H, E_H, \tA)$, where the node set $V_H=\{1,2,...,n,n\!+\!1,...,n\!+\!m\}$, edge set $E_H=\{(i,j)\mid i\in V, j\in E, \mH_{i,j}=1\}$, adjacency tensor $\tA\in \sR^{(n+m)\times (n+m)\times (d+1)}$. For all $i\in V, j\in E$, $\tA_{i, i, :d}=\tX^V_{i,:}$, $\tA_{i, i, d+1}=\tX^V_{i,:}$,  $\tA_{n+j, n+j, :d}=\tX^E_{j,:}$,  $\tA_{i, n+j, d+1}=\mH_{i,j}$. All other elements in $\tA$ are $0$.
\end{definition}
The incidence graph $IG_H$ considers $H$'s nodes and hyperedges both as its nodes. Two nodes in $IG_H$ are connected iff one is a node and the other is a hyperedge containing it in $H$.

The incidence graph contains all information in the hypergraph. Hypergraph isomorphism and poset-hypergraph isomorphism are equivalent to the graph isomorphism and poset-graph isomorphism in the corresponding incidence graphs.
\begin{theorem}\label{thm:hypergiso2giso}
Given node posets $S$ in hypergraph $H$, $S'$ in hypergraph $H'$, $(S, H)\simeq (S',H')$ iff $(S, IG_{H})\simeq (S', IG_{H'})$.
\end{theorem}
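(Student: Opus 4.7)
The plan is to prove the two directions of the equivalence separately by constructing an explicit bijection between hypergraph permutations in $\Pi_n \times \Pi_m$ and those graph permutations in $\Pi_{n+m}$ of the incidence graph that respect the bipartition between original-node indices $\{1,\ldots,n\}$ and hyperedge-node indices $\{n+1,\ldots,n+m\}$. Throughout I will exploit the fact that the adjacency tensor $\tA$ of $IG_H$ encodes $\tX^V$, $\tX^E$, and $\mH$ in disjoint blocks, so that preservation of $\tA$ under a permutation translates slot-by-slot into preservation of the hypergraph data.

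For the forward direction, assume $(S,H) \simeq (S',H')$ via $\pi=(\pi_1,\pi_2) \in \Pi_n \times \Pi_m$. I define $\sigma \in \Pi_{n+m}$ by $\sigma(i)=\pi_1(i)$ for $i \le n$ and $\sigma(n+j)=n+\pi_2(j)$ for $j \le m$, and verify $\sigma(\tA)=\tA'$ by checking each type of entry: the diagonal original-node entries carry $\tX^V$ (preserved by $\pi_1$), the diagonal hyperedge-node entries carry $\tX^E$ (preserved by $\pi_2$), and the off-diagonal entries $\tA_{i,n+j,d+1}=\mH_{i,j}$ are preserved because $\pi(\mH)_{\pi_1(i),\pi_2(j)}=\mH_{i,j}$. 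Since $S \subseteq \{1,\ldots,n\}$ and $\pi_1(S)=S'$ as posets, $\sigma(S)=S'$ in the incidence graph as well.

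For the backward direction, assume $(S,IG_H) \simeq (S',IG_{H'})$ via some $\sigma \in \Pi_{n+m}$. The key step is to argue that $\sigma$ must send $\{1,\ldots,n\}$ bijectively to $\{1,\ldots,n\}$ and $\{n+1,\ldots,n+m\}$ bijectively to $\{n+1,\ldots,n+m\}$; granted this, I set $\pi_1=\sigma|_{\{1,\ldots,n\}}$ and $\pi_2(j)=\sigma(n+j)-n$, and then reading off the diagonal entries recovers $\pi_1(\tX^V)=\tX^{V\prime}$ and $\pi_2(\tX^E)=\tX^{E\prime}$, while the off-diagonal entries yield $\pi(\mH)=\mH'$; finally $\sigma(S)=S'$ together with $S,S'\subseteq V$ forces $\pi_1(S)=S'$ as posets. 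The bipartition-preservation claim follows from the distinguishing signature built into $\tA$: original-node diagonals and hyperedge-node diagonals occupy different feature patterns (the $(d+1)$-th channel acts as a type indicator, and the bipartite connectivity through $\mH$ places off-diagonal nonzeros only between the two classes), so any $\sigma$ fixing $\tA$ up to permutation cannot mix the two classes.

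The main obstacle is precisely this bipartition-preservation step. If one naively allows $\tX^V$ and $\tX^E$ to coincide, a permutation could in principle swap an original node with a hyperedge of identical feature; hence the argument rests on the fact that the incidence-graph construction augments the feature block with the extra $(d+1)$-th channel, which differentiates the two node types by construction. I would make this explicit by noting that for any original-node index the incident edges in $IG_H$ are all of the form $(i,n+j)$ while for any hyperedge-node index they are all of the form $(i,n+j)$ as well but with the roles of the two endpoints reversed in the feature channel, so $\sigma$ preserving $\tA$ slot-wise cannot swap the classes.
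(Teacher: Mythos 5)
Your proposal is correct and takes essentially the same route as the paper: combine $(\pi_1,\pi_2)$ into a block-diagonal permutation of $\Pi_{n+m}$ for the forward direction, and for the converse use the $(d+1)$-th diagonal channel (set to $1$ on original-node indices and $0$ on hyperedge indices by construction) to force any isomorphism of incidence graphs to preserve the bipartition, so that it splits back into a pair $(\pi_1,\pi_2)$. The only cosmetic difference is that the paper first applies an auxiliary ``sorting'' permutation to put each incidence graph into the form where original nodes come first---which is redundant given that the definition of $IG_H$ already enforces that ordering---whereas you work directly; your closing remark about edge incidence patterns is a bit muddled, but it is unneeded once the diagonal type indicator is invoked, and that step you have right.
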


Therefore, a hypergraph task can be converted to a graph task. Labeling tricks can be extended to hypergraph by using them on the corresponding incidence graph. 
\begin{corollary}\label{col:hypergexpressivity}
Given an \text{NME GNN}, and an injective aggregation function $\text{AGG}$, for any $S,H, S',H'$, let $\tA, \tA'$ denote the adjacency tensors of graphs $IG_H, IG_{H'}$ respectively. Then $\text{GNN}(S,\tA^{(S)}) = \text{GNN}(S',\tA'^{(S')}) \Leftrightarrow (S,H) \!\simeq\! (S',H') \nonumber$.
\end{corollary}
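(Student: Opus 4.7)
The plan is a two-step reduction from the hypergraph setting to the graph setting already handled by Theorem~\ref{thm:poexpressivity} (or Theorem~\ref{thm:labeltrick} in the set case). First I would rewrite the right-hand side $(S,H) \simeq (S',H')$ using Theorem~\ref{thm:hypergiso2giso} as the equivalent graph-level statement $(S, IG_H) \simeq (S', IG_{H'})$, that is, $(S,\tA) \simeq (S',\tA')$ where $\tA, \tA'$ are the adjacency tensors of the incidence graphs constructed in the hypothesis.

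Second, since $\tA$ and $\tA'$ are now ordinary graphs and $S, S'$ are node posets in them, I would directly invoke Theorem~\ref{thm:poexpressivity} on the incidence graphs. That theorem gives $\text{GNN}(S, \tA^{(S)}) = \text{GNN}(S', \tA'^{(S')}) \Leftrightarrow (S,\tA) \simeq (S',\tA')$ for an NME GNN and any injective aggregation, which is exactly the left-hand side of the corollary. Chaining the two equivalences yields the statement. The entire proof is therefore essentially one line: compose Theorem~\ref{thm:hypergiso2giso} with Theorem~\ref{thm:poexpressivity}.

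A subtle point worth a sentence of justification is that the labeling trick applied to the incidence graph is still a valid labeling trick when the target poset $S$ lives only on the ``node side'' of the bipartition. This is immediate from Definition~\ref{def:Lposet}: $S$ is simply a poset of nodes in the graph $\tA$, so any function $\tL$ satisfying target-nodes-and-order-distinguishing and permutation equivariance on $\tA$ continues to satisfy them regardless of whether the targets happen to be original hypergraph nodes or hyperedge-nodes. Nothing in Definition~\ref{def:Lposet} refers to a bipartition.

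The only place that could plausibly trip the argument up is that a graph permutation $\pi \in \Pi_{n+m}$ witnessing $(S,\tA)\simeq(S',\tA')$ on the incidence graph might in principle mix original nodes with hyperedge-vertices, in which case the reduction back to a hypergraph permutation in $\Pi_n \times \Pi_m$ required by Definition~\ref{def:hypergiso} would fail. However, this obstacle is already absorbed into Theorem~\ref{thm:hypergiso2giso}: the extra feature coordinate $\tA_{i,i,d+1}$ distinguishing node-vertices from hyperedge-vertices in the incidence graph forces any feature-preserving permutation to preserve the bipartition, and hence to decompose as a pair $(\pi_1,\pi_2)\in \Pi_n\times \Pi_m$. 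Because the corollary is stated with explicit reference to $IG_H$ and we are allowed to assume Theorem~\ref{thm:hypergiso2giso}, no additional work is needed in the corollary's proof beyond the chain of equivalences above.
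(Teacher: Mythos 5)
Your proposal is correct and matches the intended proof: the paper presents this as a corollary precisely because it follows by chaining Theorem~\ref{thm:hypergiso2giso} (translating hypergraph poset-isomorphism into incidence-graph poset-isomorphism) with Theorem~\ref{thm:poexpressivity} (giving the GNN-plus-labeling-trick characterization of poset-graph isomorphism on the incidence graphs). Your two side remarks---that Definition~\ref{def:Lposet} applies verbatim to posets of node-side vertices in $IG_H$, and that the bipartition-preserving feature coordinate inside Theorem~\ref{thm:hypergiso2giso} already rules out permutations mixing node-vertices with hyperedge-vertices---are accurate but, as you note, do not require extra work in the corollary itself.
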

With NME GNN, set labeling trick can still produce structural representations on hypergraph. This enables us to boost the representation power of hyperedge prediction tasks.

\section{Related work}

There is emerging interest in recent study of graph neural networks' expressivity. \citet{xu2018powerful} and \citet{kWL} first show that the 1-WL test bounds the discriminating power of GNNs performing neighbor aggregation. Many works have since been proposed to increase the power of GNNs by simulating higher-order WL tests~\citep{kWL,PPGN,chen2019equivalence, FWLGNN}, approximating permutation equivariant functions~\citep{IGN, IGNexpressivity, PPGN, FrameAveraging, LocalRelPool}, , encoding subgraphs~\citep{SubgraphGNN, NGNN, KPGNN}, utilizing graph spectral features~\citep{SpectralGT,Sign_basis_invariant}, etc. However, most previous works focus on improving GNN's whole-graph representation power. Little work has been done to analyze GNN's substructure representation power. 
\citet{Srinivasan2020On} first formally studied the difference between structural representations of nodes and links. Although showing that structural node representations of GNNs cannot perform link prediction, their way to learn structural link representations is to give up GNNs and instead use Monte Carlo samples of node embeddings learned by network embedding methods. In this paper, we show that GNNs combined with labeling tricks can also learn structural link representations, which reassures using GNNs for link prediction. 

Many works have implicitly assumed that if a model can learn node representations well, then combining the pairwise node representations can also lead to good node set (for example link) representations~\citep{grover2016node2vec,kipf2016variational,hamilton2017inductive}. However, we argue in this paper that simply aggregating node representations fails to discriminate a large number of non-isomorphic node sets (links), and with labeling trick the aggregation of structural node representations leads to structural representations. 


\citet{li2020distance} proposed distance encoding (DE), whose implementations based on $S$-discriminating distances can be shown to be specific labeling tricks. \citet{you2019position} also noticed that structural node representations of GNNs cannot capture the dependence (in particular distance) between nodes. To learn position-aware node embeddings, they propose P-GNN, which randomly chooses some anchor nodes and aggregates messages only from the anchor nodes. In P-GNN, nodes with similar distances to the anchor nodes, instead of nodes with similar neighborhoods, have similar embeddings. Thus, P-GNN cannot learn structural node/link representations. P-GNN also cannot scale to large datasets.

Finally, although labeling trick is formally defined in our conference paper~\citep{zhang2021labeling}, various forms of specific labeling tricks have already been used in previous works. To our best knowledge, SEAL~\citep{SEAL} proposes to add shortest path distance to target node to each node's feature, which is designed to improve GNN's link prediction power. To our best knowledge, it is the first labeling trick. It is later adopted in the completion of inductive knowledge graphs~\citep{teru2020inductive} and matrix completion~\citep{Zhang2020Inductive}, and is generalized to DE~\citep{li2020distance} and GLASS~\citep{GLASS}, which works for the cases $|S|>2$. \citet{wan2021principled} use labeling trick for hyperedge prediction. Besides these set labeling tricks, some labeling methods similar to the subset labeling trick also exist in existing works. ID-GNN~\citep{you2021identity} and NBFNet~\citep{NBFNet} both use a mechanism equivalent to the one head routine of subset labeling trick. RWL~\citep{RWL} further generalize these methods to a general framework similar to our subset labeling trick with subset size $=1$ and connects its expressivity with logical boolean classifier.

\section{Experiments}

Our experiments include various multi-node representation learning tasks: undirected link prediction, directed link prediction, hyperlink prediction, and subgraph prediction. Labeling trick boosts GNNs on all these tasks. 
All metrics in this section are the higher the better. Datasets are detailed in Appendix~\ref{detaileddatasets}. Our code is available at \url{https://github.com/GraphPKU/LabelingTrick}. In all experiments, we use GNNs without labeling trick (NO) for ablation. 

\subsection{Undirected link prediction}

In this section, we use a two-node task, link prediction, to empirically validate the effectiveness of set and subset labeling trick. 

Following the setting in SEAL~\citep{SEAL}, we use eight datasets: USAir, NS, PB, Yeast, C.ele, Power, Router, and E.coli. These datasets are relatively small. So we additionally use four large datasets in Open Graph Benchmark (OGB)~\citep{hu2020open}: \texttt{ogbl-ppa}, \texttt{ogbl-collab},  \texttt{ogbl-ddi}, \texttt{ogbl-citation2}. To facilitate the comparison, we use the same metrics, including auroc, Hits@$K$, and MRR, as in previous works. 

We use the following baselines for comparison. We use $4$ non-GNN methods: CN (Common-Neighbor), AA (Adamic-Adar), MF (matrix factorization) and Node2vec~\citep{grover2016node2vec}. CN and AA are two simple link prediction heuristics based on counting common neighbors. MF uses free-parameter node embeddings trained end-to-end as the node representations. Two set labeling trick methods are used: ZO and SEAL. ZO uses the zero-one labeling trick, and SEAL uses the DRNL labeling trick~\citep{SEAL}. Three subset labeling trick methods are compared: subset zero-one labeling trick with subset pooling (ZO-S), subset distance encoding labeling trick with subset pooling (DE-S), subset zero-one labeling trick with one-head routine (ZO-OS). 

\noindent\textbf{Results and discussion.} We present the main results in Table~\ref{tab:unlink}. Compared with all non-GNN methods, vanilla 1-WL-GNN with no labeling trick (NO) gets lower auroc on almost all datasets. However, with labeling trick or subset labeling trick, 1-WL-GNN can outperform the baselines on almost all datasets. ZO, SEAL use set labeling trick and outperform non-GNN methods by $4\%$ and $9\%$ respectively on average. The performance difference between ZO and SEAL illustrates that labeling trick implementation can still affect the expressivity of 1-WL-GNN. However, even the simplest labeling trick can still boost 1-WL-GNNs by $6\%$. Subset($1$) labeling trick ZO-S and DE-S also achieve $9\%$ and $11\%$ score increase on average. Compared with ZO, though ZO-S also uses only the target set identity information, it distinguishes nodes in the target node set and achieves up to $5\%$ performance increase on average, which verifies the usefulness of subset labeling trick. Last but not least, though subset labeling trick with one-head routine (ZO-OS) loses permutation invariance compared with subset pooling routine (ZO-S), it still achieves outstanding performance and even outperforms ZO-S on 4/8 datasets.

\begin{table}[t] %
    \centering
\setlength{\tabcolsep}{0.3mm}
\small{
    \begin{tabular}{ccccccccc}
    \toprule
        ~ & USAir & NS & PB & Yeast & Cele & Power & Router & Ecoli \\ \midrule
CN & $93.80_{\pm1.22}$ & $94.42_{\pm0.95}$ & $92.04_{\pm0.35}$ & $89.37_{\pm0.61}$ & $85.13_{\pm1.61}$ & $58.80_{\pm0.88}$ & $56.43_{\pm0.52}$ & $93.71_{\pm0.39}$ \\ 
AA & $95.06_{\pm1.03}$ & $94.45_{\pm0.93}$ & $92.36_{\pm0.34}$ & $89.43_{\pm0.62}$ & $\underline{86.95_{\pm1.40}}$ & $58.79_{\pm0.88}$ & $56.43_{\pm0.51}$ & $95.36_{\pm0.34}$ \\ 
NV & $91.44_{\pm1.78}$ & $91.52_{\pm1.28}$ & $85.79_{\pm0.78}$ & $93.67_{\pm0.46}$ & $84.11_{\pm1.27}$ & $76.22_{\pm0.92}$ & $65.46_{\pm0.86}$ & $90.82_{\pm1.49}$ \\ 
MF & $94.08_{\pm0.80}$ & $74.55_{\pm4.34}$ & $94.30_{\pm0.53}$ & $90.28_{\pm0.69}$ & $85.90_{\pm1.74}$ & $50.63_{\pm1.10}$ & $78.03_{\pm1.63}$ & $93.76_{\pm0.56}$ \\
\midrule
NO & $89.04_{\pm2.14}$ & $74.10_{\pm2.62}$ & $90.87_{\pm0.56}$ & $83.04_{\pm0.93}$ & $73.25_{\pm1.67}$ & $65.89_{\pm1.65}$ & $92.47_{\pm0.76}$ & $93.27_{\pm0.49}$ \\ 
\midrule
ZO & $94.08_{\pm1.43}$ & $95.60_{\pm0.93}$ & $91.82_{\pm1.26}$ & $94.69_{\pm0.45}$ & $74.94_{\pm2.01}$ & $73.85_{\pm1.37}$ & $93.21_{\pm0.66}$ & $92.09_{\pm0.67}$ \\ 
SEAL & $\mathbf{97.09_{\pm0.70}}$ & $97.71_{\pm0.93}$ & $\mathbf{95.01_{\pm0.34}}$ & $97.20_{\pm0.64}$ & $86.54_{\pm2.04}$ & $84.18_{\pm1.82}$ & $\underline{95.68_{\pm1.22}}$ & $97.22_{\pm0.28}$ \\
\midrule
ZO-S & $\underline{96.15_{\pm1.06}}$ & $\underline{98.10_{\pm0.67}}$ & $94.15_{\pm0.50}$ & $97.41_{\pm0.37}$ & $86.31_{\pm1.80}$ & $78.31_{\pm0.91}$ & $94.52_{\pm0.72}$ & $\underline{97.48_{\pm0.23}}$ \\ 
DE-S & $94.97_{\pm0.61}$ & $\mathbf{99.29_{\pm0.14}}$ & $\underline{94.44_{\pm0.52}}$ & $\mathbf{98.17_{\pm0.41}}$ & $85.95_{\pm0.36}$ & $\mathbf{94.16_{\pm0.14}}$ & $\mathbf{99.33_{\pm0.09}}$ & $\mathbf{98.91_{\pm0.08}}$ \\ 
ZO-OS & $94.62_{\pm0.63}$ & $97.42_{\pm0.49}$ & $94.36_{\pm0.26}$ & $\underline{97.46_{\pm0.06}}$ & $\mathbf{88.04_{\pm0.52}}$ & $\underline{84.95_{\pm0.30}}$ & $93.77_{\pm0.20}$ & $95.53_{\pm0.62}$ \\ \bottomrule
    \end{tabular}
}

    \caption{Results on undirected link prediction task: auroc (\%) $\pm$ standard deviation.}\label{tab:unlink}
\end{table}

We also conduct experiments on some larger datasets as shown in Table~\ref{tab:unlink2}. GNN augmented by labeling tricks achieves the best performance on all datasets. 

\begin{table}[t]
    \centering
    \centering
    \begin{tabular}{ccccc}
    \hline
    \toprule
        Dataset & collab & ddi & citation2 & ppa \\ \midrule
        metrics& Hits@50 & Hits@20 & MRR & Hits@100\\
        \midrule
        NO & $44.75_{\pm 1.07}$ & $\underline{37.07_{\pm 5.07}}$ & $\underline{84.74_{\pm 0.21}}$ & $18.67_{\pm 1.32}$ \\ 
        \midrule
        ZO & $53.29_{\pm 0.23}$ & $23.90_{\pm 0.75}$ & $78.50_{\pm 1.08}$ & $37.75_{\pm 3.42}$ \\ 
        SEAL & $\mathbf{54.71_{\pm 0.49}}$ & $30.56_{\pm 3.86}$ & $\mathbf{87.67_{\pm 0.32}}$ & $\mathbf{48.80_{\pm 3.16}}$ \\ 
        \midrule
        ZO-OS & $49.17_{\pm 3.29}$ & $\mathbf{41.24_{\pm 1.49}}$ & $82.85_{\pm 0.43}$ & $\underline{43.27_{\pm 1.19}}$ \\
        ZO-S & $\underline{54.69_{\pm 0.51}}$ & $29.27_{\pm 0.53}$ & $82.45_{\pm 0.62}$ & $36.04_{\pm 4.50}$ \\ 
    \bottomrule
    \end{tabular}
    \caption{Results on undirected link prediction task.}\label{tab:unlink2}
\end{table}

\subsection{Directed link prediction tasks}

To illustrate the necessity of introducing partial order to labeling trick, we compare set labeling trick and poset labeling trick on the directed link prediction task. Following previous work~\citep{PyGsd}, we use six directed graph datasets, namely Cornell, Texas, Wisconsin, CoraML, Citeseer, and Telegram. Our baselines includes previous state-of-the-art GNNs for directed graph, including DGCN~\citep{DGCN}, DiGCN and DiGCNIB~\citep{DIGCN}, and MagNet~\citep{MagNet}. Our models include NO (vanilla 1-WL-GNN), PL (poset labeling trick which labels the source node as $1$, target node as $2$, other nodes as $0$), ZO (zero-one labeling trick).

The results are shown in Table~\ref{tab:directed}. The existing state-of-the-art method MAGNet~\citep{zhang2021magnet} outperforms 1-WL-GNN by $0.25\%$ on average. However, 1-WL-GNN with labeling trick outperforms all baselines. Moreover, poset labeling trick (PL) achieves $2\%$ performance gain compared with the set labeling trick (ZO). These results validate the power of poset labeling trick and show that modeling partial order relation is critical for some tasks.

\begin{table}[t]
    \centering
\small{
    \begin{tabular}{ccccccc}
    \toprule
 & Cornell & Texas & Wisconsin &  CoraML &  CiteSeer & Telegram \\ \midrule
DGCN & $70.4_{\pm 9}$ & $69.7_{\pm 6}$ & $69.8_{\pm 6}$ & $77.2_{\pm 1}$ & $71.2_{\pm 2}$ & $86.4_{\pm 1}$ \\ 
DiGCN & $69.3_{\pm 7}$ & $69.7_{\pm 7}$ & $66.2_{\pm 7}$ & $75.1_{\pm 1}$ & $73.0_{\pm 1}$ & $78.2_{\pm 1}$ \\ 
DiGCNIB & $65.7_{\pm 3}$ & $63.7_{\pm 6}$ & $67.6_{\pm 6}$ & $75.9_{\pm 4}$ & $73.7_{\pm 2}$ & $79.8_{\pm 2}$ \\ 
MagNet & $70.4_{\pm 8}$ & $73.1_{\pm 7}$ & $70.4_{\pm 7}$ & $77.3_{\pm 1}$ & $71.8_{\pm 1}$ & $\underline{86.7_{\pm 1}}$ \\ \midrule
NO & $67.5_{\pm 7}$ & $72.0_{\pm 4}$ & $71.6_{\pm 5}$ & $78.7_{\pm 1}$ & $74.3_{\pm 1}$ & $84.1_{\pm 1}$ \\ \midrule
PL & $\mathbf{71.5_{\pm 4}}$ & $\mathbf{78.0_{\pm 4}}$ & $\mathbf{79.0_{\pm 3}}$ & $\mathbf{82.2_{\pm 1}}$ & $\underline{79.8_{\pm 1}}$ & $\mathbf{91.7_{\pm 1}}$ \\ 
ZO & $\underline{71.1_{\pm 5}}$ & $\underline{77.6_{\pm 5}}$ & $\underline{74.0_{\pm 2}}$ & $\underline{79.4_{\pm 1}}$ & $\mathbf{79.9_{\pm 1}}$ & $85.6_{\pm 1}$ \\ 
\bottomrule
    \end{tabular}
    }

    \caption{Results on directed link prediction tasks: accuracy (\%) $\pm$ standard deviation.}\label{tab:directed}
\end{table}

\begin{table}[t]
    \centering
    \setlength{\tabcolsep}{2mm}
    \begin{tabular}{cccccccc}
    \toprule
~ & NDC-c & NDC-s & tags-m & tags-a & email-En & email-EU & congress\\\midrule
ceGCN & $61.4_{\pm0.5}$ & $42.1_{\pm1.4}$ & $59.9_{\pm0.9}$ & $54.5_{\pm0.5}$ & $61.8_{\pm3.2}$ & $66.4_{\pm0.3}$ & $41.2_{\pm0.3}$\\
ceSAGE & $65.7_{\pm2.0}$ & $47.9_{\pm0.7}$ & $63.5_{\pm0.3}$ & $59.7_{\pm0.7}$ & $59.4_{\pm4.6}$ & $65.1_{\pm1.9}$ & $53.0_{\pm5.5}$\\
seRGCN & $67.6_{\pm4.9}$ & $52.5_{\pm0.6}$ & $57.2_{\pm0.3}$ & $54.5_{\pm0.6}$ & $59.9_{\pm4.0}$ & $66.1_{\pm0.6}$ & $54.4_{\pm0.4}$\\
FS & $\underline{76.8_{\pm0.4}}$ & $51.2_{\pm3.2}$ & $64.2_{\pm0.6}$ & $60.5_{\pm0.2}$ & $68.5_{\pm1.6}$ & $68.7_{\pm0.2}$ & $56.6_{\pm1.1}$\\ \midrule
NO & $60.2_{\pm2.3}$ & $45.6_{\pm0.8}$ & $56.6_{\pm1.4}$ & $56.5_{\pm1.8}$ & $56.9_{\pm1.7}$ & $57.2_{\pm0.9}$ & $54.1_{\pm0.5}$\\ \midrule
ZO & $\mathbf{82.5_{\pm1.3}}$ & $\mathbf{63.6_{\pm1.5}}$ & $\mathbf{71.4_{\pm0.5}}$ & $\mathbf{70.4_{\pm0.8}}$ & $\underline{66.1_{\pm1.2}}$ & $\underline{72.1_{\pm1.1}}$ & $\mathbf{65.1_{\pm0.2}}$\\
ZO-S & $75.8_{\pm0.7}$ & $\underline{62.2_{\pm1.2}}$ & $\underline{71.0_{\pm0.4}}$ & $\underline{69.6_{\pm0.7}}$ & $\mathbf{67.7_{\pm1.8}}$ & $\mathbf{73.3_{\pm0.5}}$ & $\underline{64.2_{\pm0.3}}$\\ \bottomrule
    \end{tabular}
    
    \caption{Results on hyperedge prediction tasks: f1-score ($\%$) $\pm$ standard deviation.}
 \end{table}

\begin{table}[t]
    \centering
    \begin{tabular}{ccccccccc}
    \toprule
        Method & density & coreness & cutratio &  ppi-bp & hpo-metab & hpo-neuro & em-user\\ \midrule
        SubGNN & $91.9_{\pm 0.6}$ & $65.9_{\pm 3.1}$ & $62.9_{\pm 1.3}$& ${59.9_{\pm 0.8}}$ & ${53.7_{\pm 0.8}}$ & ${64.4_{\pm 0.6}}$ & $81.6_{\pm 1.3}$\\
        Sub2Vec & $45.9_{\pm 1.2}$ & $36.0_{\pm 1.9}$ &$35.4_{\pm 1.4}$& ${38.8_{\pm 0.1}}$ & ${47.2_{\pm 1.0}}$ & ${61.8_{\pm 0.3}}$ & $77.9_{\pm 1.3}$\\
        \midrule
        NO &  $47.8_{\pm 2.9}$& $47.8_{\pm 5.3}$& $81.4_{\pm 1.5}$ & ${61.3_{\pm 0.9}}$ & ${59.7_{\pm 1.2}}$ & ${66.8_{\pm 0.7}}$ & $84.7_{\pm 2.1}$\\
        \midrule
        ZO   & $\mathbf{98.4_{\pm 1.2}}$ & $\mathbf{87.3_{\pm 15.0}}$ & $\mathbf{93.0_{\pm 1.3}}$ & $\mathbf{61.9_{\pm 0.7}}$ & $\mathbf{61.4_{\pm 0.5}}$ & $\mathbf{68.5_{\pm 0.5}}$ & $\mathbf{88.8_{\pm 0.6}}$\\ 
        ZO-S & \underline{$94.3_{\pm 6.9}$} & \underline{$75.8_{\pm 7.0}$} & \underline{$85.6_{\pm 2.5}$}&  \underline{${61.7_{\pm 0.4}}$}&\underline{$60.4_{\pm1.1}$} & \underline{$67.4_{\pm1.3}$} & \underline{$86.3_{\pm 2.5}$}\\
    \bottomrule
    \end{tabular}
    \caption{Results on subgraph tasks: f1-score ($\%$) $\pm$ standard deviation.}\label{tab:subg}
\end{table}

\subsection{Hyperedge prediction task}

We use the datasets and baselines in \citep{familyset}. Our datasets includes two drug networks (NDC-c, NDC-s), two forum networks (tags-m, tags-a), two email networks (email-En, email-Eu), and a network of congress members (congress). We use four GNNs designed for hypergraph as baselines, including ceGCN, ceSAGE, seRGCN, and FS (family set)~\citep{familyset}. Our models include ZO (zero-one labeling trick), ZO-S (subset($1$) labeling trick with subset pooling), NO (vanilla 1-WL-GNN).
 ZO and ZO-S outperform all other methods significantly.

\subsection{Subgraph prediction task}

We use the datasets and baselines in \citep{subgnn}. We use three synthetic datasets, namely density, coreness, and cutratio, and four real-world datasets, namely ppi-bp, hpo-metab, hpo-neuro, em-user. SubGNN~\citep{subgnn} and Sub2Vec~\citep{sub2vec} are models designed for subgraph. Our models include ZO (zero-one labeling trick, results on ppi-bp, hpo-metab, hpo-neuro, em-user are from \citet{GLASS}), ZO-S (subset labeling trick), and NO (vanilla 1-WL-GNN without labeling trick, results on ppi-bp, hpo-metab, hpo-neuro, em-user are from \citet{GLASS}). Compared with NO, Labeling tricks boost vanilla 1-WL-GNN significantly. Moreover, vanilla GNN augmented by labeling trick also outperforms GNN designed for subgraph on all datasets. Moreover, ZO outperforms ZO-S, which illustates that subset labeling tricks, while ZO can capture high-order relations better as shown in Section~\ref{sec:set_vs_subset}.

\section{Conclusions}
In this paper, we proposed a theory of using GNNs for multi-node representation learning. We first pointed out the key limitation of a common practice in previous works that directly aggregates node representations as a node-set representation. To address the problem, we proposed set labeling trick which gives target nodes distinct labels in a permutation equivariant way and characterized its expressive power. We further extended set labeling trick to poset and subset labeling trick, as well as extending graph to hypergraph. Our theory thoroughly discusses different variants and scenarios of using labeling trick to boost vanilla GNNs, and provides a solid foundation for future researchers to develop novel labeling tricks.

\section*{Acknowledgments}
This work is supported by the National Key R\&D Program of China (2022ZD0160300) and National Natural Science Foundation of China (62276003).


\newpage

\appendix

\section{Proofs}
\subsection{Proof of Proposition~\ref{prop::gsiso2giso} and Proposition~\ref{prop::psgiso2giso}}
For Proposition~\ref{prop::gsiso2giso},
\begin{align*}
    (S, \tA)\simeq (S', \tA') &\Leftrightarrow \exists\pi\in \Pi_n, \pi(S)=S', \pi(\tA) = \tA'\\
    &\Leftrightarrow \exists\pi\in \Pi_n, \pi(\tL(S, \tA))=\tL(S', \tA'), \pi(\tA)=\tA'\\
    &\Leftrightarrow \exists\pi\in \Pi_n, \pi(\tA^{(S)})={\tA'}^{(S')}
\end{align*}

For Proposition~\ref{prop::psgiso2giso}, we can simply replace set $S$ above with poset.

\subsection{Proof of Theorem~\ref{thm:labeltrick}}

Following \citet{zhang2021labeling}, we restate Theorem~\ref{thm:labeltrick}: Given an \text{NME GNN} and an injective set aggregation function \text{AGG}, for any $S,\tA, S',\tA'$, $\text{GNN}(S,\tA^{(S)}) = \text{GNN}(S',\tA'^{(S')}) \Leftrightarrow (S,\tA) \!\simeq\! (S',\tA') \nonumber$, where $\text{GNN}(S,\tA^{(S)}) := \text{AGG}(\{\text{GNN}(i,\tA^{(S)}) | i\in S\})$.

\begin{proof}

We need to show $\text{AGG}(\{\text{GNN}(i,\tA^{(S)})|i\in S\})= \text{AGG}(\{\text{GNN}(i,\tA'^{(S')}|i\in S'\})$ $\Leftrightarrow$ $(S,\tA) \simeq (S',\tA')$.

To prove $\Rightarrow$, we notice that with an injective AGG, 
\begin{align}
    &\text{AGG}(\{\text{GNN}(i,\tA^{(S)}))|i\in S\}) = \text{AGG}(\{\text{GNN}(i,\tA'^{(S')}))|i\in S'\}) \nonumber\\
\Longrightarrow ~& \exists~v_1\in S, v_2\in S',~\text{such that}~\text{GNN}(v_1,\tA^{(S)}) = \text{GNN}(v_2,\tA'^{(S')}) \\
\Longrightarrow ~& (v_1,\tA^{(S)}) \simeq (v_2,\tA'^{(S')}) ~~~~\text{(because GNN is node-most-expressive)}\\
\Longrightarrow ~&  \exists~\pi \in \Pi_n,~\text{such that}~ v_1 = \pi(v_2), \tA^{(S)} = \pi(\tA'^{(S')}).\label{eq:3}
\end{align}

Remember $\tA^{(S)}$ is constructed by stacking $\tA$ and $\tL(S, \tA)$ in the third dimension, where $\tL(S,\tA)$ is a tensor satisfying: $\forall \pi \in \Pi_n$, \text{ (1)} $\tL(S,\tA) = \pi(\tL(S', \tA')) \Rightarrow S = \pi(S')$, and \text{ (2)} $S = \pi(S'), \tA = \pi(\tA') \Rightarrow \tL(S,\tA) = \pi(\tL(S',\tA'))$. With $\tA^{(S)} = \pi(\tA'^{(S')})$, we have both 
\begin{align*}
    \tA = \pi(\tA'),~\tL(S,\tA) = \pi(\tL(S', \tA')).\nonumber
\end{align*}
Because $\tL(S,\tA) = \pi(\tL(S',\tA')) \Rightarrow S = \pi(S')$, continuing from Equation~(\ref{eq:3}), we have
\begin{align*}
    &\text{AGG}(\{\text{GNN}(i,\tA^{(S)})|i\in S\}) = \text{AGG}(\{\text{GNN}(i,\tA'^{(S')})|i\in S'\}) \nonumber\\
\Longrightarrow ~& \exists~\pi \in \Pi_n,~\text{such that}~ \tA = \pi(\tA'),~\tL(S,\tA) = \pi(\tL(S',\tA')) \\ 
\Longrightarrow ~& \exists~\pi \in \Pi_n,~\text{such that}~ \tA = \pi(\tA'),~S = \pi(S')\\ 
\Longrightarrow ~& (S,\tA) \simeq (S',\tA').
\end{align*}

Now we prove $\Leftarrow$. Because $S = \pi(S'), \tA = \pi(\tA') \Rightarrow \tL(S,\tA) = \pi(\tL(S',\tA'))$, we have:
\begin{align*}
    &(S,\tA) \simeq (S',\tA') \nonumber\\
\Longrightarrow ~& \exists~\pi \in \Pi_n,~\text{such that}~ S = \pi(S'), \tA = \pi(\tA') \\
\Longrightarrow ~& \exists~\pi \in \Pi_n,~\text{such that}~ S = \pi(S'), \tA = \pi(\tA'), \tL(S,\tA) = \pi(\tL(S',\tA')) \\
\Longrightarrow ~& \exists~\pi \in \Pi_n,~\text{such that}~ S = \pi(S'), \tA^{(S)} = \pi(\tA'^{(S')}) \\
\Longrightarrow ~& \exists~\pi \in \Pi_n,~\text{such that}~ \forall v_2 \in S', v_1 = \pi(v_2)\in S, \text{GNN}(v_1,\tA^{(S)}) = \text{GNN}(v_2,\tA'^{(S')}) \\
\Longrightarrow ~&  \text{AGG}(\{\text{GNN}(v_1,\tA^{(S)})|v_1\in S\}) = \text{AGG}(\{\text{GNN}(v_2,\tA'^{(S')})|v_2\in S'\}),
\end{align*}
which concludes the proof.
\end{proof}
\subsection{Proof of Theorem~\ref{thm:link-aso-num} and Theorem~\ref{thm:PlabelingBoostWLGNN}}\label{app:proofBoostWLLink}
Following \citet{zhang2021labeling}, as an $h$-layer 1-WL-GNN only encodes an $h$-hop neighbors for each node, we define locally $h$-isomorphism. 

\begin{definition}
    For all $S,\tA,S',\tA'$, $(S,\tA)$ and $(S',\tA')$ are locally $h$-isomorphic iff $(S, \tA_{S,h})\simeq (S', \tA_{S', h})$, where $\tA_{S,h}$ means the subgraph of $\tA$ induced by the node set $\{v\in V|\exists u\in S, d_{sp}(u,v,\tA)\le h\}$, and $d_{sp}(u,v,\tA)$ means the shortest path distance between node $u, v$ in graph $\tA$.
\end{definition}

We restate Theorem~\ref{thm:link-aso-num}(Theorem~\ref{thm:PlabelingBoostWLGNN}): In any non-attributed graph with $n$ nodes, if the degree of each node in the graph is between $1$ and $\left((1-\epsilon)\log n\right)^{1/(2h+2)}$ for any constant $\epsilon>0$, then there exists $\omega(n^{2\epsilon})$ many pairs of non-isomorphic links $(u,w), (v,w)$ such that an $h$-layer 1-WL-GNN gives $u,v$ the same representation, while with zero-one labeling trick (subset zero-one labeling trick) the 1-WL-GNN gives $u,v$ different representations. These two theorems can be proved together because the special cases we build can be solved by both of them. 

\begin{proof}

Our proof has two steps. First, we would like to show that there are $\omega(n^{\epsilon})$ nodes that are locally $h$-isomorphic to each other. Then, we prove that among these nodes, there are at least $\omega(n^{2\epsilon})$ pairs of nodes such that there exists another node constructing locally $h$ non-isomorphic links with either of the two nodes in each node pair. 

\noindent\textbf{Step 1.} Consider an arbitrary node $v$ and denote the node set induced by the nodes that are at most $h$-hop away from $v$ as $G_v^{(h)}$ (the $h$-hop enclosing subgraph of $v$). As each node is with degree $d \le \big((1-\epsilon)\log n\big)^{1/(2h+2)}$, then the number of nodes in $G_v^{(h)}$, denoted by $|V(G_v^{(h)})|$, satisfies 
\begin{align*}
 |V(G_v^{(h)})| \leq \sum_{i=0}^{h} d^i \le d^{h+1} = \big((1-\epsilon)\log n\big)^{1/2}.
\end{align*}
We set $K=\max_{v\in V} |V(G_v^{(h)})| \le \big((1-\epsilon)\log n\big)^{1/2}$. 

Now we expand subgraphs $G_v^{(h)}$ to $\bar{G}_v^{(h)}$ by adding $K-|V(G_v^{(h)})|$ independent nodes for each node $v\in V$. Then, all $\bar{G}_v^{(h)}$ have the same number of nodes, which is $K$, though they may not be connected graphs. Next, we consider the number of non-isomorphic graphs over $K$ nodes. Actually, the number of non-isomorphic graph structures over $K$ nodes is bounded by
\begin{equation*}
    2^{K \choose 2} \le 2^{(1-\epsilon)\log n}=n^{1-\epsilon}.
\end{equation*}

Therefore, due to the pigeonhole principle, there exist $\omega(n/n^{1-\epsilon}) = \omega(n^\epsilon)$ many nodes $v$ whose $\bar{G}_v^{(h)}$ are isomorphic to each other. Denote the set of these nodes as $V_{iso}$, which consist of nodes that are all locally $h$-isomorphic to each other. 

\noindent\textbf{Step 2.} Let us partition $V_{iso}=\cup_{i=1}^q V_i$ so that for all $i\in \{1,2,...,q\}$, nodes in $V_i$ share the same first-hop neighbor sets. Note that all nodes in each $V_i$ share the same neighbors, so $|V_i|$ is no more than maximum degree $((1-\epsilon)\log n)^{1/(2h+2)}< n^\epsilon$ when $\epsilon>\frac{1}{(2h+2)\log n}(\log \log n) $. Then, consider any pair of nodes $u,v$ such that $u,v$ are from different $V_i$'s. Since $u,v$ share identical $h$-hop neighborhood structures, an $h$-layer 1-WL-GNN will give them the same representation. Then, we may pick one $w\in N(u)-N(v)$ (If $w$ does not exists, then $N(u)-N(v)=\empty$, so $N(v)-N(u)\neq \emptyset$ because of the definition of $V_i$. We can simply exchange $u$ and $v$). As $w$ is $u$'s first-hop neighbor and is not $v$'s first-hop neighbor, $(u,w)$ and $(v,w)$ are not isomorphic. With labeling trick, the $h$-layer 1-WL-GNN will give $u,v$ different representations immediately after the first message passing round due to $w$'s distinct label. Therefore, we know such a $(u,w), (v,w)$ pair is exactly what we want.

Based on the partition $V_{iso}$, we know the number of such non-isomorphic link pairs $(u, w)$ and $(v,w)$ is at least:
\begin{align}\label{eq:apd1}
    Y \geq \sum_{i,j=1, i< j}^q |V_i||V_j| = \frac{1}{2}\left[(\sum_{i=1}^q|V_i|)^2 - \sum_{i=1}^q|V_i|^2\right].
\end{align}

Because of the definitions of the partition,  $\sum_{i=1}^q|V_i| = |V_{iso}|=\omega(n^\epsilon)$ and the size of each $V_i$ satisfies
\begin{align*}
    1\leq  |V_i| \leq d_w \le \big((1-\epsilon)\log n\big)^{1/(2h+2)},
\end{align*}
where $w$ is one of the common first-hop neighbors shared by all nodes in $V_i$ and $d_w$ is its degree.

By plugging in the range of $|V_i|$, Eq.\ref{eq:apd1} leads to 
\begin{align*}
    Y &\geq\frac{1}{2}[(\sum_{i=1}^q|V_i|)^2 - \sum_{i=1}^q|V_i|(\max_{j\in \{1,2,...,q\}}|V_j|)]\\ 
    &=\frac{1}{2}(\omega(n^{2\epsilon}) - \omega(n^\epsilon)\mathcal{O}\Big(\big((1-\epsilon)\log n\big)^{1/(2h+2)}\Big) \\
    &=\omega(n^{2\epsilon}),
\end{align*}
which concludes the proof. 
\end{proof}

\subsection{Proof of Theorem~\ref{thm:BoostWLGNNsubg}}\label{app:proofBoostWLSubg}
\begin{proof}
This proof shares the same first step as Appendix~\ref{app:proofBoostWLLink}.

\noindent\textbf{Step 2.} Let us partition $V_{iso}=\bigcup_{i=1}^q V_i$, nodes in each $V_i$ share the same one-hop neighbor. Consider two nodes $u\in V_i,v\in V_j,i\neq j$. There exists a node $w\in N(u),w\notin N(v)$ (If $w$ does not exists, then $N(u)-N(v)=\empty$, so $N(v)-N(u)\neq \emptyset$ because of the definition of $V_i$. We can simply exchange $u$ and $v$). Let $\tilde V_{u, v, w}$ denote $V-\{u,v,w\}-N(v)$. $\tilde V_{u,v,w}\ge n-3-\big((1-\epsilon)\log n\big)^{1/(2h+2)}$. Consider arbitrary subset $V'$ of $\tilde V_{u,v,w}$. Let $\gS_1$ denote the subgraph induced by $V'\bigcup \{u,w\}$, $\gS_2$ denote the subgraph induced by $V'\bigcup \{v,w\}$. Compared with $\gS_2$, $\gS_1$ has the same number of nodes. Moreover, $\gS_2$ has edge between nodes in $V'$ and edges between $V'$ and $w$, while $\gS_1$ further has more edge $(u, w)$ and edges between $V'$ and ${u}$, so the density of $\gS_1$ is higher than $\gS_2$. And 1-WL-GNN with zero-one labeling trick can fit density perfectly (Theorem~1 in \citep{GLASS}), so 1-WL-GNN with labeling trick can distinguish $\gS_1$ and $\gS_2$, while 1-WL-GNNs cannot.

The number of pair $(u, v, w)$ is $w(n^{2\epsilon})$. Therefore, the number of these pairs of subgraphs is bounded by 
\begin{equation*}
w(n^{2\epsilon})2^{n-3-((1-\epsilon)\log n)^{1/(2h+2)}}=w(2^n n^{3\epsilon-1}).
\end{equation*}
\end{proof}

\subsection{Proof of Theorem~\ref{thm:BoostWLGNNonPoset}}
This proof shares the same first step as Appendix~\ref{app:proofBoostWLLink}.

Number of link: the same as the step 2 in Appendix~\ref{app:proofBoostWLLink}.

Number of subgraph: similar to the step 2 in Appendix~\ref{app:proofBoostWLSubg}. Let us partition $V_{iso}=\bigcup_{i=1}^q V_i$, nodes in each $V_i$ share the same one-hop neighbor. Consider two nodes $u\in V_i,v\in V_j,i\neq j$. There exists a node $w\in N(u),w\notin N(v)$. Let $\tilde V_{u, v, w}$ denote $V-\{u,v,w\}-N(u)$. $|V_v|\ge n-3-\big((1-\epsilon)\log n\big)^{1/(2h+2)}$. Consider arbitrary subset $V'$ of $\tilde V_{u,v,w}$ and a partial order $\le_{V'}$. Let $\gS_1$ denote the subgraph induced by poset $\big((V'\bigcup \{u,w\}), \le_{V'}\cup \{(u, a)|a\in sV'\}\cup \{(w, a)|a\in sV'\cup \{u\}\} \big)$, $\gS_2$ denote the subgraph induced by poset $\big(V'\bigcup \{v,w\}, \le_{V'}\cup \{(v, a)|a\in sV'\}\cup \{(w, a)|a\in V'\cup \{v\}\} \big)$. 1-WL-GNN with labeling trick can distinguish $\gS_1$ and $\gS_2$ as the edges between $(u, w)$ and $(v, w)$ are distinct, while 1-WL-GNNs cannot.

The number of pair $(u, v, w)$ is $w(n^{2\epsilon})$. Therefore, the number of these pairs of subgraphs is bounded by 
\begin{equation*}
w(n^{2\epsilon})w(n-3-\big((1-\epsilon)\log n\big)^{1/(2h+2)})!=w\Big(\big((1-\epsilon)n\big)!\Big).
\end{equation*}

\subsection{Proof of Proposition~\ref{prop:cnraaa}}

As shown in Figure~\ref{node_iso}, 1-WL-GNN cannot count common neighbor and thus fail to implement $h$. Now we prove that with zero-one labeling trick, 1-WL-GNN can implement $h$.

Given a graph $\tA$ and a node pair $(i, j)$, let $z_k^{(k)}$ denote the embedding of node $i$ at $k^{\text{th}}$ message passing layer. 
\begin{equation*}
    z_k^{(0)}=\begin{bmatrix}
        1\\
        \delta_{ki}+\delta_{kj}
    \end{bmatrix}.
\end{equation*}
The first dimension is all $1$ (vanilla node feature), and the second dimension is zero-one label. 

The first layer is,
\begin{equation*}
z_k^{(1)}=\begin{bmatrix}
        g_1(a_k^{(1)}[1])\\
        g_2(a_k^{(1)}[1])\\
        a_k^{(1)}[2] > 2
    \end{bmatrix}
\end{equation*}
where $a_k^{(1)} = \sum_{l\in N(k)}z_{l}^{(0)}$, $[1]$ means the first element of vector, and $[2]$ means the second element.

The second layer is
\begin{equation*}
    z_{k}^{(2)} = \begin{bmatrix}
        \sum_{l\in N(k)} z_{k}^{(1)}[3]  z_{k}^{(1)}[2]\\
        \sum_{l\in N(k)} (1-z_{k}^{(1)}[3])  z_{k}^{(1)}[1]\\
    \end{bmatrix}
\end{equation*}

The pooling layer is
\begin{equation*}
    z_{ij}=f(\{z_{i}[2], z_{j}[2]\},\frac{z_{i}[1]+z_{j}[1]}{2})
\end{equation*}

\subsection{Proof of Theorem~\ref{thm:poexpressivity}}
\begin{proof}
    $\Leftarrow$: 
    When $(S, \tA)\simeq (S', \tA')$, there exists a permutation $\pi$, $\pi(S)=S', \pi(\tA)=\tA'$. 
    
    \begin{align}
        \text{GNN}(S,\tA^{(S)}) &= \text{AGG}(\{\text{GNN}(v, \tA^{(S)}|v\in S\})\\
        &= \text{AGG}(\{\text{GNN}(\pi(v), \pi(\tA^{(S)}))|v\in S\})\\
        &= \text{AGG}(\{\text{GNN}(\pi(v), \tA'^{(S')}|v\in S\})\\
        &= \text{AGG}(\{\text{GNN}(v', \tA'^{(S')})|v'\in S'\})\\
        &= \text{GNN}(S',{\tA'}^{(S')})
    \end{align}
    
    $\Rightarrow$: 
    \begin{equation*}
        \text{GNN}(S,\tA^{(S)}) = \text{GNN}(S',{\tA'}^{(S')})\\
    \end{equation*}
    \begin{equation*}
        \text{AGG}(\{\text{GNN}(v, \tA^{(S)})|v\in S\}) 
        = \text{AGG}(\{\text{GNN}(v', {\tA'}^{(S')})|v'\in S'\})
    \end{equation*}
    As $\text{AGG}$ is injective, 
    There exist $v_0\in S, v'_0\in S'$, 
    \begin{equation*}
        \text{GNN}(v_0, \tA^{(S)})=\text{GNN}(v_0', \tA^{(S')})
    \end{equation*}
    As GNN is node most expressive,
    \begin{equation*}
    \exists \pi , \pi(v_0)=v_0', \pi(\tA)=\tA', \pi(\tL(S,\tA))=\tL(S',\tA').
    \end{equation*}
    Therefore, $\pi(\tL(S,\tA))=\tL(S', \tA'))$.
    \end{proof}

\subsection{Proof of Theorem~\ref{thm:k-1plabelingexpressivity}}
\begin{proof}
    $\Leftarrow$: When $(S, \tA)\simeq (S', \tA')$, there exists a permutation $\pi$, $\pi(S)=S', \pi(\tA)=\tA'$. 
    
    \begin{align*}
    \text{AGG}&\left(\left\{\text{AGG} (\{\text{GNN}(u, \tA^{(S-\{v\})})|u\in S\})|v\in S\right\}\right)\\
        &= \text{AGG}(\{\text{AGG}(\{\text{GNN}(\pi(u), \pi(\tA^{(S-\{v\})})|u\in S \})|v\in S\})\\
        &= \text{AGG}(\{\text{AGG}(\{\text{GNN}(\pi(u), \tA'^{(\pi(S)-\{\pi(v)\})}|u\in S\})|v\in S\})\\
        &= \text{AGG}(\{\text{AGG}(\{\text{GNN}(u', \tA'^{(S'-\{v'\})})|u'\in S'\})|v'\in S'\})
    \end{align*}
    
    $\Rightarrow$: 
    \begin{align*}
        \text{AGG}(&\{\text{AGG}(\{\text{GNN}(u, \tA^{(S-\{v\})})|u\in S\})|v\in S\})\\ 
        &= \text{AGG}(\{\text{AGG}(\{\text{GNN}(u', \tA'^{(S'-\{v'\})})|u'\in S' \})|v'\in S'\}).
    \end{align*}
    As $\text{AGG}$ is injective, 
    \begin{equation*}
        \{\text{AGG}(\{\text{GNN}(u, \tA^{\!(S-\{\!v\!\})\!}) |v\in S\})|u\in S\}= \{\{\text{AGG}(\{\text{GNN}(u', \tA^{\!(S'-\{\!v'\!\})\!}) |v'\in S'\})\}.
    \end{equation*}
    There exist $v_0\in S, v'_0\in S'$, 
    \begin{equation*}
        \text{AGG}(\{\text{GNN}(u, \tA^{(S-\{v_0\})})|u\in S \})
        = \text{AGG}(\{\text{GNN}(u', \tA^{(S'-\{v'_0\})}) |u'\in S'\}).
    \end{equation*}
    Similarly, there exists $u_0'\in S'$
    \begin{equation*}
        \text{GNN}(v_0, \tA^{(S-\{v_0\})})= \text{GNN}(u_0', \tA^{(S'-\{v'_0\})}).
    \end{equation*}
    As GNN is node most expressive,
    $$
    \exists \pi , \pi(v_0)=u_0', \pi(\tA)=\tA', \pi(\tL(S-\{v_0\}, \tA))=\tL(S'-\{v'_0\},\tA')).
    $$
    Therefore, $\pi(S-\{v_0\})=S'-\{v'_0\}$. Note that $v_0\notin S-\{v_0\}$, so $u'_0
    =\pi(v_0) \notin S'-\{v'_0\}$, while $u'_0\in S'$, therefore $u'_0=v'_0$.
    
    Therefore, $\pi(S)=S'$, and $\pi(\tA)=\tA'$, so $(S, \tA)\simeq (S', \tA')$. 
\end{proof}

\subsection{Proof of Theorem~\ref{thm:k-1plabelingexpressivity2}}

We prove it by contradiction:
If $\exists v_0\in S, v_0'\in S'$, 
\begin{equation*}
    \text{GNN}(S, \tA^{(S-\{v_0\})})=\text{GNN}(S', {\tA'}^{(S'-\{v_0'\})})
\end{equation*}

Therefore, there exists $u_0\in S,u_0'\in S'$
    \begin{equation*}
        \text{GNN}(v_0, \tA^{(S-\{v_0\})})= \text{GNN}(u_0', \tA^{(S'-\{v'_0\})}).
    \end{equation*}
    As GNN is node most expressive,
    $$
    \exists \pi , \pi(v_0)=u_0', \pi(\tA)=\tA', \pi(\tL(S-\{v_0\}, \tA))=\tL(S'-\{v'_0\},\tA')).
    $$
    Therefore, $\pi(S-\{v_0\})=S'-\{v'_0\}$. Note that $v_0\notin S-\{v_0\}$, so $u'_0
    =\pi(v_0) \notin S'-\{v'_0\}$, while $u'_0\in S'$, therefore $u'_0=v'_0$.
    
    Therefore, $\pi(S)=S'$, and $\pi(\tA)=\tA'$, so $(S, \tA)\simeq (S', \tA')$, which contradicts to that $(S, \tA)\not\simeq (S', \tA')$.

\subsection{Proof of Proposition~\ref{prop:NME GNN-L>PL}}

Figure~\ref{fig:PWL3} provides an example.

\subsection{Proof of Proposition~\ref{prop:WLGNN-L<>PL}}
Figure~\ref{fig:PWL} and Figure~\ref{fig:PWL3} provide example.

\subsection{Proof of Proposition~\ref{prop:pohasse}}

Due to the property 1 in Definition~\ref{def:Lposet}, $\tL(S,\tA)=\pi(\tL(S',\tA'))\Rightarrow S=\pi(S')$. Therefore, for all $v\in S$, $\pi^{-1}(v)\in S$. Moreover, $\forall v'\in S'$, $\exists v\in S, \pi^{-1}(v)=v'$.

Consider an edge $(u, v)$ in $\gH_{S}$. According to Definition~\ref{def:hassediag}, $u\neq v$,$u\le_{S} v$, and there exists no node $w\in S,w\notin {u, v}$ that $u\le_{S} w$ and $w\le_{S} v$. As $\pi(S')=S$, $\pi^{-1}(u)\neq \pi^{-1}(v)$,$\pi^{-1}(u)\le_{S'} \pi^{-1}(v)$, and there exists no node $\pi^{-1}(w)\in S',\pi^{-1}(w)\notin {\pi^{-1}(u), \pi^{-1}(v)}$ that $\pi^{-1}(u)\le_{S'} \pi^{-1}(w)$ and $\pi^{-1}(w)\le_{S'} \pi^{-1}(v)$. Therefore, when $S=\pi(S')$, for all edge $(u, v)$ in $\gH_{S}$, edge $(\pi^{-1}(u), \pi^{-1}(v))$ exists in $\gH_{S'}$.

Similarly, as $S'=\pi^{-1}(S)$, for all edge $(\pi^{-1}(u),\pi^{-1}(v))$ in $\gH_{S'}$, edge 
\begin{equation*}
((\pi^{-1})^{-1}(\pi^{-1}(u)), (\pi^{-1})^{-1}(\pi^{-1}(v)))=(u, v),
\end{equation*}
exists in $\gH_{S'}$. So $\gH_{S}=\pi(\gH_{S'})$. 
Equivalently, for all $v\in S'$, $\pi(v)$ is in $S$, and $(\{v\}, \gH_{S'})\simeq (\{\pi(v)\}, \gH_{S})$.

Assume that $u,v$ are not isomorphic in $S$, but $\tL(S, \tA)_{u,u,:}=\tL(S,\tA)_{v,v,:}$. Define permutation $\pi: V\to V$ as follows,
\begin{equation*}
    \pi(i)=\begin{cases}
        v& \text{if } i=u\\
        u& \text{if } i=v\\
        i&\text{otherwise}
    \end{cases}.
\end{equation*}
$\pi(\tL(S, \tA))=\tL(S, \tA)\Rightarrow \pi(S)=S\Rightarrow (v, \gH_{S})\simeq (u,\gH_{S})$. Equivalently, non-isomorphic nodes in the same hasse diagram should have different labels.

\subsection{Proof of Theorem~\ref{thm:hypergiso2giso}}
The main gap between hypergraph isomorphism and corresponding graph isomorphism is that hypergraph permutation is composed of two permutation transforms node and edge order independently, while corresponding graph isomorphism is only related to one node permutation, so we first define ways to combine and split permutations.

Sorting of corresponding graph: Let $I_V(IG_H)=\{i|(IG_H)_{i,i,d+1}=1\}$ denote nodes in $G(H)$ corresponding to nodes in $H$. Let $I_E(IG_H)=\{i|(IG_H)_{i,i,d+1}=0\}$ denote the nodes representing hypergraph edges. We define a permutation $\pi^{I_V, I_E}\in \Pi_{n+m}$, $\pi^{I_V, I_E}$, $\pi^{I_V,I_E}(I_V)=[n], \pi^{I_V,I_E}(I_E)=\{n+1,n+2,...,n+m\}$. 

Concatenation of permutation: Let $\pi_1\in\Pi_{n}, \pi_2\in \Pi_m$. Their concatenation $\pi_1|\!|\pi_2\in \Pi_{m+n}$
\begin{equation*}
    \pi_1|\!|\pi_2(i)=
    \begin{cases}
        \pi_1(i)& i\le n\\
        n+\pi_2(i-n)&\text{otherwise}
    \end{cases}
\end{equation*}

When $S_1, S_2$ have different sizes, or $H_1$, $H_2$ have different number of nodes or hyperedges, two poset are non-isomorphic. So we only discuss the case that the poset and hypergraph sizes are the same. Let $n, m$ denote the number of nodes and hyperedges in the hypergraph. Then the corresponding graph has $n+m$ nodes.

We first prove $\Rightarrow$: When $(S, H)\sim (S', H')$, according to Definition~\ref{def:hypergiso}, there exists $\pi_1\in \Pi_n, \pi_2\in \Pi_m, (\pi_1, \pi_2)(H)=H', \pi_1(S)=S'$. Then, $(\pi_1|\!|\pi_2)(IG_{H})=IG_{H'}$ and $(\pi_1|\!|\pi_2)(S)=S'$.

Then we prove $\Leftarrow$: When $(S, IG_H)\simeq (S', IG_{H'})$. We can first sort two incidence graph. Let $\pi=\pi^{I_V(IG_H), I_E(IG_H)}$ and $\pi'=\pi^{I_V(IG_{H'}), I_E(IG_{H'})}$. Then two posets and graphs are still isomorphic.
\begin{equation*}
    (\pi(S), \pi(IG_H))\simeq (\pi'(S'), \pi'(IG_{H'}))
\end{equation*}
Therefore, $\exists \pi_0\in \Pi_{n+m}$, $\pi(S)=\pi_0(\pi'(S')), \pi(IG_H)=\pi_0(\pi'(IG_{H'}))$. Let $\tA, \tA'\in \sR^{(n+m)\times (n+m)\times d+1}$ denote the adjacency tensor of $\pi(IG_H), \pi'(IG_{H'})$ respectively. Therefore,
\begin{equation*}
    \tA=\pi_0(\tA')\Rightarrow \tA_{\pi_0(i), \pi_0(i), d+1}=\tA'_{i,i, d+1}, \forall i\in \{1,2,...,m+n\}.
\end{equation*}
As the nodes in $\tA, \tA'$ are sorted, $\tA_{i,i,d+1}=1, \tA'_{i, i, d+1}=1$ if $i\le n$, and $\tA_{i,i,d+1}=0, \tA'_{i, i, d+1}=0$ if $i>n$. Therefore, $\pi_0$ maps $\{1,2,...,n\}$ to $\{1,2,...,n\}$ and $\{n+1,n+2,...,n+m\}$ to $\{n+1,n+2,...,n+m\}$. Therefore, we can decompose $\pi_0$ into two permutation $\pi_1, \pi_2$.
\begin{equation*}
    \pi_1(i)=\pi_0(i), i\in \{1,2,...,n\}
\end{equation*}
\begin{equation*}
    \pi_2(i)=\pi_0(i+n)-n, i\in \{1,2,...,m\}
\end{equation*}
Then, $S=\pi_1(S')$ and $H=(\pi_1,\pi_2)(H')$.

\subsection{Proof for Section~\ref{sec::labelingtrick_hognn}}

We first define some notations

\textbf{Isomorphism type of node tuple} $k,l$-WL and $k$-WL use the isomorphism type of tuple to initialize colors, which is defined as follows:

Given graphs $G^1=(V^1, \tA^1),G^2=(V^2,\tA^2)$ and $k$-tuples $S^1,S^2$ in $G^1, G^2$ respectively. $S^1, S^2$ have the same isomorphism type iff
\begin{enumerate}
    \item $\forall i_1,i_2\in [k]$,  $S^1_{i_1}=S^1_{i_2}\leftrightarrow\bold S^2_{i_1}=S^2_{i_2}$.
    \item $\forall i,j\in [k], \tA^1_{S^1_iS^1_j}=\tA^2_{S^2_iS^2_j}$.
\end{enumerate} 

\subsubsection{Expressivity comparison}

Given two function $f, g$, $f$ can be expressed by $g$ means that there exists a function $\phi$ that $\phi\circ g=f$, which is equivalent to given arbitrary input $H, G$, $f(H)=f(G)\Rightarrow g(H)=g(G)$. We use $f\to g$ to denote that $f$ can be expressed with $g$. If both $f\to g$ and $g\to f$, there exists a bijective mapping between the output of $f$ to the output of $g$, denoted as $f\leftrightarrow g$.

Here are some basic rule.
\begin{itemize}
\item $g\to h\Rightarrow f\circ g\to f\circ h$.
\item $g\to h, f\to s\Rightarrow f\circ g\to s\circ h$.
\item $f$ is bijective, $f\circ g\to g$
\end{itemize}

\subsubsection{Proof of Proposition~\ref{prop::kwl_lpool}}

The graph color of $k$-WL with $l$-pooling is 
\begin{equation*}
    c_k^{(l)}(G)\text{Hash}(\{\!\{\text{Hash}(\{\!\{{c_k(S\Vert S', G)|S'\in V(G)^{k-l}}\}\!\})|S\in V(G)^l\}\!\})
\end{equation*}

The graph color of $k$-WL with is 
\begin{equation*}
    c_k(G)=\text{Hash}(\{\!\{{c_k(S, G)|S\in V(G)^k}\}\!\}).
\end{equation*}

\begin{equation*}
    c_k^{(l)}(G) \to \{\!\{{c_k(S, G)|S\in V(G)^k}\}\!\}\to  c_k (G)
\end{equation*}

Moreover, as
\begin{align*}
    c_k(S\Vert S', G)&\to \{\!\{ c_k(S\Vert \phi_0(S', v), G)|v\in V(G)
    \}\!\}\\
    &\to \{\!\{ c_k(S\Vert \phi_2(\phi_1(S', v_1), v_2), G)|v_1, v_2\in V(G)
    \}\!\}\\
    &\to ... \to \{\!\{ c_k(S\Vert S')|S'\in V(G)^{k-l}
    \}\!\}
\end{align*}
Therefore,
\begin{align*}
    c_k(G)&\to \{\!\{{c_k(S\Vert S'', G)|S\in V(G)^k, S''\in V(G)^{k-l}}\}\!\} \\
    &\to \{\!\{\{\!\{c_k(S\Vert S', G)|S'\in V(G)^{k-l}\}\!\}S\in V(G)^k, S''\in V(G)^{k-l}\}\!\}\}\!\} 
    \\
    &\to  \{\!\{c_k^{(l)}(G)| S''\in V(G)^{k-l}\}\!\} 
    \to c_k^{(l)}(G)
\end{align*}

\section{Experimental settings}

\textbf{Computing infrastructure.} We leverage Pytorch Geometric and Pytorch for model development. All our models run on an Nvidia 3090 GPU on a Linux server.

\textbf{Hyperparameters} We use Adam optimizer and constant learning rate for all our models. Main hyperparameters for our models are listed in Table~\ref{tab:hyper}. More detailed configuration of each experiments is provided in our code. 

\begin{table}[t]
    \centering
    \begin{tabular}{lccccccc }
    \toprule
      & Data    & BaseGNN & \#layer & hiddim & bs & lr & \#hop \\
    \midrule
      Table~\ref{tab:unlink} & PB, Ecoli   & GIN & 3 & 32 & 32 & 1e-4 & 2 \\
      & Others   & GIN & 3 & 32 & 32 & 1e-4 & 1 \\
      Table~\ref{tab:unlink2} & collab & GIN & 3 & 256 & 32 & 1e-4 & 1\\
      & ddi & GIN & 3 & 96 & 32 & 1e-4 & 1\\
      &citation2 & GIN & 3 & 32 & 32 & 1e-4 & 1\\
      & ppa & GIN & 3 & 32 & 32 & 1e-4 & 1\\
      Table~\ref{tab:directed} & All & GIN & 3 & 32 & 48 & 3e-3&-1\\ 
      Table~\ref{tab:hyper} &  NDC-s, Email-Eu& max & 4 & 64 & 96 & 5e-3 & -1\\
      & Others & max & 4 & 64 & 96 & 4e-3 & -1\\
      Table~\ref{tab:subg} & All & GIN & 1 &64 &64 & 1e-3 & -1\\ 
      \bottomrule
      \end{tabular}
      \caption{Hyperparameters for our models. BaseGNN: GNN used to encoding graph and labels, max means using max aggregator. \#layer: the number of GNN layers. hiddim: hidden dimension. bs: batch size, lr: learning rate. \#hop: the number of hops for sampling subgraph, -1 means using whole graph. }\label{tab:hyper}
\end{table}

\textbf{Model Implementation.} 
For undirected link prediction tasks, our implementation is based on the code of SEAL~\citep{SEAL}, which segregates an ego subgraph from the whole graph for each link. For other tasks, our model runs on the whole graph. We use optuna to perform random search. Hyperparameters were selected to optimize scores on the validation sets. 

\section{More Details about the Datasets}\label{detaileddatasets}

\subsection{Undirected Link Prediction}

We use eight real-world datasets from SEAL~\citep{SEAL}: USAir is a network of US Air lines. NS is a collaboration network of researchers. PB is a network of US political blogs. Power is an electrical grid of western US. Router is a router-level Internet. Ecoli is a metabolic network in E.coli. Cele is a neural network of C.elegans. Yeast is a protein-protein interaction network in yeast.

We also use OGB datasets~\citep{hu2020open}: \texttt{ogbl-ppa}, \texttt{ogbl-collab},  \texttt{ogbl-ddi}, and \texttt{ogbl-citation2}. Among them, \texttt{ogbl-ppa} is a protein-protein association graph where the task is to predict biologically meaningful associations between proteins. \texttt{ogbl-collab} is an author collaboration graph, where the task is to predict future collaborations. \texttt{ogbl-ddi} is a drug-drug interaction network, where each edge represents an interaction between drugs which indicates the joint effect of taking the two drugs together is considerably different from their independent effects. \texttt{ogbl-citation2} is a paper citation network, where the task is to predict missing citations. We present the statistics of these datasets in Table~\ref{data:unlink}. More information about these datasets can be found in \citep{hu2020open}.

\begin{table*}[h]
\caption{Statistics and evaluation metrics of undirected link prediction datasets.}
\label{data:unlink}
\begin{center}
  \resizebox{0.85\textwidth}{!}{
  \begin{tabular}{lccccc}
    \toprule
    \textbf{Dataset}&\textbf{\#Nodes}&\textbf{\#Edges}&\textbf{Avg. node deg.}&\textbf{Split ratio}&\textbf{Metric} \\
    \midrule
    USAir&332 &2,126 &12.81 &0.85/0.05/0.10&auroc\\
NS&1,589 &2,742 &3.45 &0.85/0.05/0.15&auroc\\
PB&1,222 &16,714 &27.36 &0.85/0.05/0.15&auroc\\
Yeast&2,375 &11,693 &9.85 &0.85/0.05/0.15&auroc\\
C.ele&297 &2,148 &14.46 &0.85/0.05/0.15&auroc\\
Power&4,941 &6,594 &2.67 &0.85/0.05/0.15&auroc\\
Router&5,022 &6,258 &2.49 &0.85/0.05/0.15&auroc\\
E.coli&1,805 &14,660 &16.24 &0.85/0.05/0.15&auroc\\
ogbl-ppa&576,289 &30,326,273 &105.25 &fixed&Hits@100\\
ogbl-collab&235,868 &1,285,465 &10.90 &fixed&Hits@50\\
ogbl-ddi&4,267 &1,334,889 &625.68 &fixed&Hits@20\\
ogbl-citation2&2,927,963 &30,561,187 &20.88 &fixed&MRR\\

  \bottomrule
\end{tabular}
}
\end{center}
\end{table*}

\subsection{Directed Link Prediction}
We use the same settings and datasets as ~\citet{PyGsd}. The task is to predict whether a directed link exists in a graph. Texas, Wisconsin, and Cornell consider websites as nodes and links between websites as edges. Cora-ML and CiteSeer are citation networks. Telegram is an influence graph between Telegram channels. Their statistics are shown in Table~\ref{data:dirlink}.

\begin{table*}[h]
\caption{Statistics and evaluation metrics of directed link prediction datasets.}
\label{data:dirlink}
\begin{center}
  \begin{tabular}{lccccc}
\toprule\textbf{Dataset}&\textbf{\#Nodes}&\textbf{\#Edges}&\textbf{Avg. node deg.}&\textbf{Split ratio}&\textbf{Metric} \\
    \midrule
    wisconsin&251 &515 &4.10 &0.80/0.05/0.15&accuracy\\
    cornell&183 &298 &3.26 &0.80/0.05/0.15&accuracy\\
texas&183 &325 &3.55 &0.80/0.05/0.15&accuracy\\
cora\_ml&2,995 &8,416 &5.62 &0.80/0.05/0.15&accuracy\\
telegram&245 &8,912 &72.75 &0.80/0.05/0.15&accuracy\\
citeseer&3,312 &4,715 &2.85 &0.80/0.05/0.15&accuracy\\
  \bottomrule
\end{tabular}
\end{center}
\end{table*}

\subsection{Hyperedge Prediction Datasets}
We use the datasets and baselines in \citep{familyset}. NDC-c (NDC-classes) and NDC-s (NDC-substances) are both drug networks. NDC-c takes each class label as a node and the set of labels applied to a drug as a hyperedge. NDC-s takes substances as nodes and the set of substances contained in a drug as a hyperedge. Tags-m (tags-math-sx) and tags-a (tags-ask-ubuntu) are from online Stack Exchange forums, where nodes are tags and hyperedges are sets of tags for the same questions. Email-En (email-Enron) and email-Eu are two email networks where each node is a email address and email hyperedge is the set of all addresses on an email. Congress (congress-bills) takes Congress members as nodes, and each hyperedge corresponds to the set of members in a committe or cosponsoring a bill. Their statistics are shown in Table~\ref{data:hyper}.

\begin{table*}[h]
\caption{Statistics and evaluation metrics of directed link prediction datasets.}
\label{data:hyper}
\begin{center}
  \begin{tabular}{lcccc}
\toprule\textbf{Dataset}&\textbf{\#Nodes}&\textbf{\#Hyperdges}&\textbf{Split ratio}&\textbf{Metric} \\
    \midrule
    NDC-c&6,402 &1,048 &5-fold&f1-score\\
NDC-s&49,886 &6,265 &5-fold&f1-score\\
tags-m&497,129 &145,054 &5-fold&f1-score\\
tags-a&591,904 &169,260 &5-fold&f1-score\\
email-En&4,495 &1,458 &5-fold&f1-score\\
email-EU&85,109 &24,400 &5-fold&f1-score\\
congress&732,300 &83,106 &5-fold&f1-score\\
  \bottomrule
\end{tabular}
\end{center}
\end{table*}

\subsection{Subgraph Prediction Tasks}

Following~\citep{GLASS}, we use three synthetic datasets: density, cut ratio, coreness. The task is to predict the corresponding properties of randomly selected subgraphs in random graphs. Their statistics are shown in Table~\ref{data:subg}.

\begin{table*}[h]
\caption{Statistics and evaluation metrics of directed link prediction datasets.}
\label{data:subg}
\begin{center}
  \begin{tabular}{lccccc}
\toprule\textbf{Dataset}&\textbf{\#Nodes}&\textbf{\#Edges}&\textbf{\#Subgraphs}&\textbf{Split ratio}&\textbf{Metric} \\
    \midrule
density&5,000 &29,521 &250 &0.50/0.25/0.25&f1-score\\
cut-ratio&5,000 &83,969 &250 &0.50/0.25/0.25&f1-score\\
coreness&5,000 &118,785 &221 &0.50/0.25/0.25&f1-score\\
  \bottomrule
\end{tabular}
\end{center}
\end{table*}

\section{Time and GPU Memory in Link Prediction Task}\label{app::time}

To illustrate the scalability of GNNs, we measure the time and GPU memory consumption on ppa dataset. The process we measure including all precomputation and prediction a number of edges in one batch. The results are shown in Figure~\ref{fig:time}. For GNNs with labeling tricks (ZO-S, ZO-OS, ZO, SEAL) and GNN without labeling trick for ablation (No), they all have nearly the same time and memory consumption, as the only difference is integer label computation and one embedding layer for encoding labels. They all sample subgraphs from the whole graph and do not need to precompute embeddings for all nodes in the graph, so when the number of edges is small, the time and memory approaches $0$. In contrast, GAE precomputes all nodes' embeddings, leading to large time and GPU consumption even for few edges. It has lower time and GPU consumption after the precomputation. For large real-world graphs, putting whole graphs into memory is impossible and thus sampling subgraphs is a must (even for GNNs without labeling trick), so labeling trick will not introduce a high extra cost.

\begin{figure}
    \centering
    \includegraphics[width=1.0\linewidth]{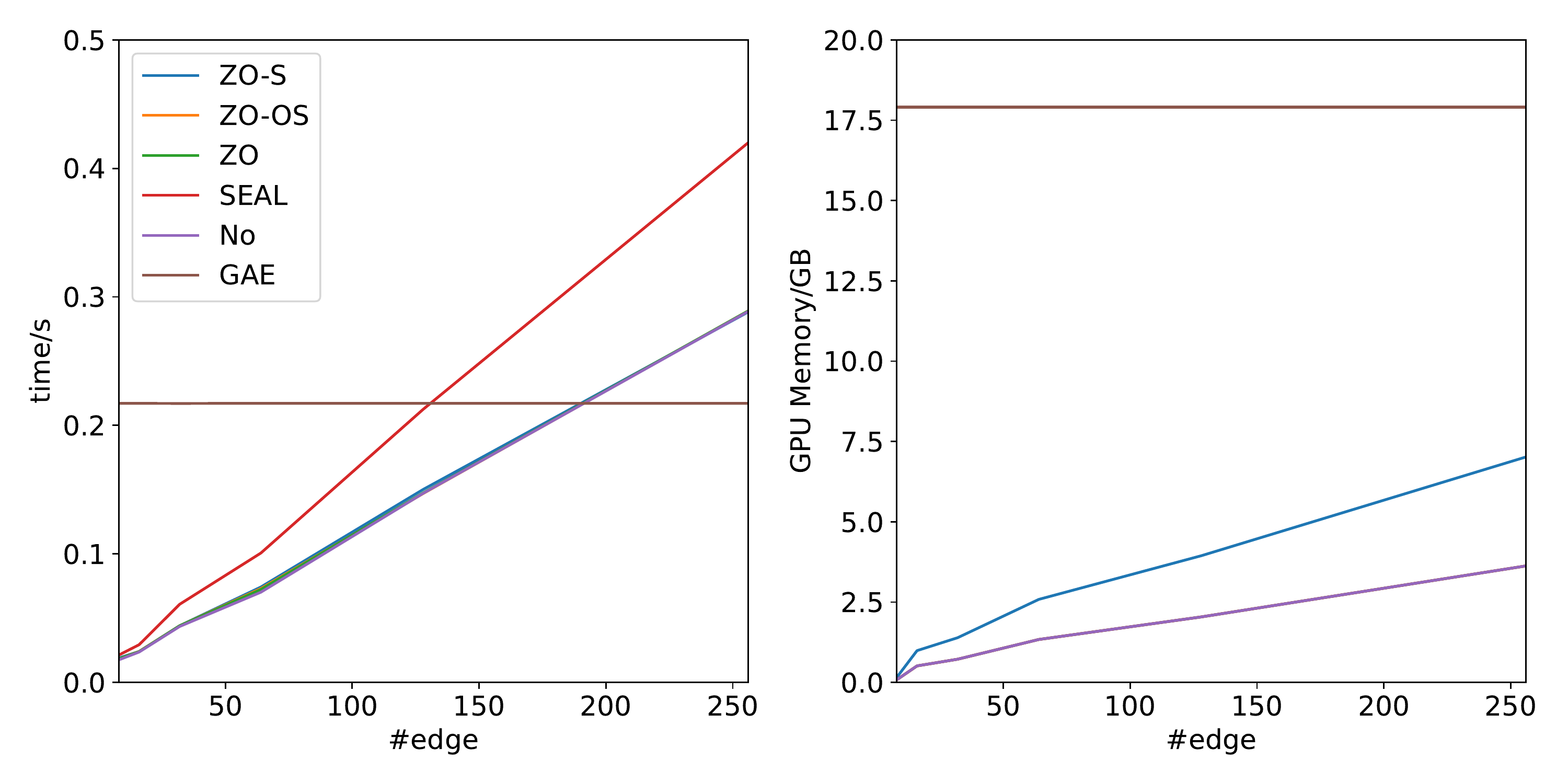}
    \caption{Time and GPU memory consumption for link prediction task on ppa dataset.}
    \label{fig:time}
\end{figure}

\bibliography{subsample}

\begin{thebibliography}{73}
\providecommand{\natexlab}[1]{#1}
\providecommand{\url}[1]{\texttt{#1}}
\expandafter\ifx\csname urlstyle\endcsname\relax
  \providecommand{\doi}[1]{doi: #1}\else
  \providecommand{\doi}{doi: \begingroup \urlstyle{rm}\Url}\fi

\bibitem[Adamic and Adar(2003)]{adamic2003friends}
Lada~A Adamic and Eytan Adar.
\newblock Friends and neighbors on the web.
\newblock \emph{Social networks}, 25\penalty0 (3):\penalty0 211--230, 2003.

\bibitem[Adhikari et~al.(2018)Adhikari, Zhang, Ramakrishnan, and Prakash]{sub2vec}
Bijaya Adhikari, Yao Zhang, Naren Ramakrishnan, and B.~Aditya Prakash.
\newblock Sub2vec: Feature learning for subgraphs.
\newblock In \emph{KDD}, 2018.

\bibitem[Agarwal et~al.(2006)Agarwal, Branson, and Belongie]{HyperGIntro}
Sameer Agarwal, Kristin Branson, and Serge~J. Belongie.
\newblock Higher order learning with graphs.
\newblock In \emph{ICML}, 2006.

\bibitem[Alsentzer et~al.(2020)Alsentzer, Finlayson, Li, and Zitnik]{subgnn}
Emily Alsentzer, Samuel Finlayson, Michelle Li, and Marinka Zitnik.
\newblock Subgraph neural networks.
\newblock \emph{NeurIPS}, 2020.

\bibitem[Amer-Yahia et~al.(2009)Amer-Yahia, Roy, Chawlat, Das, and Yu]{amer2009group}
Sihem Amer-Yahia, Senjuti~Basu Roy, Ashish Chawlat, Gautam Das, and Cong Yu.
\newblock Group recommendation: Semantics and efficiency.
\newblock \emph{VLDB}, 2\penalty0 (1):\penalty0 754--765, 2009.

\bibitem[Azizian and Lelarge(2021)]{FWLGNN}
Wa{\"{\i}}ss Azizian and Marc Lelarge.
\newblock Expressive power of invariant and equivariant graph neural networks.
\newblock In \emph{ICLR}, 2021.

\bibitem[Babai and Kucera(1979)]{babai1979canonical}
L{\'a}szl{\'o} Babai and Ludik Kucera.
\newblock Canonical labelling of graphs in linear average time.
\newblock In \emph{sfcs}, pages 39--46. IEEE, 1979.

\bibitem[Barab{\'a}si and Albert(1999)]{CommonNeighbor}
Albert-L{\'a}szl{\'o} Barab{\'a}si and R{\'e}ka Albert.
\newblock Emergence of scaling in random networks.
\newblock \emph{science}, 286\penalty0 (5439):\penalty0 509--512, 1999.

\bibitem[Bennett et~al.(2007)Bennett, Lanning, et~al.]{bennett2007netflix}
James Bennett, Stan Lanning, et~al.
\newblock The netflix prize.
\newblock In \emph{Proceedings of KDD cup and workshop}, page~35. New York, 2007.

\bibitem[Bevilacqua et~al.(2022)Bevilacqua, Frasca, Lim, Srinivasan, Cai, Balamurugan, Bronstein, and Maron]{ESAN}
Beatrice Bevilacqua, Fabrizio Frasca, Derek Lim, Balasubramaniam Srinivasan, Chen Cai, Gopinath Balamurugan, Michael~M. Bronstein, and Haggai Maron.
\newblock Equivariant subgraph aggregation networks.
\newblock In \emph{ICLR}, 2022.

\bibitem[Bretto(2013)]{IncidenceGraphIntro}
Alain Bretto.
\newblock \emph{Hypergraph Theory}.
\newblock 2013.

\bibitem[Bruna et~al.(2014)Bruna, Zaremba, Szlam, and LeCun]{bruna2013spectral}
Joan Bruna, Wojciech Zaremba, Arthur Szlam, and Yann LeCun.
\newblock Spectral networks and locally connected networks on graphs.
\newblock \emph{ICLR}, 2014.

\bibitem[Cai et~al.(1992)Cai, F{\"u}rer, and Immerman]{cai1992optimal}
Jin-Yi Cai, Martin F{\"u}rer, and Neil Immerman.
\newblock An optimal lower bound on the number of variables for graph identification.
\newblock \emph{Combinatorica}, 12\penalty0 (4):\penalty0 389--410, 1992.

\bibitem[Chen et~al.(2019)Chen, Villar, Chen, and Bruna]{chen2019equivalence}
Zhengdao Chen, Soledad Villar, Lei Chen, and Joan Bruna.
\newblock On the equivalence between graph isomorphism testing and function approximation with gnns.
\newblock In \emph{NeurIPS}, 2019.

\bibitem[Chen et~al.(2020)Chen, Chen, Villar, and Bruna]{LocalRelPool}
Zhengdao Chen, Lei Chen, Soledad Villar, and Joan Bruna.
\newblock Can graph neural networks count substructures?
\newblock In \emph{NeurIPS}, 2020.

\bibitem[Dai et~al.(2016)Dai, Dai, and Song]{dai2016discriminative}
Hanjun Dai, Bo~Dai, and Le~Song.
\newblock Discriminative embeddings of latent variable models for structured data.
\newblock In \emph{ICML}, 2016.

\bibitem[Davey and Priestley(2002)]{Posetdef}
Brian~A. Davey and Hilary~A. Priestley.
\newblock \emph{Introduction to Lattices and Order, Second Edition}.
\newblock 2002.
\newblock ISBN 978-0-521-78451-1.

\bibitem[Defferrard et~al.(2016)Defferrard, Bresson, and Vandergheynst]{defferrard2016convolutional}
Micha{\"e}l Defferrard, Xavier Bresson, and Pierre Vandergheynst.
\newblock Convolutional neural networks on graphs with fast localized spectral filtering.
\newblock In \emph{NeurIPS}, 2016.

\bibitem[Duvenaud et~al.(2015)Duvenaud, Maclaurin, Iparraguirre, Bombarell, Hirzel, Aspuru-Guzik, and Adams]{duvenaud2015convolutional}
David~K Duvenaud, Dougal Maclaurin, Jorge Iparraguirre, Rafael Bombarell, Timothy Hirzel, Al{\'a}n Aspuru-Guzik, and Ryan~P Adams.
\newblock Convolutional networks on graphs for learning molecular fingerprints.
\newblock In \emph{NeurIPS}, 2015.

\bibitem[Feng et~al.(2022)Feng, Chen, Li, Sarkar, and Zhang]{KPGNN}
Jiarui Feng, Yixin Chen, Fuhai Li, Anindya Sarkar, and Muhan Zhang.
\newblock How powerful are k-hop message passing graph neural networks.
\newblock \emph{NeurIPS}, 2022.

\bibitem[Frasca et~al.(2022)Frasca, Bevilacqua, Bronstein, and Maron]{SubgraphGNN}
Fabrizio Frasca, Beatrice Bevilacqua, Michael~M. Bronstein, and Haggai Maron.
\newblock Understanding and extending subgraph gnns by rethinking their symmetries.
\newblock In \emph{NeurIPS}, 2022.

\bibitem[Geerts(2020)]{IGNexpressivity}
Floris Geerts.
\newblock The expressive power of kth-order invariant graph networks.
\newblock \emph{CoRR}, abs/2007.12035, 2020.

\bibitem[Gilmer et~al.(2017)Gilmer, Schoenholz, Riley, Vinyals, and Dahl]{gilmer2017neural}
Justin Gilmer, Samuel~S Schoenholz, Patrick~F Riley, Oriol Vinyals, and George~E Dahl.
\newblock Neural message passing for quantum chemistry.
\newblock In \emph{ICML}, 2017.

\bibitem[Grover and Leskovec(2016)]{grover2016node2vec}
Aditya Grover and Jure Leskovec.
\newblock node2vec: Scalable feature learning for networks.
\newblock In \emph{SIGKDD}, 2016.

\bibitem[Hamilton et~al.(2017)Hamilton, Ying, and Leskovec]{hamilton2017inductive}
Will Hamilton, Zhitao Ying, and Jure Leskovec.
\newblock Inductive representation learning on large graphs.
\newblock In \emph{NeurIPS}, 2017.

\bibitem[He et~al.(2022)He, Zhang, Huang, Cucuringu, and Reinert]{PyGsd}
Yixuan He, Xitong Zhang, Junjie Huang, Mihai Cucuringu, and Gesine Reinert.
\newblock Pytorch geometric signed directed: A survey and software on graph neural networks for signed and directed graphs.
\newblock \emph{arXiv preprint arXiv:2202.10793}, 2022.

\bibitem[Hu et~al.(2020)Hu, Fey, Zitnik, Dong, Ren, Liu, Catasta, and Leskovec]{hu2020open}
Weihua Hu, Matthias Fey, Marinka Zitnik, Yuxiao Dong, Hongyu Ren, Bowen Liu, Michele Catasta, and Jure Leskovec.
\newblock Open graph benchmark: Datasets for machine learning on graphs.
\newblock \emph{NeurIPS}, 2020.

\bibitem[Huang et~al.(2023)Huang, Romero, Ceylan, and Barcel{\'{o}}]{RWL}
Xingyue Huang, Miguel Romero, {\.I}smail~{\.I}lkan Ceylan, and Pablo Barcel{\'{o}}.
\newblock A theory of link prediction via relational weisfeiler-leman on knowledge graphs.
\newblock In \emph{NeurIPS}, 2023.

\bibitem[Kipf and Welling(2016)]{kipf2016variational}
Thomas~N Kipf and Max Welling.
\newblock Variational graph auto-encoders.
\newblock \emph{arXiv preprint arXiv:1611.07308}, 2016.

\bibitem[Kipf and Welling(2017)]{kipf2016semi}
Thomas~N Kipf and Max Welling.
\newblock Semi-supervised classification with graph convolutional networks.
\newblock \emph{ICLR}, 2017.

\bibitem[Kreuzer et~al.(2021)Kreuzer, Beaini, Hamilton, L{\'{e}}tourneau, and Tossou]{SpectralGT}
Devin Kreuzer, Dominique Beaini, William~L. Hamilton, Vincent L{\'{e}}tourneau, and Prudencio Tossou.
\newblock Rethinking graph transformers with spectral attention.
\newblock In \emph{NeurIPS}, 2021.

\bibitem[Li et~al.(2020)Li, Wang, Wang, and Leskovec]{li2020distance}
Pan Li, Yanbang Wang, Hongwei Wang, and Jure Leskovec.
\newblock Distance encoding: Design provably more powerful neural networks for graph representation learning.
\newblock \emph{NeurIPS}, 2020.

\bibitem[Li et~al.(2016)Li, Tarlow, Brockschmidt, and Zemel]{li2015gated}
Yujia Li, Daniel Tarlow, Marc Brockschmidt, and Richard Zemel.
\newblock Gated graph sequence neural networks.
\newblock \emph{ICLR}, 2016.

\bibitem[Liben-Nowell and Kleinberg(2007)]{liben2007link}
David Liben-Nowell and Jon Kleinberg.
\newblock The link-prediction problem for social networks.
\newblock \emph{Journal of the American society for information science and technology}, 58\penalty0 (7):\penalty0 1019--1031, 2007.

\bibitem[Lim et~al.(2023)Lim, Robinson, Zhao, Smidt, Sra, Maron, and Jegelka]{Sign_basis_invariant}
Derek Lim, Joshua~David Robinson, Lingxiao Zhao, Tess~E. Smidt, Suvrit Sra, Haggai Maron, and Stefanie Jegelka.
\newblock Sign and basis invariant networks for spectral graph representation learning.
\newblock 2023.

\bibitem[Liu et~al.(2022)Liu, Ma, and Li]{liu2022neural}
Yunyu Liu, Jianzhu Ma, and Pan Li.
\newblock Neural predicting higher-order patterns in temporal networks.
\newblock In \emph{WWW}, 2022.

\bibitem[Maron et~al.(2019{\natexlab{a}})Maron, Ben{-}Hamu, Serviansky, and Lipman]{PPGN}
Haggai Maron, Heli Ben{-}Hamu, Hadar Serviansky, and Yaron Lipman.
\newblock Provably powerful graph networks.
\newblock In \emph{NeurIPS}, 2019{\natexlab{a}}.

\bibitem[Maron et~al.(2019{\natexlab{b}})Maron, Ben{-}Hamu, Shamir, and Lipman]{IGN}
Haggai Maron, Heli Ben{-}Hamu, Nadav Shamir, and Yaron Lipman.
\newblock Invariant and equivariant graph networks.
\newblock In \emph{ICLR}, 2019{\natexlab{b}}.

\bibitem[McKay and Piperno(2014)]{mckay2014practical}
Brendan~D McKay and Adolfo Piperno.
\newblock Practical graph isomorphism, ii.
\newblock \emph{Journal of Symbolic Computation}, 60:\penalty0 94--112, 2014.

\bibitem[Morris et~al.(2019)Morris, Ritzert, Fey, Hamilton, Lenssen, Rattan, and Grohe]{kWL}
Christopher Morris, Martin Ritzert, Matthias Fey, William~L. Hamilton, Jan~Eric Lenssen, Gaurav Rattan, and Martin Grohe.
\newblock Weisfeiler and leman go neural: Higher-order graph neural networks.
\newblock In \emph{AAAI}, 2019.

\bibitem[Nickel et~al.(2016)Nickel, Murphy, Tresp, and Gabrilovich]{nickel2015review}
Maximilian Nickel, Kevin Murphy, Volker Tresp, and Evgeniy Gabrilovich.
\newblock A review of relational machine learning for knowledge graphs.
\newblock \emph{Proc. {IEEE}}, 104\penalty0 (1):\penalty0 11--33, 2016.

\bibitem[Puny et~al.(2022)Puny, Atzmon, Smith, Misra, Grover, Ben{-}Hamu, and Lipman]{FrameAveraging}
Omri Puny, Matan Atzmon, Edward~J. Smith, Ishan Misra, Aditya Grover, Heli Ben{-}Hamu, and Yaron Lipman.
\newblock Frame averaging for invariant and equivariant network design.
\newblock In \emph{ICLR}, 2022.

\bibitem[Qi et~al.(2006)Qi, Bar-Joseph, and Klein-Seetharaman]{qi2006evaluation}
Yanjun Qi, Ziv Bar-Joseph, and Judith Klein-Seetharaman.
\newblock Evaluation of different biological data and computational classification methods for use in protein interaction prediction.
\newblock \emph{Proteins: Structure, Function, and Bioinformatics}, 63\penalty0 (3):\penalty0 490--500, 2006.

\bibitem[Qian et~al.(2022)Qian, Rattan, Geerts, Niepert, and Morris]{OSAN}
Chendi Qian, Gaurav Rattan, Floris Geerts, Mathias Niepert, and Christopher Morris.
\newblock Ordered subgraph aggregation networks.
\newblock In \emph{NeurIPS}, 2022.

\bibitem[Scarselli et~al.(2009)Scarselli, Gori, Tsoi, Hagenbuchner, and Monfardini]{scarselli2009graph}
Franco Scarselli, Marco Gori, Ah~Chung Tsoi, Markus Hagenbuchner, and Gabriele Monfardini.
\newblock The graph neural network model.
\newblock \emph{IEEE Transactions on Neural Networks}, 20\penalty0 (1):\penalty0 61--80, 2009.

\bibitem[Srinivasan and Ribeiro(2020)]{Srinivasan2020On}
Balasubramaniam Srinivasan and Bruno Ribeiro.
\newblock On the equivalence between positional node embeddings and structural graph representations.
\newblock In \emph{ICLR}, 2020.

\bibitem[Srinivasan et~al.(2021)Srinivasan, Zheng, and Karypis]{familyset}
Balasubramaniam Srinivasan, Da~Zheng, and George Karypis.
\newblock Learning over families of sets-hypergraph representation learning for higher order tasks.
\newblock In \emph{SDM}, 2021.

\bibitem[Stanfield et~al.(2017)Stanfield, Co{\c{s}}kun, and Koyut{\"u}rk]{stanfield2017drug}
Zachary Stanfield, Mustafa Co{\c{s}}kun, and Mehmet Koyut{\"u}rk.
\newblock Drug response prediction as a link prediction problem.
\newblock \emph{Scientific reports}, 7\penalty0 (1):\penalty0 1--13, 2017.

\bibitem[Teru et~al.(2020)Teru, Denis, and Hamilton]{teru2020inductive}
Komal Teru, Etienne Denis, and Will Hamilton.
\newblock Inductive relation prediction by subgraph reasoning.
\newblock In \emph{ICML}, 2020.

\bibitem[Tong et~al.(2020{\natexlab{a}})Tong, Liang, Sun, Li, Rosenblum, and Lim]{DIGCN}
Zekun Tong, Yuxuan Liang, Changsheng Sun, Xinke Li, David~S. Rosenblum, and Andrew Lim.
\newblock Digraph inception convolutional networks.
\newblock In \emph{NeurIPS}, 2020{\natexlab{a}}.

\bibitem[Tong et~al.(2020{\natexlab{b}})Tong, Liang, Sun, Rosenblum, and Lim]{DGCN}
Zekun Tong, Yuxuan Liang, Changsheng Sun, David~S. Rosenblum, and Andrew Lim.
\newblock Directed graph convolutional network.
\newblock \emph{CoRR}, abs/2004.13970, 2020{\natexlab{b}}.

\bibitem[Veli{\v{c}}kovi{\'c} et~al.(2018)Veli{\v{c}}kovi{\'c}, Cucurull, Casanova, Romero, Lio, and Bengio]{velivckovic2017graph}
Petar Veli{\v{c}}kovi{\'c}, Guillem Cucurull, Arantxa Casanova, Adriana Romero, Pietro Lio, and Yoshua Bengio.
\newblock Graph attention networks.
\newblock \emph{ICLR}, 2018.

\bibitem[Wan et~al.(2021)Wan, Zhang, Hao, Cao, Li, and Zhang]{wan2021principled}
Changlin Wan, Muhan Zhang, Wei Hao, Sha Cao, Pan Li, and Chi Zhang.
\newblock Principled hyperedge prediction with structural spectral features and neural networks.
\newblock \emph{arXiv preprint arXiv:2106.04292}, 2021.

\bibitem[Wang et~al.(2020)Wang, Flynn, and Altman]{geneset}
Sheng Wang, Emily~R Flynn, and Russ~B Altman.
\newblock Gaussian embedding for large-scale gene set analysis.
\newblock \emph{Nature machine intelligence}, 2\penalty0 (7):\penalty0 387--395, 2020.

\bibitem[Wang and Zhang(2022)]{GLASS}
Xiyuan Wang and Muhan Zhang.
\newblock {GLASS:} {GNN} with labeling tricks for subgraph representation learning.
\newblock In \emph{ICLR}, 2022.

\bibitem[Weisfeiler and Lehman(1968)]{weisfeiler1968reduction}
Boris Weisfeiler and AA~Lehman.
\newblock A reduction of a graph to a canonical form and an algebra arising during this reduction.
\newblock \emph{Nauchno-Technicheskaya Informatsia}, 2\penalty0 (9):\penalty0 12--16, 1968.

\bibitem[Xu et~al.(2019)Xu, Hu, Leskovec, and Jegelka]{xu2018powerful}
Keyulu Xu, Weihua Hu, Jure Leskovec, and Stefanie Jegelka.
\newblock How powerful are graph neural networks?
\newblock \emph{ICLR}, 2019.

\bibitem[Ying et~al.(2018)Ying, You, Morris, Ren, Hamilton, and Leskovec]{ying2018hierarchical}
Zhitao Ying, Jiaxuan You, Christopher Morris, Xiang Ren, Will Hamilton, and Jure Leskovec.
\newblock Hierarchical graph representation learning with differentiable pooling.
\newblock In \emph{NeurIPS}, 2018.

\bibitem[You et~al.(2019)You, Ying, and Leskovec]{you2019position}
Jiaxuan You, Rex Ying, and Jure Leskovec.
\newblock Position-aware graph neural networks.
\newblock \emph{ICML}, 2019.

\bibitem[You et~al.(2021)You, Selman, Ying, and Leskovec]{you2021identity}
Jiaxuan You, Jonathan Michael~Gomes Selman, Rex Ying, and Jure Leskovec.
\newblock Identity-aware graph neural networks.
\newblock 2021.

\bibitem[Zhang et~al.(2023)Zhang, Feng, Du, He, and Wang]{SSWL}
Bohang Zhang, Guhao Feng, Yiheng Du, Di~He, and Liwei Wang.
\newblock A complete expressiveness hierarchy for subgraph gnns via subgraph weisfeiler-lehman tests.
\newblock In \emph{ICML}, 2023.

\bibitem[Zhang and Chen(2018)]{SEAL}
Muhan Zhang and Yixin Chen.
\newblock Link prediction based on graph neural networks.
\newblock In \emph{NeurIPS}, 2018.

\bibitem[Zhang and Chen(2020)]{Zhang2020Inductive}
Muhan Zhang and Yixin Chen.
\newblock Inductive matrix completion based on graph neural networks.
\newblock In \emph{ICLR}, 2020.

\bibitem[Zhang and Li(2021)]{NGNN}
Muhan Zhang and Pan Li.
\newblock Nested graph neural networks.
\newblock In \emph{NeurIPS}, 2021.

\bibitem[Zhang et~al.(2018{\natexlab{a}})Zhang, Cui, Jiang, and Chen]{zhang2018beyond}
Muhan Zhang, Zhicheng Cui, Shali Jiang, and Yixin Chen.
\newblock Beyond link prediction: Predicting hyperlinks in adjacency space.
\newblock In \emph{AAAI}, 2018{\natexlab{a}}.

\bibitem[Zhang et~al.(2018{\natexlab{b}})Zhang, Cui, Neumann, and Chen]{zhang2018end}
Muhan Zhang, Zhicheng Cui, Marion Neumann, and Yixin Chen.
\newblock An end-to-end deep learning architecture for graph classification.
\newblock In \emph{AAAI}, 2018{\natexlab{b}}.

\bibitem[Zhang et~al.(2021{\natexlab{a}})Zhang, Li, Xia, Wang, and Jin]{zhang2021labeling}
Muhan Zhang, Pan Li, Yinglong Xia, Kai Wang, and Long Jin.
\newblock Labeling trick: A theory of using graph neural networks for multi-node representation learning.
\newblock \emph{NeurIPS}, 2021{\natexlab{a}}.

\bibitem[Zhang et~al.(2021{\natexlab{b}})Zhang, He, Brugnone, Perlmutter, and Hirn]{zhang2021magnet}
Xitong Zhang, Yixuan He, Nathan Brugnone, Michael Perlmutter, and Matthew Hirn.
\newblock Magnet: A neural network for directed graphs.
\newblock \emph{NeurIPS}, 2021{\natexlab{b}}.

\bibitem[Zhang et~al.(2021{\natexlab{c}})Zhang, He, Brugnone, Perlmutter, and Hirn]{MagNet}
Xitong Zhang, Yixuan He, Nathan Brugnone, Michael Perlmutter, and Matthew~J. Hirn.
\newblock Magnet: {A} neural network for directed graphs.
\newblock In \emph{NeurIPS}, 2021{\natexlab{c}}.

\bibitem[Zhao et~al.(2022)Zhao, Jin, Akoglu, and Shah]{GNNAK}
Lingxiao Zhao, Wei Jin, Leman Akoglu, and Neil Shah.
\newblock From stars to subgraphs: Uplifting any {GNN} with local structure awareness.
\newblock In \emph{ICLR}, 2022.

\bibitem[Zhou et~al.(2023)Zhou, Wang, and Zhang]{IDMPNN}
Cai Zhou, Xiyuan Wang, and Muhan Zhang.
\newblock From relational pooling to subgraph gnns: {A} universal framework for more expressive graph neural networks.
\newblock In \emph{ICML}, 2023.

\bibitem[Zhou et~al.(2009)Zhou, L{\"u}, and Zhang]{zhou2009predicting}
Tao Zhou, Linyuan L{\"u}, and Yi-Cheng Zhang.
\newblock Predicting missing links via local information.
\newblock \emph{The European Physical Journal B}, 71\penalty0 (4):\penalty0 623--630, 2009.

\bibitem[Zhu et~al.(2021)Zhu, Zhang, Xhonneux, and Tang]{NBFNet}
Zhaocheng Zhu, Zuobai Zhang, Louis{-}Pascal A.~C. Xhonneux, and Jian Tang.
\newblock Neural bellman-ford networks: {A} general graph neural network framework for link prediction.
\newblock In \emph{NeurIPS}, 2021.

\end{thebibliography}

\end{document}